\newcommand\blfootnote[1]{%
	\begingroup
	\renewcommand\thefootnote{}\footnote{#1}%
	\addtocounter{footnote}{-1}%
	\endgroup
}
\theoremstyle{plain}
\newtheorem{theorem}{Theorem}[section]
\newtheorem{proposition}[theorem]{Proposition}
\newtheorem{lemma}[theorem]{Lemma}
\newtheorem{corollary}[theorem]{Corollary}
\theoremstyle{definition}
\newtheorem{definition}[theorem]{Definition}
\theoremstyle{example}
\newtheorem{example}[theorem]{Example}
\theoremstyle{remark}
\newtheorem{remark}[theorem]{Remark}
\newcommand{\revc}[1]{{\color{blue}#1}}
\newcommand{\revc}[1]{#1}
\newcommand{\revn}[1]{{\color{magenta}#1}}
\newcommand{\revn}[1]{}
\newcommand{\rev}[1]{{\color{blue}#1}}
\newcommand{\rev}[1]{}
\newcommand{\com}[1]{{\color{red}#1}}
\newcommand{\com}[1]{}
\newcommand{\bs}{\boldsymbol}
\newcommand{\R}{\mathbb{R}}
\renewcommand{\Pr}{\mathbb{P}}
\newcommand{\E}{\mathbb{E}}
\DeclareMathOperator{\Int}{int}
\DeclareMathOperator{\cl}{cl}
\DeclareMathOperator{\bd}{bd}
\begin{document}

%

%

\twocolumn[

\aistatstitle{Vector Optimization with Stochastic Bandit Feedback}

\aistatsauthor{\c{C}a\u{g}{\i}n Ararat$^{*}$ \And Cem Tekin$^{*}$}

\runningauthor{\c{C}a\u{g}{\i}n Ararat, Cem Tekin}

\aistatsaddress{Bilkent University, Ankara, Turkey \And  Bilkent University, Ankara, Turkey} ]

\begin{abstract}
 We introduce vector optimization problems with stochastic bandit feedback, in which preferences among designs are encoded by a polyhedral ordering cone $C$. Our setup generalizes the best arm identification problem to vector-valued rewards by extending the concept of Pareto set beyond multi-objective optimization. 
 We characterize the sample complexity of ($\epsilon,\delta$)-PAC Pareto set identification by defining a new cone-dependent notion of complexity, called the {\em ordering complexity}. In particular, we provide gap-dependent and worst-case lower bounds on the sample complexity and show that, in the worst-case, the sample complexity scales with the square of ordering complexity. 
 Furthermore, we investigate the sample complexity of the naïve elimination algorithm and prove that it nearly matches the worst-case sample complexity. Finally, we run experiments to verify our theoretical results and illustrate how $C$ and sampling budget affect the Pareto set, the returned ($\epsilon,\delta$)-PAC Pareto set, and the success of identification.
\end{abstract}
\blfootnote{$^{*}$Equal contribution and alphabetical ordering.}

\section{INTRODUCTION}\label{sec:intro}

We consider a vector optimization problem with $D$ objectives and $K$ designs, where $D$ and $K$ are positive integers. The mean vector of design $i\in [K]$ is given by $\bs{\mu}_i \in \R^D$. The mean vectors are partially ordered based on an ordering cone $C \subseteq \R^D$. When these vectors are known, the set of all Pareto optimal designs (Pareto set) can be found by solving the following optimization problem \citep{jahn}
\begin{align}
	& \text{maximize } \bs{\mu}_i \text{ with respect to cone } C \text{ over} ~i \in [K] ~. \label{eqn:vectoropt}
\end{align}

In this paper, for the first time in the literature, we investigate vector optimization problems in a pure exploration setting with stochastic bandit feedback, i.e., when $\bs{\mu}_i$s are unknown and the query of each design $i$ only yields a noisy observation of $\bs{\mu}_i$. In particular, we seek an answer to the following fundamental question: {\em What is the sample complexity of Pareto set identification and how does it depend on $C$?}

In Section \ref{sec:motivation}, by comparing vector optimization with scalar and multi-objective optimization within the context of pure exploration, we argue why one should consider solving problems like \eqref{eqn:vectoropt}. In Section \ref{sec:contibutions}, we detail our main contributions.

\subsection{Motivation}\label{sec:motivation}

Pure exploration problems have attracted significant interest from both machine learning theorists and practitioners \citep{even2006action, bubeck2009pure, audibert2010best, kalyanakrishnan2012pac, karnin2013almost, kaufmann2013information, zhou2014optimal, laskey2015multi}. The simplest pure exploration problem takes the form of a $K$-armed bandit with arm means $\mu_1,\ldots,\mu_K\in \mathbb{R}$, in which the learner aims to identify an optimal arm (aka design) by sequential experimentation under noisy feedback. In this setup, a noisy scalar reward is revealed to the learner immediately after the selection of a design. As experimentation consumes resources, the learner seeks to minimize the number of evaluations by adapting its arm selection based on past reward observations. This problem has been formalized in many different ways, with notable examples including $(\epsilon,\delta)$-probably approximately correct (PAC)  \citep{even2006action} (aka Explore-1), fixed confidence and fixed budget \citep{karnin2013almost} best arm identification. Almost all research in this field focuses on devising sample-efficient algorithms that ``beat the noise" by adaptive sampling and elimination. Algorithmic contributions include a plethora of techniques such as Median Elimination \citep{even2006action}, LUCB \citep{kalyanakrishnan2012pac}, KL-LUCB \citep{kaufmann2013information}, Sequential Halving \citep{karnin2013almost}, Track-and-Stop \citep{garivier2016optimal}, ABA \citep{hassidim2020optimal}, $\epsilon$-TaS \citep{garivier2021nonasymptotic}, several Bayesian sampling strategies \citep{qin2017improving,shang2020fixed,russo2020simple}, and saddle-point methods \citep{degenne2019non,degenne2020gamification}. The difficulty of the problem depends on the suboptimalty gap of each arm $i$, straightforwardly found as $\max_j{\mu_j} - \mu_i$. Without noise, everything turns into a simple argmax operation over the design set. 

However, not many real-world problems naturally exhibit scalar rewards. Optimization of $D$-dimensional performance metrics ($D>1$) is necessary for tasks such as hardware design \citep{zuluaga2016varepsilon}, and clinical trials for drug development and dose identification \citep{lizotte2016multi}. Nevertheless, vector-valued objectives $\bs{\mu}_i$ can be scalarized by weights $\bs{w}$ that encode the importance of each objective, turning the learning problem into a scalar one.  The choice of $\bs{w}$ is often left to the practitioner, and the right choice might be difficult to come up with.

To tackle this issue, another strand of literature focuses on identifying the set $P^\ast$ of all Pareto optimal designs, whose mean vectors $\bs{\mu}_i$ are not dominated by others' mean vectors. In particular, when $D=2$, $\bs{\mu}_i=(\mu_i^1,\mu_i^2)^{\mathsf{T}}$ is dominated by $\bs{\mu}_j=(\mu_j^1,\mu_j^2)^{\mathsf{T}}$ ($\bs{\mu}_i \preceq \bs{\mu}_j$) if and only if (iff) $\forall w \in [0,1] :  w \mu^1_i + (1-w) \mu^2_i \leq  w \mu^1_j + (1-w) \mu^2_j$, which is equivalent to the usual componentwise order on $\R^2$. This multi-objective optimization problem has been extensively studied in the pure exploration setting with stochastic bandit feedback. \citet{auer2016pareto} study the sample complexity of Pareto set identification in stochastic $K$-armed bandit problems. \citet{hernandez2016predictive}, \citet{shah2016pareto}, \citet{zuluaga2016varepsilon} study Pareto set identification in problems with large design sets. They use Gaussian processes to capture correlations between designs such that a bulk of designs can be explored by a single sample. In another related work,  \citet{katz2018feasible} propose the feasible arm identification problem, in which the goal is to identify arms whose mean rewards belong to a given polyhedron through noisy evaluations.

Apart from pure exploration, multi-objective learning has also been extensively investigated in the bandit regret minimization setting \citep{drugan2013designing,turgay2018multi}. This line of research defines gaps that represent the suboptimality of arms, and propose algorithms that minimize the cumulative sum of gaps of the selected arms while ensuring some sort of balance between selections of Pareto optimal arms. Another related line of research is multi-objective reinforcement learning \citep{van2014multi, hayes2022practical}.

\begin{figure}[h!]
	\centering
	\includegraphics[width=0.7\linewidth]{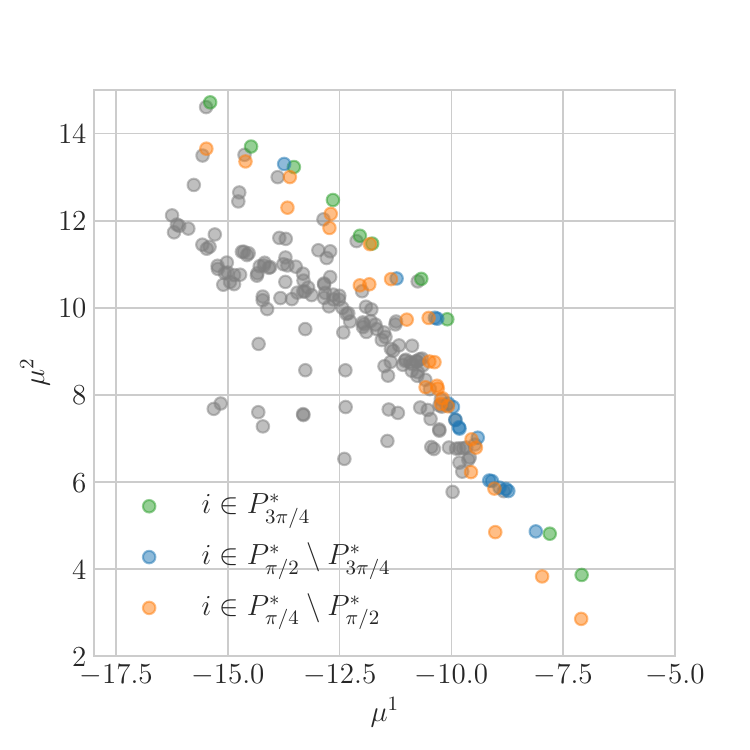}
	\caption{Pareto sets $P^*_{\pi/4}$, $P^*_{\pi/2}$, $P^*_{3\pi/4}$ of SNW dataset used in experiments (Section \ref{sec:experiment}) for the cones $C_{\pi/4}$, $C_{\pi/2}$, $C_{3\pi/4}$  (Example \ref{ex:2Dsmall}). By the structure of the cones, $P^*_{3\pi/4} \subseteq P^*_{\pi/2} \subseteq P^*_{\pi/4}$.}\label{fig:paretosets}
\end{figure}

\begin{figure}
	\centering
	\includegraphics[width=0.6\linewidth]{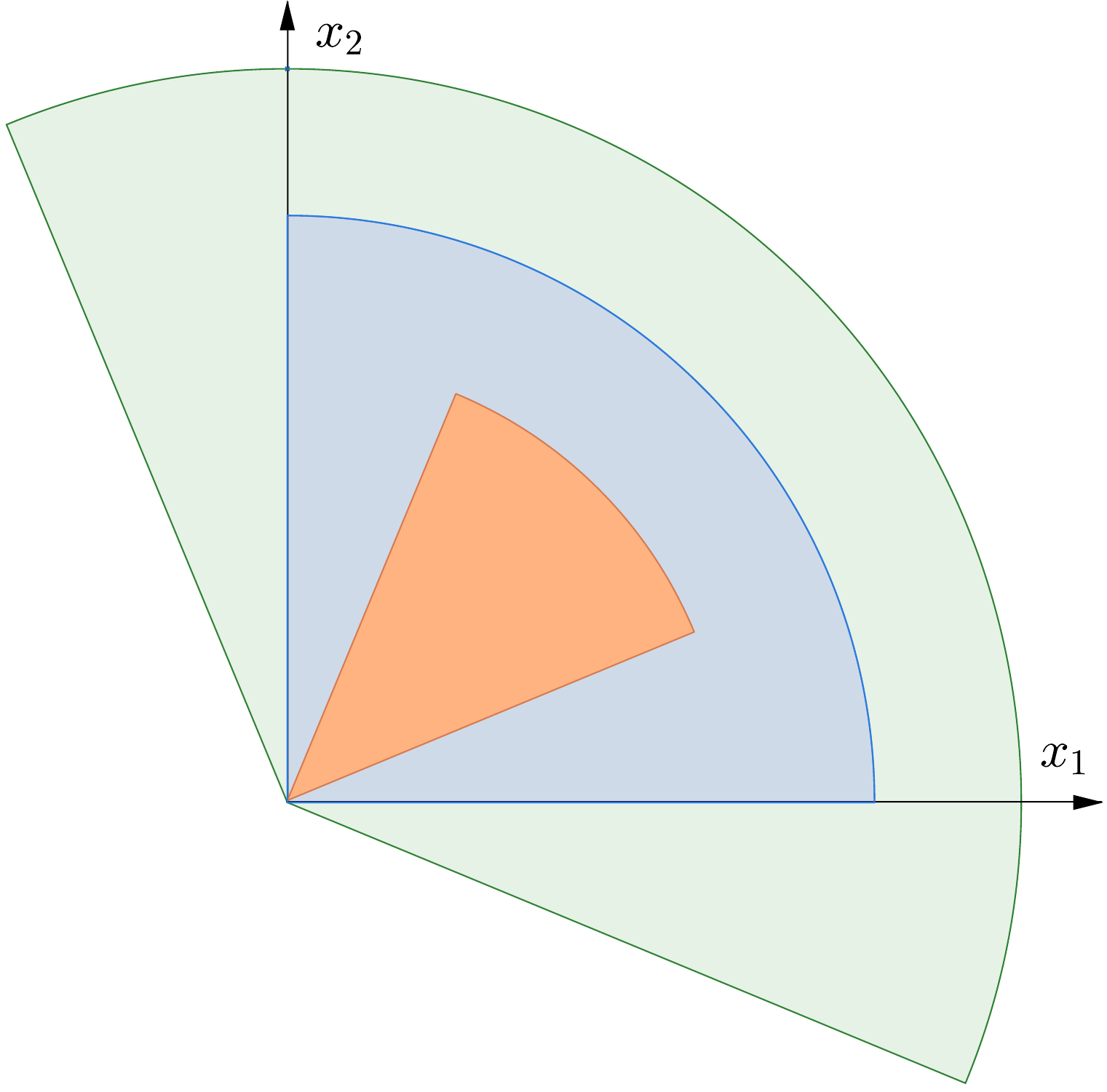}
	\caption{Three cones in $\mathbb{R}^2$ (intersected with balls for illustration purposes). Orange, blue, green cones correspond to $C_{\pi/4}$, $C_{\pi/2}$, $C_{3\pi/4}$ defined in Example \ref{ex:2Dsmall}, and used in experiments in Section \ref{sec:experiment}. $C'$ corresponds to a wide cone like the green one, i.e., $C' \supseteq \mathbb{R}^2_{+}$.}\label{fig:cones}
\end{figure}

While the multi-objective optimization viewpoint saves the practitioner from fixing a weight vector for the objectives, it ends up returning a large set of designs (see, e.g., $P^*_{\pi/2}$ in Figure \ref{fig:paretosets}), which might be even more frustrating. Moreover, the practitioners may want to narrow down the set of alternatives by using their domain knowledge about the relative importance of each objective. For instance, if the practitioner wants to give at least $100\alpha\%$ relative importance to each objective for $\alpha \in (0,0.5)$, then this can be achieved by defining a new partial order $\preceq_{C'}$ with $\bs{\mu}_i \preceq_{C'} \bs{\mu}_j$ iff 
$\forall w \in [\alpha,1-\alpha] :  w \mu^1_i + (1-w) \mu^2_i \leq  w \mu^1_j + (1-w) \mu^2_j$. This partial order is equivalent to a partial order induced by the cone\footnote{See Figure \ref{fig:cones} for examples of ordering cones in $\mathbb{R}^2$, and Definition \ref{defn:cone} for the precise definition of this order.}
\begin{align}
	C' = \{ \bs{x} \in \mathbb{R}^2\mid \alpha x_1 & + (1-\alpha) x_2 \geq 0 , \notag \\
	& (1-\alpha)  x_1 + \alpha x_2 \geq 0 \} ~. \label{eqn:cprime}
\end{align}
Such a choice usually narrows down the set of all Pareto optimal designs (see, e.g., $P^*_{3\pi/4}$ in Figure \ref{fig:paretosets}). In addition, the flexibility of specifying the relative importance of each objective leads to a wide spectrum of Pareto sets that cannot be captured by multi-objective optimization. The aim of this work is to propose a new framework for pure exploration problems in which the preferences of practitioners are encoded by ordering cones.

One real-world example in which multi-objective optimization does not suffice is intensity-modulated radiation therapy (IMRT) planning \citep{teichert2019targeted}. Here, the designs represent treatment strategies (e.g., dosage). The objectives are related to the effectiveness of therapy against the tumor and the damage to the surrounding tissue, which are in conflict with each other. Based on the relative importance of each objective encoded by the cone parametrized by $\alpha \in (0, 0.5)$, the practitioner might want to identify a set of Pareto optimal treatments via computer simulations. Note that, in this application, returning the Pareto set of designs according to the componentwise order can be harmful for the patient. For instance, a treatment which achieves a large reduction in tumor size at the cost of extensive damage to the surrounding healthy tissue could be returned if componentwise order were used.

\begin{example}
	We illustrate the limitation of using the componentwise order in IMRT via a simple example. Take $D=2$, $K=4$. The first dimension represents the effectiveness against the tumor. Assume that the best value is $1$ (e.g., the tumor is completely destroyed) and the worst value is
	$0$ (e.g., the tumor does not shrink at all). The second dimension represents the negative of the damage to the surrounding tissue. Let the best value be $0$ (e.g., the surrounding tissue is not damaged at all) and the worst value be $-1$ (e.g., the surrounding tissue is completely destroyed). Let $\bs{\mu}_1 = [0.8, -0.3]^{\mathsf{T}}$, $\bs{\mu}_2 = [0.9, -0.5]^{\mathsf{T}}$, $\bs{\mu}_3 = [0, 0]^{\mathsf{T}}$, $\bs{\mu}_4 = [1, -1]^{\mathsf{T}}$. The Pareto set under the componentwise order (i.e., the classical multi-objective case) includes all four designs for this setting. Design $3$ is useless since it has no effect on the tumor. Design $4$ is the most effective against the tumor but it causes significant damage to the surrounding tissue; thus, it should not be returned as a treatment option. By returning all four designs, the componentwise order does not provide any useful guidance to the practitioner. On the other hand, the Pareto set with cone $C$ parameterized by relative importance parameter $\alpha \approx 0.29$ (equivalent to cone with $\theta = 3\pi/4$ in Example \ref{ex:2Dsmall} below) includes only designs $1$ and $2$, both of which provide meaningful compromise between effectiveness and side-effects.
\end{example}

\subsection{Contributions} \label{sec:contibutions} 

The subfield of mathematical optimization which generalizes scalar and multi-objective optimization via partial orders induced by cones is called vector optimization \citep{jahn,lohne}. While there exists a plethora of works that focuses on multi-objective optimization through noisy evaluations \citep{auer2016pareto,hernandez2016predictive,zuluaga2016varepsilon,shah2016pareto}, our work is the first to consider vector optimization for partial orders in $\mathbb{R}^D$ induced by general polyhedral ordering cones within the stochastic $K$-armed bandit framework. 

Our first contribution is to generalize the ($\epsilon,\delta$)-PAC best arm identification problem in \citet{even2006action} to vector optimization (Definition \ref{defn:PAC}). Our success condition takes into account the directions of improvement specified by $C$ and the uncertainty induced by the norm-subgaussian sampling noise. In line with PAC best arm identification, instead of trying to identify all Pareto optimal designs, we introduce a direction-free, cone-dependent $\epsilon$-covering requirement for the Pareto set. Based on this, we define the $(\epsilon,\delta)$-PAC Pareto set identification problem. 

Our second contribution is to define the notion of {\em ordering complexity} {$\beta$}, determined by $C$ (equations \eqref{eq:Csup1}, \eqref{eq:Csup2}). The ordering complexity of $C$ characterizes the sampling budget required for ($\epsilon,\delta$)-PAC Pareto set identification. We explain how it can be computed for any polyhedral ordering cone $C$ ({Theorem \ref{thm:beta} in the main text and the theorems 
	in the supplemental document}).

Our third contribution is to characterize the learning difficulty of a vector optimization problem by identifying fundamental gaps associated with the geometric properties of the ordering cone $C$ (\revc{equations \eqref{eq:m}, \eqref{eq:M}}). We show that when mean vectors are known, these gaps can be calculated by solving convex optimization problems involving quadratic and affine functions (\revc{Propositions \ref{prop:m}, \ref{prop:M}}). Motivated by the fact that the sampling noise can shift mean estimates in arbitrary directions, these gaps extend the direction-based gaps proposed for multi-objective problems in \citet{auer2016pareto} to arbitrary polyhedral ordering cones and arbitrary directions. In particular, we rely on the interpretation of the ordering cone as the set of all directions of improvement, and define two gaps between designs $i,j$: $m(i,j)$ as the minimum step-length in an arbitrary direction of improvement for $i$ that is sufficient to avoid $i$ being dominated by $j$, $M(i,j)$ as the minimum step-length in an arbitrary direction of improvement for $j$ that is sufficient to make $j$ dominate $i$. 
For the special case of multi-objective problems, while $m(i,j)$ coincides with its analogue in \citet{auer2016pareto}, the analogue of $M(i,j)$ in \citet{auer2016pareto} gives higher values since the improvement is restricted to the diagonal direction.\footnote{When $C = \mathbb{R}^D_+$, our $m(i,j)$ is the same as \citet{auer2016pareto} but our $M(i,j)$ gives $\|(\bs{\mu}_j-\bs{\mu}_i)^-\|_2=\|(\bs{\mu}_i-\bs{\mu}_j)^+\|_2$ (by 
	Proposition \ref{prop:M}(ii)), and it is larger than $\|(\bs{\mu}_i-\bs{\mu}_j)^+\|_\infty$, which is the value of $M(i,j)$ in \citet{auer2016pareto}. In the worst-case, their $M(i,j)$ is $\sqrt{D}$ factor more conservative than ours.} Hence, our definition of $M(i,j)$ provides a more flexible gap calculation even in the multi-objective case.

Our fourth contribution is to provide gap-dependent and worst-case lower bounds on the sample complexity. Our lower bounding techniques involve constructing modified problems via perturbations of mean reward vectors along special directions in the objective space related to the geometry of the ordering cone. This makes our lower bound analysis substantially different from the analysis of scalar or multi-objective problems. In particular, we prove that the worst-case sample complexity scales as $\Omega((K \beta_2^2/\epsilon^2) \log(1/\delta))$, where $\beta_2$ is one of the two constants whose maximum define the ordering complexity.

As our final contribution, we take a first step towards Pareto set identification in \revc{vector optimization (beyond multi-objective optimization)} by proposing a na\"{i}ve elimination algorithm. In particular, we perform sample complexity analysis under the general norm-subgaussian noise assumption. We show that our algorithm successfully identifies an $(\epsilon,\delta)$-PAC Pareto set with a sample complexity of $O((K \beta^2/\epsilon^2) \log(1/\delta))$ (Theorem \ref{thm:samplecomp}). In addition, we provide numerical results for parametrized polyhedral cones, and establish a connection between the success rate of Pareto set identification and the angular width of the cone. The details of all proofs are given in the supplemental document.

\section{THE ORDERING CONE}\label{sec:prelim}

In this section, we fix the notation for the rest of the paper, introduce the order structure for vector optimization, and state some important properties of this structure. Let $D$ be a positive integer. We write $[D] := \{1,\ldots,D\}$ and denote by $\R^D$ the $D$-dimensional Euclidean space. The elements of $\R^D$ are denoted by boldface letters. For a vector $\bs{v}\in\R^D$, $||\bs{v}||_2$ represents its $\ell_2$ norm. For subsets $A$, $A'$ of $\mathbb{R}^D$, $A+A'$ and $A-A'$ denote their Minkowski sum and difference, respectively; $\cl(A)$, $\Int(A)$, $\bd(A)$, $A^c$ denote the closure, interior, boundary, and complement of $A$ in $\R^D$, respectively. For $\bs{v} =(v^1,\ldots,v^D)^{\mathsf{T}}\in \mathbb{R}^D$, we write $\bs{v}^+:=((v^1)^+,\ldots,(v^D)^+)^{\mathsf{T}}$ and $\bs{v}^-:=((v^1)^-,\ldots,(v^D)^-)^{\mathsf{T}}$, where $r^+:=\max\{0,r\}$ and $r^-:=-\min\{0,r\}$ for $r\in\R$. For $\bs{v}\in\R^D$ and $r\geq 0$, $B(\bs{v},r)$ represents the ball in $\mathbb{R}^D$ with center $\bs{v}$ and radius $r$. The distance of a vector $\bs{v} \in \mathbb{R}^D$ to a set $A \subseteq \mathbb{R}^D$ is defined as $d(\bs{v}, A) := \inf_{\bs{x} \in A} || \bs{v} - \bs{x} ||_2$. 

We consider a vector optimization problem with $D$ objectives and a finite set $[K]$ of designs, where $D, K$ are positive integers. The mean vector of design $i\in [K]$ is denoted by $\bs{\mu}_i = (\mu^1_i, \ldots, \mu^D_i)^{\mathsf{T}} \in \R^D$. To compare mean vectors, we will introduce a partial order on $\R^D$ based on an ordering cone. The latter notion is recalled next.

\begin{definition}\label{defn:orderingcone}
	A set $C\subseteq\mathbb{R}^D$ is called a \emph{cone} if $\lambda \bs{v}\in C$ for every $\bs{v}\in C$ and $\lambda\geq 0$. A cone $C$ is called \emph{pointed} if $C\cap (-C)=\{\bs{0}\}$, it is called \emph{solid} if $\Int(C)\neq\emptyset$. A closed convex cone that is pointed and solid is called an \emph{ordering cone} (proper cone).
\end{definition}

Let $C\subseteq\R^D$ be an ordering cone. As a consequence of the convexity of $C$, it is immediate that $C+C=C$. In general, an ordering cone can be polyhedral (e.g., the positive orthant $C=\R^D_+$) or non-polyhedral (e.g., the ice-cream cone $C=\{\bs{x}\in\R^D\mid \|(x^1,\ldots,x^{D-1})^{\mathsf{T}}\|_2\leq x^D\}$ for $D\geq 2$). For the purposes of this paper, we focus on the polyhedral case as detailed in the next definition.


\begin{definition}\label{defn:polycone}
	A cone $C$ is called \emph{polyhedral} if it can be written as $C=\{\bs{x}\in\R^D\mid W\bs{x}\geq 0\}$ for some $N\times D$ real matrix $W$ with rows $\bs{w}^{\mathsf{T}}_1, \ldots, \bs{w}^{\mathsf{T}}_N$, and positive integer $N$.
\end{definition}

Figure \ref{fig:cones} illustrates several examples of polyhedral ordering cones in $\mathbb{R}^2$. Throughout the paper, we assume that $C$ is a polyhedral ordering cone as in Definition \ref{defn:polycone}.\footnote{Cone $C$ represents the preferences of the practitioner. It is treated as an input.} Without loss of generality, we assume that this description has no redundancies, that is, $W$ has the minimal number of rows; as well as that $|| \bs{w}_n ||_2 = 1$ for each $n \in [N]$.\footnote{Unless stated otherwise, all of the results in this paper hold when $C$ is a polyhedral ordering cone without redundancy and $|| \bs{w}_n ||_2 = 1$ for each $n \in [N]$.} It follows that the interior of $C$ is given by
$\Int(C) = \{ \bs{x} \in \mathbb{R}^D \mid  W \bs{x} > \bs{0} \}$.
When all entries of $W$ are nonnegative, it is clear that $C \supseteq \mathbb{R}^D_{+}$. In financial mathematics, such cones have found applications in multi-asset markets with transaction costs as ``solvency cones'', where $W$ is defined in terms of the bid-ask prices of the assets; see \citet{kabanov}.

\begin{remark}
	As discussed in Section~\ref{sec:motivation}, practitioners can prefer large cones ($C \supset \mathbb{R}^D_+$) to narrow down the set of Pareto optimal designs. On the other hand, some applications can benefit from using small cones ($C \subset \mathbb{R}^D_+$). One motivating example is the small molecule drug discovery problem \citep{jayatunga2022ai}. Consider the optimization of $D$ properties such as solubility, metabolic stability, toxicity. Assume that the final goal is to identify the Pareto optimal molecules in $P^*_{\pi/2}$ from a set of candidates $[K]$ according to cone $\mathbb{R}^D_+$. The discovery process starts with (relatively cheaper) in silico experiments followed by (more expensive) wet lab experiments. Due to distribution mismatch, running in silico experiments with $C = \mathbb{R}^D_+$ could result in failing to return an $(\epsilon,\delta)$-PAC set for $C = \mathbb{R}^D_+$. By using a smaller cone $C \subset \mathbb{R}^D_+$ for in silico experiments, the practitioner can ensure that more designs are passed to the wet lab stage $\hat{P}_{C} \supseteq \hat{P}_{\pi/2}$ and, since all Pareto optimal designs according to $\mathbb{R}^D_+$ are still in $C$, one can perform more expensive wet lab experiments on these designs to ensure that an $(\epsilon,\delta)$-PAC set is returned. In this case, the knowledge of an upper bound on the distribution mismatch can be used to select $C$.
\end{remark}


In order to characterize the sample complexity of Pareto set identification, we introduce the following constants that are related to the geometry of $C$: 
\begin{align}
	&\beta_1:=\sup_{\bs{x}\notin C}\frac{d(\bs{x},C\cap(\bs{x}+C))}{d(\bs{x},C)}, \label{eq:Csup1} \\
	& \beta_2:=\sup_{\bs{x}\in \Int(C)}\frac{d(\bs{x},(\Int(C))^c\cap(\bs{x}-C))}{d(\bs{x},(\Int(C))^c)} ~.
	\label{eq:Csup2}
\end{align}
These constants, which will appear in the crux of our sample complexity analysis, are novel to our work. We will provide intuitive explanations for these constants after defining fundamental gaps associated with Pareto set identification in Section \ref{sec:pareto} (see Remark~\ref{rem:intuitive}). In this section, we analyze their technical properties.
Note that we have $\beta_1\geq 1$ and $\beta_2\geq 1$. Let $\beta:= \max\{\beta_1,\beta_2\}$. We call $\beta$ the \emph{ordering complexity} of $C$.\footnote{$\beta$ only depends on $C$. It does not depend on the mean rewards.} The following theorem summarizes its main properties. To that end, for each $n\in [N]$, let us introduce the constant $\alpha_n:=\sup_{\bs{u}\in B(\bs{0},1)\cap C}\bs{w}_n^{\mathsf{T}}\bs{u}$; note that $\alpha_n\in (0,1]$ since we assume that $\sup_{\bs{u}\in B(\bs{0},1)}\bs{w}_n^{\mathsf{T}}\bs{u}= \|\bs{w}_n\|_2=1$.

\begin{theorem}\label{thm:beta}
	\revc{(i) When $C$ is a polyhedral ordering cone without redundancy and $|| \bs{w}_n ||_2 = 1$ for each $n \in [N]$, it holds $\beta_1<+\infty$ and $\beta_2\leq (\min_{n\in[N]}\alpha_n)^{-1}< +\infty$. (ii) Suppose further that $C\supseteq \R^D_+$. Then, $\beta_1=\beta_2=1$.}
\end{theorem}

Note that Theorem \ref{thm:beta} establishes the finiteness of the ordering complexity $\beta$. In the sample complexity analysis, we will assume that $\beta$ is known. Hence, we are also interested in the calculation of (an upper bound) $\beta$. While Theorem \ref{thm:beta}(i) already gives an easy-to-calculate upper bound on $\beta_2$, the calculation of $\beta_1$ is more involved and requires maximizing a Lipschitz function over a compact set. Moreover, in Theorem \ref{thm:beta}(ii), one can relax the condition $C\supseteq \R^D_+$ slightly. These technical points are discussed in detail with proofs and additional results in the supplemental document. 
\begin{remark}\label{rem:betacalc}
	The algorithm we propose in Section \ref{sec:alg} requires $\beta$ as input, which can be computed offline before starting evaluations. In addition, we would like to point out that the calculation of $\beta$ for large values of $D$ is not of primary concern in our setting. In both multi-objective and vector optimization, usually a small number $2 \leq D \leq 5$, and very typically only $D=2$ or $D=3$ of conflicting objectives are considered \citep{kovavcova2021time, umar}. While it is valuable to provide the decision-maker with a whole frontier of Pareto optimal designs for small number of objectives, the Pareto set becomes intractable for larger number of objectives. In the latter case, it would be more of a burden than flexibility for the decision-maker to choose from the Pareto set of designs.
\end{remark}

We calculate $\beta_1,\beta_2$ for some standard ordering cones in $\R^2$ in the next example.


\begin{example}\label{ex:2Dsmall}
	Let $D=2$. Given $\bs{x}\in\R^2$, let $\alpha(\bs{x})\in [0,2\pi)$ denote the angle in the polar coordinates of $\bs{x}$. Let $\theta\in (0,\pi/2]$ and define the ordering cone $C_\theta:=\{\bs{x}\in\R^2 \mid \alpha(\bs{x})\in [\pi/4-\theta/2,\pi/4+\theta/2]\}$. Let $\bs{x}\notin C_\theta$. Using elementary planar geometry, it can be checked that $\beta_1=\beta_2=\csc(\theta)$. The details of the calculation are given in the supplemental document. 
	If we take $\theta\in (\pi/2,\pi)$, then $\beta_1=\beta_2=1$ by Theorem \ref{thm:beta}(ii). Note that, up to a rotation of the angle bisector, this is the general form of an ordering cone for $D=2$; the current choice of the angle bisector is for convenience. 
\end{example}

\begin{remark} \label{remark:cones}
	Cone $C'$ in \eqref{eqn:cprime} can also be represented in the form given in Example \ref{ex:2Dsmall}. For instance, $C_{3\pi/4}$ in Figure~\ref{fig:paretosets} is such a cone with $\alpha = \frac{- \sin(\frac{\pi}{4} - \frac{\theta}{2})}{\sqrt{2} \sin( \frac{\theta}{2})} \approx 0.29$, where $\theta = 3 \pi /4$. Indeed, given any relative importance level $\alpha \in (0, 0.5)$ specified by the practitioner, one can find $\theta$ such that $\frac{- \sin(\frac{\pi}{4} - \frac{\theta}{2})}{\sqrt{2} \sin( \frac{\theta}{2})} = \alpha$.
\end{remark}



The polyhedral ordering cone $C$ induces two non-total order relations on $\R^D$ as defined next.

\begin{definition}\label{defn:cone}
	For every $\bs{\mu}, \bs{\mu}' \in \mathbb{R}^D$, we write $\bs{\mu} \preceq_{C} \bs{\mu}'$ if $\bs{\mu}'  \in \bs{\mu} + C$, and we write $\bs{\mu} \prec_{C} \bs{\mu}'$ if $\bs{\mu}'  \in \bs{\mu} + \Int(C)$.
\end{definition}

It can be checked that both $\preceq_{C}$ and $\prec_{C}$ are partial order relations on $\R^D$. If $C=\R^D_+$, then $\preceq_{C}$ coincides with the usual componentwise order on $\R^D$, which is the partial order used in multi-objective optimization. To explain the motivation for using an ordering cone that is different from the positive orthant, let us first observe that
\begin{equation}\label{eq:order}
	\bs{\mu}\preceq_C \bs{\mu}'\quad \Leftrightarrow \quad \forall n\in[N]\colon \bs{w}_n^{\mathsf{T}}\bs{\mu}\leq \bs{w}_n^{\mathsf{T}}\bs{\mu}'
\end{equation}
for every $\bs{\mu},\bs{\mu}'\in\R^D$. We denote by $C^+$ the convex cone generated by the rows of $W$, that is,
\begin{equation}\label{eq:dualcone}
	C^+:=\left\{\sum_{n=1}^N \lambda_n \bs{w}_n\mid \lambda_1,\ldots,\lambda_N\geq 0\right\}.
\end{equation}
It can be shown that $C^+$ is also a polyhedral ordering cone and it coincides with the so-called \emph{dual cone} of $C$, that is, $C^+=\{\bs{w}\in \R^D \mid \forall \bs{x}\in C\colon \bs{w}^{\mathsf{T}}\bs{x}\geq 0\}$. It is well-known that the positive orthant is self-dual, that is, $(\R^D_+)^+=\R^D_+$. By \eqref{eq:dualcone}, we may rewrite \eqref{eq:order} as
\begin{equation}\label{eq:order2}
	\bs{\mu}\preceq_C \bs{\mu}'\quad \Leftrightarrow \quad \forall w\in C^+\colon \bs{w}^{\mathsf{T}}\bs{\mu}\leq \bs{w}^{\mathsf{T}}\bs{\mu}'.
\end{equation}
Here, each $\bs{w}\in C^+$ can be considered as a ``weight vector'' since the partial order is determined by comparing the weighted sums of the components of $\bs{\mu}, \bs{\mu}'$ for all $\bs{w}\in C^+$. Moreover, if $C'$ is another ordering cone, then we have $C\supseteq C'$ iff $C^+\subseteq (C')^+$. In particular, if $W$ has non-negative entries so that $C\supseteq \R^D_+$, then we have $C^+\subseteq \R^D_+$. Consequently, $\preceq_C$ is weaker than the componentwise order in this case, thanks to \eqref{eq:order2}. In other words, in vector optimization, the decision-maker may have a more relaxed requirement when comparing two vectors, which is encoded by having a smaller set $C^+$ of weight vectors.

The above orders on $\R^D$ induce further orders on the design space $[K]$.

\begin{definition}\label{defn:partialorder}
	Let $i,j\in[K]$. Design $i$ is said to be \emph{weakly dominated} by design $j$ ($i \preceq_{C} j$) if $\bs{\mu}_i \preceq_{C} \bs{\mu}_j$. Design $i$ is said to be \emph{dominated} by design $j$ ($i \preceq_{C \setminus \{ \bs{0} \}}j$) if $\bs{\mu}_i \preceq_{C \setminus \{ \bs{0} \}} \bs{\mu}_j$. Design $i$ is \emph{strongly dominated} by design $j$ ($i \prec_{C} j$) if $\bs{\mu}_i \prec_{C} \bs{\mu}_j$. 
\end{definition}

\section{THE VECTOR OPTIMIZATION PROBLEM}\label{sec:optimization}

Equipped with the definitions in Section~\ref{sec:prelim}, the vector optimization problem can be expressed as
\begin{align*}
& \text{maximize } \bs{\mu}_i \text{ with respect to cone } C \text{ over }i \in [K] ~.
\end{align*}
The solution of this problem is called the Pareto set, as we define next.

\begin{definition}\label{defn:paretooptimal}
	A design $i\in [K]$ is called \emph{Pareto optimal} if it is not dominated by any other design with respect to the ordering induced by $C$. The \emph{Pareto set} $P^\ast := \{i \in [K] \mid \nexists j\in [K]\colon i \preceq_{C \setminus \{ \bs{0} \}} j \}$ consists of all Pareto optimal designs. 
\end{definition}

In vector optimization, a Pareto optimal design $i\in P^\ast$ is sometimes called an efficient solution or a maximizer of the design space $[K]$, and the corresponding objective vector $\bs{\mu_i}$ is called a $C$-maximal element of the objective set $\{\bs{\mu}_j\mid j\in[K]\}$; see \citet[Definition 3.1]{jahn}, \citet[Definition 2.1]{heydelohne}.

\begin{remark}\label{rem:Cpareto}
	Let us write $P^\ast(C):=P^\ast$ to emphasize the dependence on the ordering cone. Then, switching to a larger ordering cone $C^\prime\supseteq C$ makes the dominance relation weaker. Consequently, the Pareto set becomes smaller under the larger cone: $P^\ast(C^\prime)\subseteq P^\ast(C)$.
\end{remark}

	

\section{PARETO SET IDENTIFICATION}\label{sec:pareto}

We consider a learning problem where the mean vectors $\bs{\mu}_i$, $i \in [K]$, are not known beforehand. Our goal is to identify $P^*$ from noisy observations of the objective values of chosen designs in as few evaluations as possible. We assume that the total number of evaluations, $T\geq 1$, is finite. Evaluations are done in a sequential manner with $t\in [T]$ representing the round in which the $t$th evaluation is made, and we denote by $I_t$ the random variable representing the design evaluated in round $t$. We consider a noisy feedback setting in which the evaluation in round $t$ yields a random reward vector {$\bs{X}_{t}=\bs{\mu}_{I_t}+\bs{Y}_{I_t,t}$, where $\bs{Y}_{i,t}$ is the random noise vector associated to this evaluation when the evaluated design is $I_t=i$. We assume that the family $(\bs{Y}_{i,t})_{i\in[K], t\in[T]}$ consists of independent and centered random vectors, and it is independent of the family $(I_t)_{t\in[T]}$. We also assume that all members of this family are} norm-subgaussian with a common parameter $\sigma\geq 0$ as stated in the next definition.


\begin{definition}\label{defn:normsubgauss} \citep[Definition~3]{jin2019short}
	A centered random vector $\bs{Y}$ is called \emph{norm-subgaussian} with parameter $\sigma\geq 0$ if 
	$\Pr\{ \| \bs{Y} \|_2 \geq \epsilon\} \leq 2 e^{-\frac{\epsilon^2}{2\sigma^2}}$
	for every $\epsilon \geq 0$. 
\end{definition}

Examples of norm-subgaussian random vectors include bounded random vectors
and subgaussian random vectors (up to a scaling of the parameter); see \citet[Lemma 1]{jin2019short}.

Due to sampling noise, the sample complexity of identifying $P^\ast$ depends on the hardness of distinguishing the designs in $P^\ast$ from the designs that are not in $P^\ast$. We quantify the hardness by the following gaps.

Given two designs $i,j \in [K]$, we define $m(i,j)$ as the minimum increment in $\bs{\mu}_i$ in an arbitrary direction of increase that makes design $i$ not strongly dominated by design $j$. Formally, we have
\begin{align}\label{eq:m}
	m(i,j) :=\inf\{ s\geq 0\mid \exists & \bs{u}\in B(\bs{0},1)\cap C\colon \notag \\
	& ~ \bs{\mu}_i + s\bs{u} \notin \bs{\mu}_j-\Int(C)\} ~.
\end{align}
The next proposition lists important properties of this gap. 

\begin{proposition}\label{prop:m}
	Let $i,j\in [K]$. (i) It holds $m(i,j)<+\infty$. (ii) It holds $m(i,j)=d(\bs{\mu}_j-\bs{\mu}_i,(\Int(C))^c\cap (\bs{\mu}_j-\bs{\mu}_i-C))$. (iii) We have $m(i,j)>0$ iff $i\prec_C j$. (iv) It holds $m(i,j)=\min_{n\in [N]}(w_n^{\mathsf{T}}(\bs{\mu}_j-\bs{\mu}_i))^+/\alpha_n$.
\end{proposition}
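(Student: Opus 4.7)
The plan is to introduce $\bs{z}:=\bs{\mu}_j-\bs{\mu}_i$ and first recast the feasibility condition in \eqref{eq:m}. Observe that $\bs{\mu}_i+s\bs{u}\notin\bs{\mu}_j-\Int(C)$ is equivalent to $\bs{z}-s\bs{u}\in(\Int(C))^c$, and that as $\bs{u}$ varies over $B(\bs{0},1)\cap C$ the product $\bs{v}:=s\bs{u}$ varies over exactly $\{\bs{v}\in C\colon \|\bs{v}\|_2\leq s\}$ (since $C$ is a cone). Substituting $\bs{y}:=\bs{z}-\bs{v}$ turns the condition into $\bs{y}\in(\Int(C))^c\cap(\bs{z}-C)$ with $\|\bs{z}-\bs{y}\|_2\leq s$, so the infimum defining $m(i,j)$ becomes $\inf\{\|\bs{z}-\bs{y}\|_2\colon \bs{y}\in(\Int(C))^c\cap(\bs{z}-C)\}$, giving (ii). For (i) I only need to exhibit one feasible $\bs{y}$: if $\bs{z}\notin\Int(C)$ then $\bs{y}=\bs{z}$ itself lies in the set (and in fact $m(i,j)=0$), while if $\bs{z}\in\Int(C)$ then $\bs{z}\neq\bs{0}$ by pointedness, so $-\bs{z}\in -\Int(C)\subseteq(\Int(C))^c$ and $-\bs{z}=\bs{z}-2\bs{z}\in\bs{z}-C$.

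Item (iii) then reads off from (ii): if $\bs{z}\notin\Int(C)$ then $\bs{z}$ belongs to the set of (ii), forcing $m(i,j)=0$; whereas if $\bs{z}\in\Int(C)$, the closed set $(\Int(C))^c$ misses $\bs{z}$, so $d(\bs{z},(\Int(C))^c)>0$, and the inclusion $(\Int(C))^c\cap(\bs{z}-C)\subseteq(\Int(C))^c$ lower-bounds $m(i,j)$ by the same positive quantity. The same dichotomy settles (iv) in the case $\bs{z}\notin\Int(C)$: by \eqref{eq:intC}, some $\bs{w}_n^{\mathsf{T}}\bs{z}\leq 0$, so $\min_n(\bs{w}_n^{\mathsf{T}}\bs{z})^+/\alpha_n=0=m(i,j)$.

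For the remaining case $\bs{z}\in\Int(C)$, where every $\bs{w}_n^{\mathsf{T}}\bs{z}>0$ by \eqref{eq:intC}, I would prove (iv) by matching inequalities. Lower bound: any feasible pair $(s,\bs{u})$ yields $\bs{z}-s\bs{u}\in(\Int(C))^c$, so \eqref{eq:intC} gives some $n\in[N]$ with $\bs{w}_n^{\mathsf{T}}(\bs{z}-s\bs{u})\leq 0$; then $s\alpha_n\geq s\bs{w}_n^{\mathsf{T}}\bs{u}\geq\bs{w}_n^{\mathsf{T}}\bs{z}$ gives $s\geq \min_{n'\in[N]}\bs{w}_{n'}^{\mathsf{T}}\bs{z}/\alpha_{n'}$. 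Upper bound: pick $n^\ast\in\arg\min_{n\in[N]}\bs{w}_n^{\mathsf{T}}\bs{z}/\alpha_n$, and use compactness of $B(\bs{0},1)\cap C$ together with continuity of the linear functional $\bs{w}_{n^\ast}^{\mathsf{T}}(\cdot)$ to select $\bs{u}^\ast\in B(\bs{0},1)\cap C$ attaining $\alpha_{n^\ast}$. Setting $s^\ast:=\bs{w}_{n^\ast}^{\mathsf{T}}\bs{z}/\alpha_{n^\ast}$ yields $\bs{w}_{n^\ast}^{\mathsf{T}}(\bs{z}-s^\ast\bs{u}^\ast)=0$, hence $\bs{z}-s^\ast\bs{u}^\ast\notin\Int(C)$ by \eqref{eq:intC}, so $(s^\ast,\bs{u}^\ast)$ is feasible and $m(i,j)\leq s^\ast$.

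The main obstacle should be keeping the equivalence chain for (ii) transparent and, in (iv), remembering that the polyhedral characterization \eqref{eq:intC} of $\Int(C)$ is precisely the tool that reduces the abstract non-interior condition on $\bs{z}-s\bs{u}$ to the single-row scalar inequality $\bs{w}_n^{\mathsf{T}}(\bs{z}-s\bs{u})\leq 0$; once this reduction is in place, the attainment step for $\alpha_{n^\ast}$ is a routine compactness argument and the rest is bookkeeping.
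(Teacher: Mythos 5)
Your proposal is correct, and parts (ii) and (iii) follow essentially the paper's own route (the same change of variables $\bs{v}=s\bs{u}$ ranging over $B(\bs{0},s)\cap C$, then reading off the distance formula and using closedness of $(\Int(C))^c\cap(\bs{z}-C)$). Where you genuinely diverge is in (i) and (iv). For (i), the paper argues by contradiction: emptiness of the feasible set would force $\bs{\mu}_i+C\subseteq\bs{\mu}_j-C$, which is then played off against pointedness and solidity of $C$; you instead exhibit an explicit feasible point ($\bs{z}$ itself when $\bs{z}\notin\Int(C)$, and $-\bs{z}$ when $\bs{z}\in\Int(C)$, using $-\Int(C)\cap\Int(C)=\emptyset$ and $-\bs{z}=\bs{z}-2\bs{z}\in\bs{z}-C$), which is more constructive and even gives the quantitative bound $m(i,j)\leq 2\|\bs{z}\|_2$. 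For (iv), the paper first proves that the feasible set $R$ of step-lengths is an upward-unbounded interval (via a convexity argument on $\bs{\mu}_j-C$), so that $\inf R=\sup([0,\infty)\setminus R)$, and then manipulates the supremum through the polyhedral description; you avoid this structural lemma entirely by a direct two-sided sandwich: the lower bound extracts a violated inequality $\bs{w}_n^{\mathsf{T}}(\bs{z}-s\bs{u})\leq 0$ from \eqref{eq:intC} for any feasible pair, and the upper bound constructs an explicit feasible pair $(s^\ast,\bs{u}^\ast)$ with $\bs{u}^\ast$ attaining $\alpha_{n^\ast}$ by compactness of $B(\bs{0},1)\cap C$. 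Your version of (iv) is arguably more elementary and self-contained, at the cost of having to produce the attaining direction explicitly; the paper's version yields the identity as one clean chain of equalities once the interval structure of $R$ is in hand. Both are sound.
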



%

Similarly, given $i,j\in[K]$, we define $M(i,j)$ as the minimum increment in $\bs{\mu}_j$ in an arbitrary direction of increase that makes design $i$ weakly dominated by design $j$, that is,
\begin{align}\label{eq:M}
	M(i,j):=\inf\{s\geq 0\mid \exists& \bs{u}\in B(\bs{0},1)\cap C\colon \notag \\
	& \bs{\mu}_j+s\bs{u}\in \bs{\mu}_i+C\}~.
\end{align}
We state the fundamental properties of this gap next.

\begin{proposition}\label{prop:M}
	Let $i,j\in[K]$. (i) It holds $M(i,j)<+\infty$. (ii) It holds $M(i,j)=d(\bs{\mu}_j-\bs{\mu}_i,C\cap (\bs{\mu}_j-\bs{\mu}_i+C))$. (iii) We have $M(i,j)=0$ iff $i\preceq_C j$. 
\end{proposition}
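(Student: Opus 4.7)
The plan is to mirror the template used for Proposition \ref{prop:m}, adapting each step to the new constraint $\bs{\mu}_j+s\bs{u}\in\bs{\mu}_i+C$ in place of $\bs{\mu}_i+s\bs{u}\notin\bs{\mu}_j-\Int(C)$. A convenient shorthand throughout will be $\bs{v}:=\bs{\mu}_j-\bs{\mu}_i$, so that the defining condition reads $\bs{v}+s\bs{u}\in C$.

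For (i), I would exploit that $C$ is solid. Fix any unit vector $\bs{u}_0\in \Int(C)\cap B(\bs{0},1)$, which exists because $\Int(C)\neq\emptyset$ and $\Int(C)$ is invariant under positive scalings, hence intersects the unit sphere. Choose $r>0$ with $B(\bs{u}_0,r)\subseteq C$. Then $B(s\bs{u}_0,sr)\subseteq C$ for every $s\geq 0$, so taking $s\geq \|\bs{v}\|_2/r$ gives $\bs{v}+s\bs{u}_0\in C$. This certifies feasibility of the defining infimum, and therefore $M(i,j)<+\infty$.

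For (ii), I would carry out a chain of rewrites identical in structure to the first derivation in the proof of Proposition \ref{prop:m}. Writing the constraint as $s\bs{u}=\bs{c}-\bs{v}$ for some $\bs{c}\in C$, and using that as $\bs{u}$ ranges over $B(\bs{0},1)\cap C$ and $s$ over $[0,\infty)$, the product $s\bs{u}$ ranges precisely over $C$, I obtain
\begin{align*}
M(i,j) &= \inf\bigl\{s\geq 0\mid \exists \bs{c}\in C\cap(\bs{v}+C)\colon s=\|\bs{c}-\bs{v}\|_2\bigr\}\\
&= \inf_{\bs{c}\in C\cap(\bs{v}+C)}\|\bs{c}-\bs{v}\|_2=d(\bs{v},\,C\cap(\bs{v}+C)).
\end{align*}
The one nontrivial bookkeeping step is the one already flagged in the supplementary material for Proposition \ref{prop:m}: given $\bs{c}\in C\cap(\bs{v}+C)$, one must exhibit $\bs{u}\in B(\bs{0},1)\cap C$ and $s'\geq 0$ with $s'\bs{u}=\bs{c}-\bs{v}$ and $s'=\|\bs{c}-\bs{v}\|_2$. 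This is done by taking $s'=\|\bs{c}-\bs{v}\|_2$ and, when $s'>0$, $\bs{u}=(\bs{c}-\bs{v})/s'$; the membership $\bs{c}-\bs{v}\in C$ is exactly the condition $\bs{c}\in\bs{v}+C$, and $\bs{u}\in C$ then follows from the cone property.

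For (iii), the implication $i\preceq_C j \Rightarrow M(i,j)=0$ is immediate by plugging $s=0$ and any $\bs{u}\in B(\bs{0},1)\cap C$ into the defining infimum. Conversely, if $M(i,j)=0$, then part (ii) gives $d(\bs{v},\,C\cap(\bs{v}+C))=0$; since $C$ is closed and its translate $\bs{v}+C$ is closed, the intersection is closed, and therefore $\bs{v}\in C\cap(\bs{v}+C)\subseteq C$, i.e., $\bs{\mu}_j-\bs{\mu}_i\in C$, so $i\preceq_C j$. The main obstacle I anticipate is the rigorous justification of the parametrization step in (ii); since this is the same subtlety addressed in the supplementary proof of Proposition \ref{prop:m}(iv), I expect it can be reused essentially verbatim.
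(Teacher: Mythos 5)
Your proof is correct and follows essentially the same route as the paper's: part (ii) is the same chain of rewrites reducing the infimum over $(s,\bs{u})$ to $d(\bs{v},C\cap(\bs{v}+C))$, and part (iii) uses closedness of $C\cap(\bs{v}+C)$ together with $\bs{v}\in\bs{v}+C$ exactly as the paper does. The only cosmetic difference is in (i), where you argue constructively via an interior direction $\bs{u}_0$ and a scaled ball $B(s\bs{u}_0,sr)\subseteq C$, while the paper reaches the same conclusion by contradiction from the solidity of $C$; both rest on the same geometric fact.
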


\begin{remark}\label{rem:Cgap}
	From \eqref{eq:M}, it is clear that using a larger cone would result in smaller values of $M(i,j)$.
\end{remark}

As an immediate consequence of Propositions \ref{prop:m}(ii, iii) and \ref{prop:M}(ii,iii), we obtain the following corollary.

\begin{corollary}\label{cor:mM}
	Let $i,j\in[K]$. Then, exactly one of the following cases holds.\\
	(i) We have both $i\prec_C j$ and $i\preceq_C j$ iff $\bs{\mu}_j-\bs{\mu}_i\in\Int(C)$ iff $m(i,j)>0$ and $M(i,j)=0$.\\
	(ii) We have both $i\nprec_C j$ and $i\preceq_C j$ iff $\bs{\mu}_j-\bs{\mu}_i\in\bd(C)$ iff $m(i,j)=M(i,j)=0$.\\
	(iii) We have both $i\nprec_C j$ and $i\npreceq_C j$ iff $\bs{\mu}_j-\bs{\mu}_i \in C^c$ iff $m(i,j)=0$ and $M(i,j)>0$.\\
	In particular, we have $m(i,j)=0$ or $M(i,j)=0$.
\end{corollary}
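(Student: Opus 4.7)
The plan is to treat the corollary as a direct bookkeeping consequence of Propositions \ref{prop:m}(iii) and \ref{prop:M}(iii) combined with the topological decomposition $\R^D=\Int(C)\sqcup\bd(C)\sqcup C^c$, which holds since $C$ is closed. No new analytic work is needed; everything reduces to chasing definitions.

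First, I would record the elementary equivalences coming from Definition \ref{defn:cone}: one has $i\preceq_C j\Leftrightarrow \bs{\mu}_j-\bs{\mu}_i\in C$ and $i\prec_C j\Leftrightarrow \bs{\mu}_j-\bs{\mu}_i\in \Int(C)$. Since $C$ is a closed convex cone, $C=\Int(C)\cup\bd(C)$ with the union disjoint, so the three subsets $\Int(C)$, $\bd(C)$, $C^c$ partition $\R^D$. Thus exactly one of the following holds for $\bs{\mu}_j-\bs{\mu}_i$: it lies in $\Int(C)$, in $\bd(C)$, or in $C^c$; these correspond respectively to ``strongly and weakly dominated'', ``weakly but not strongly dominated'', and ``neither'', which are precisely the order-relation parts of the three cases (i)--(iii).

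Second, I would translate the order-relation characterizations into statements about the gaps using the cited propositions. Proposition \ref{prop:m}(iii) gives $m(i,j)>0 \Leftrightarrow i\prec_C j$, and since $m(i,j)\geq 0$ by definition, its negation is $m(i,j)=0 \Leftrightarrow i\nprec_C j$. Similarly, Proposition \ref{prop:M}(iii) gives $M(i,j)=0 \Leftrightarrow i\preceq_C j$, whose negation reads $M(i,j)>0 \Leftrightarrow i\npreceq_C j$. Plugging these into each of the three cases yields the $(m,M)$-descriptions: case (i) becomes $m(i,j)>0$ and $M(i,j)=0$; case (ii) becomes $m(i,j)=0$ and $M(i,j)=0$; case (iii) becomes $m(i,j)=0$ and $M(i,j)>0$. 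Mutual exclusivity and exhaustiveness are inherited from the partition established above.

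Finally, the ``in particular'' assertion is immediate: in each of cases (i), (ii), (iii), at least one of $m(i,j)$ or $M(i,j)$ equals zero. I expect no real obstacle in this proof; the only minor care is in noting that $\bd(C)\subseteq C$ (so the middle case still entails $i\preceq_C j$) and that the nonnegativity of $m$ and $M$ makes ``$>0$'' and ``$=0$'' genuine negations of each other.
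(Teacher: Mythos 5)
Your proposal is correct and matches the paper's intent exactly: the paper presents this corollary as an immediate consequence of Propositions \ref{prop:m}(iii) and \ref{prop:M}(iii) without writing out the details, and your argument --- partitioning $\R^D$ into $\Int(C)$, $\bd(C)$, $C^c$ and translating each cell via the two propositions --- is precisely the bookkeeping being left implicit. No gaps.
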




For each $i \in [K]$, let $\Delta^\ast_i := \max_{j \in P^\ast} m(i,j)$. By Proposition \ref{prop:M}(iii), it is clear that $\Delta^\ast_i = 0$ if $i\in P^\ast$. We consider an $(\epsilon,\delta)$-PAC (probably approximately correct) Pareto set identification setup under which the estimated Pareto set $P\subseteq[K]$ returned by the learner needs to satisfy the condition in the next definition. 
\begin{definition}\label{defn:PAC}(Success condition)
	Let $\epsilon>0$, $\delta \in (0,1)$. A set $P\subseteq[K]$ is called an \emph{$(\epsilon,\delta)$-PAC Pareto set} if the following properties hold with probability at least $1-\delta$: ($i$) $\cup_{i \in P} (\bs{\mu}_i + (B(\bs{0}, \epsilon)\cap C) - C) \supseteq \cup_{i \in P^*} (\bs{\mu}_i - C)$; ($ii$) for every $i \in P \setminus P^\ast$, it holds $\Delta^*_i \leq \epsilon$.
\end{definition}

\begin{remark}\label{rem:PAC}
	Note that property $(i)$ in Definition~\ref{defn:PAC} is equivalent to $(i')$ $\cup_{i\in P}(\bs{\mu}_i+(B(0,\bs{\epsilon})\cap C)-C)\supseteq \{\bs{\mu}_i\mid i\in P^\ast\}$. The implication $(i)\Rightarrow(i')$ is obvious. To see $(i')\Rightarrow(i)$, suppose that $(i')$ holds and let $\bs{v}\in \bs{\mu}_i-C$ for some $i\in P^\ast$. Then, by $(i')$, $\bs{\mu}_i\in \bs{\mu}_j+(B(0,\bs{\epsilon})\cap C)-C$ for some $j\in P$. Moreover, $C+C=C$ since $C$ is a convex cone. Hence, $\bs{v}\in \bs{\mu}_i-C\subseteq \bs{\mu}_j+(B(0,\bs{\epsilon})\cap C)-C-C=\bs{\mu}_j+(B(0,\bs{\epsilon})\cap C)-C$, showing $(i)$. A similar covering property is used in \citet[Definition 3.5]{umar} in the context of deterministic convex vector optimization.
\end{remark}

Property $(i)$ in Definition \ref{defn:PAC} is an $\epsilon$-covering requirement for the Pareto set $P^\ast$; in view of Remark \ref{rem:PAC}, it is equivalent to the following: for every $i\in P^\ast$, there exist $j\in P$ and $\bs{u}\in B(\bs{0},1)\cap C$ such that $\bs{\mu}_i\preceq_C \bs{\mu}_j+\epsilon\bs{u}$. Roughly speaking, although $P$ might not contain all Pareto optimal designs, it is required to include a close-enough design for each Pareto optimal design. Moreover, although some designs in $P$ might be suboptimal, property $(ii)$ in Definition \ref{defn:PAC} bounds the gaps of such designs; hence, it controls the quality of all returned designs.

\begin{remark}\label{rem:auersuccess}
	Ignoring the differences in gap definitions, let us compare our success condition with the two alternative success conditions (SC1 and SC2) in \citet{auer2016pareto} at a structural level. All three conditions require small suboptimality gap: $\Delta_i^\ast\leq \epsilon$. While SC1 requires $P^*$ to be a subset of the returned set, SC2 and ours impose the weaker $\epsilon$-covering requirement, which is in line with \citet{even2006action,zuluaga2016varepsilon}. Finally, SC2 has a sparsity requirement for the returned set of designs, which we do not have in Definition \ref{defn:PAC}. For instance, when $D=1$, our success condition is satisfied when all returned designs are $\epsilon$-optimal, which might still violate SC2 depending on the configuration of the reward vectors. However, due to the different definitions of the gaps, our success condition is mathematically incomparable with SC1, SC2. 
\end{remark}

There is an innate connection between gaps defined in this section and the ordering complexity terms defined in Section~\ref{sec:prelim}. The following remark summarizes this relation, which plays an important role in characterizing the sample complexity.

\begin{remark} \label{rem:intuitive}
	({Intuitive explanations of $\beta_1$, $\beta_2$}) Let $\bs{\Delta}_{ij} = \bs{\mu}_j - \bs{\mu}_i$. Consider $i$, $j$ such that $\bs{\Delta}_{ij} \notin C$, i.e., $\bs{\mu}_i \npreceq_C \bs{\mu}_j$. By definition, $M(i,j) = d (\bs{\Delta}_{ij} , C \cap (\bs{\Delta}_{ij} + C) )$ (numerator in \eqref{eq:Csup1}) is the minimum change in $\bs{\Delta}_{ij}$ in an arbitrary direction of increase that makes $i$ weakly dominated by $j$, while $d(\bs{\Delta}_{ij}, C)$ (denominator in \eqref{eq:Csup1}) is the minimum change in $\bs{\Delta}_{ij}$ that makes $i$ weakly dominated by $j$. In scalar and multi-objective problems, we have $M(i,j) = d(\bs{\Delta}_{ij}, C)$. However, the ratio $M(i,j)/d(\bs{\Delta}_{ij}, C)$ can be as large as $\beta_1$ in vector optimization. This makes learning fundamentally more difficult in vector optimization when $\beta_1 > 1$. In the scalar or multi-objective case, problems with larger gaps $M(i,j)$ are easier (success conditions can be met with fewer samples). However, in general vector optimization, larger gaps do not imply easier problems, what is important is the ratio $M(i,j)/d(\bs{\Delta}_{ij}, C)$. A similar argument can be made for $\beta_2$.
\end{remark}

\section{LOWER BOUNDS}\label{sec:lowerbounds}

We provide gap-dependent and worst-case sample complexity bounds for any algorithm that satisfies the success condition in Definition \ref{defn:PAC}. For each $i\in P^\ast$, we define $\Delta^+_i:=\min_{j\in P^\ast\setminus\{i\}}M(i,j)$ and $\tilde{\Delta}^{\epsilon}_i:=\max\{\Delta_i^+,\epsilon\}$. For each $i\notin P^\ast$, we have $\Delta_i^\ast=\max_{j\in P^\ast}m(i,j)$ and define $\tilde{\Delta}_i^\epsilon:=\max\{\Delta_i^\ast,\epsilon\}$. Finally, let $\mathcal{A}$ be the class of all algorithms that return, for any given $\epsilon>0$ and $\delta \in (0,1)$, a set $P\subseteq[K]$ satisfying Definition~\ref{defn:PAC} for every given distribution of $\bs{X}_t$ with $\bs{X}_t \in [0,1]^D$ almost surely for each $t\in [T]$. 

\begin{theorem}\label{lowerboundthm} (Gap-dependent lower bound)
	Suppose that $\bs{\mu}_i \in [\frac14,\frac34]^D$ for each $i\in [K]$ and $\epsilon \leq 1/8$. Then, there exist such distributions with the property that every algorithm in $\mathcal{A}$ requires $\Omega(\sum_{i\in [K]}\frac{1}{(\tilde{\Delta}^\epsilon_i)^2}\log(\frac{1}{\delta}))$ samples to work correctly. 
\end{theorem}

\begin{remark}\label{rem:auerlb}
	(a) Theorem \ref{lowerboundthm} is analogous to Theorem 17 in \cite{auer2016pareto}. However, this result works under their SC1, which is structurally much stricter than and mathematically incomparable with our success condition in Definition \ref{defn:PAC}; see Remark \ref{rem:auersuccess}. Ours is closer to their SC2, for which there is no lower bound analysis in \cite{auer2016pareto}. Hence, by relaxing the structure of their SC2, we are able to provide lower bounds on sample complexity for a success condition with an $\epsilon$-covering requirement, as opposed to the full coverage requirement in their SC1.\\
	(b) Different from their proof in the multi-objective setting, our proof constructs a special direction vector $\bs{z}^\ast\in \text{int}(C)$ along which the rewards and their (empirical) means for a fixed design are ordered linearly. Precisely, $\bs{z}^\ast$ is chosen in such a way that $\bs{z}^\ast \in \bs{u}+C$ for every $\bs{u}\in B(\bs{0},1)\cap C$. Using this vector, we analyze the following four cases: ($i$) When $i\in P^\ast\setminus P$, we modify the mean reward of $i$ as $\bs{\mu}_i^\prime:=\bs{\mu}_i+2\tilde{\Delta}^\epsilon_i \bs{z}^\ast$ and show that $i$ is not covered (in the sense of Definition \ref{defn:PAC}) by any $j\in [K]$ that covers $i$ in the original case. Hence, $i$ must be returned by the algorithm in the modified case. ($ii$) When $i\in P^\ast\cap P$, we have $\Delta_i^+=M(i,j)$ for some $j\in P^\ast\setminus \{i\}$. We take $\bs{\mu}_i^\prime:=\bs{\mu}_i-3k^\ast\tilde{\Delta}^\epsilon_i \bs{z}^\ast$ with $k^\ast:=\max_{n\in[N]}\frac{\alpha_n}{\bs{w}_n^{\mathsf{T}}\bs{z}^\ast}$ and show that $m(i,j)\geq 2\epsilon$ in the modified case. Hence, $i$ cannot be returned by the algorithm. ($iii$) The case $i\in P\setminus P^\ast$ is similar to ($ii$). ($iv$) When $i\notin P^\ast\cup P$, we take $\bs{\mu}^\prime_i := \bs{\mu}_i +(1+2k^\ast) \tilde{\Delta}^\epsilon_i\bs{z}^\ast$ and show that $i$ must be returned. Since we are able to change the returned set in each possibility, the rest of the proof follows as in \cite{auer2016pareto}. The main technical novelties in the proof are twofold: 1) The direction vector $\bs{z}^\ast$ is constructed by using the structure of the ordering cone. 2) Compared to \cite{auer2016pareto}, case ($i$) is new to our setting as $P^\ast\subseteq P$ is required there, which is relaxed by our Definition \ref{defn:PAC}.
\end{remark}

\begin{theorem}\label{lowerboundthm_new} 
	(Worst-case lower bound)
	Suppose that there are $K \geq 4$ designs. Given $\epsilon >0$, there exist mean reward vectors and norm-subgaussian noise distributions under which any algorithm in $\mathcal{A}$ requires $\Omega( (K \beta_2^2/\epsilon^2) \log(1/\delta) )$ samples to work correctly.
\end{theorem}

The proof of Theorem \ref{lowerboundthm_new} requires utilizing the geometry of the ordering cone to construct a special worst-case problem instance that is hard to distinguish from other similar problem instances. In this special instance, while mean reward vectors of most designs are identical and Pareto optimal, and there is another Pareto optimal design that is in a specially perturbed direction. 
We show that, when the noise is in a special direction determined by $\beta_2$, a change in its distribution that is proportional to $\epsilon/\beta_2$ will result in a change in the returned set of designs. 
This lower bound manifests a distinct feature of vector optimization which is not present in the scalar or multi-objective setting under noisy feedback, i.e., geometry of the ordering cone characterizes the hardness of the Pareto set identification problem. Note that we indeed have $\beta=\beta_2$ for all ordering cones that are at least as large as $\R^D_+$ (Theorem \ref{thm:beta}(ii)) and all ordering cones in the biobjective case (Example \ref{ex:2Dsmall}).

\section{NA\"{I}VE ELIMINATION AND ITS SAMPLE COMPLEXITY}\label{sec:alg}

In this section, we introduce the na\"{i}ve elimination algorithm that is used for $(\epsilon,\delta)$-PAC Pareto set identification. The algorithm operates in the same fashion as the na\"{i}ve elimination algorithm used for $(\epsilon,\delta)$-PAC best arm identification \citep{even2006action}. For $D=1$, it is known that na\"{i}ve elimination is a simple yet provable and practical algorithm that is known to work reasonably well when $K$ is not very large (see the discussion in \citet{hassidim2020optimal}).\footnote{Our aim is to show that even such a simple algorithm can be successful when its sampling budget is tuned according to ordering complexity. We think that many of the successive elimination and adaptive sampling techniques developed for the scalar case can be adapted to devise sample-efficient algorithms for vector optimization. We leave this interesting direction as future work.} 

\begin{algorithm}[h!]
	\caption{Na\"{i}ve Elimination} \label{alg:algo}
	\begin{algorithmic}
		\STATE \textbf{Inputs}: $\epsilon$, $\delta$, $[K]$, $C$, $\beta$, $L$
		\STATE \textbf{Initialize}: $\hat{\bs{\mu}}_i = N_i = 0$, $i \in [K]$
		\FOR{$t=1,2,\ldots,LK$}
		\STATE Evaluate $I_t = (t-1 \mod K) + 1$
		\STATE Observe reward $\bs{X}_t$
		\STATE $N_{I_t} = N_{I_t} +1$;
		$\hat{\bs{\mu}}_{I_t} = ( (N_{I_t}-1) \hat{\bs{\mu}}_{I_t} + \bs{X}_t) / N_{I_t}$
		\ENDFOR
		\STATE \textbf{Return}: $P:=\{i\in[K]\mid \nexists j\in [K]\colon i\widehat{\preceq}_{C\setminus\{\bs{0}\}} j\}$
	\end{algorithmic}
\end{algorithm}

Na\"{i}ve elimination takes as inputs $(\epsilon,\delta)$ and the polyhedral ordering cone $C$ defined by matrix $W$. It evaluates each design $L$ times to form empirical means $\hat{\bs{\mu}}_i$, $i \in [K]$, where $L\geq 1$ is a positive integer that is used as the exploration parameter and it is set depending on $\epsilon$, $\delta$, $K$ and the ordering complexity $\beta$ of $C$.\footnote{The computation of $\beta$, which is offline and done once at the beginning, is trivially simple when $C \supseteq \mathbb{R}^D_{+}$ (Theorem \ref{thm:beta}). For general cones, we show how $\beta$ can be computed in the supplemental document; however, we do not have a computationally efficient procedure for it as the problem is not convex. Nevertheless, as we mention in Remark~\ref{rem:betacalc}, problems in vector optimization generally have small objective dimension (not to be confused with design space dimension).} Hence, $T=LK$ evaluations are made in total. Then, the algorithm computes and returns a random set $P$ by using Definition \ref{defn:paretooptimal}. This is done by checking for each design $i$ whether there exists another design $j$ with $i \widehat{\preceq}_{C \setminus \{ \bs{0} \}} j$, where $\widehat{\preceq}_{C\setminus\{\bs{0}\}}$ is the random partial order that is defined by using $\hat{\bs{\mu}}_i$ in place of $\bs{\mu}_i$, $i \in [K]$, in the standard expression of $\preceq_{C \setminus \{ \bs{0} \}}$; see Definitions \ref{defn:cone}, \ref{defn:partialorder} (the random relations $\widehat{\prec}_C$ and $\widehat{\preceq}_C$ are defined similarly). Hence, we set $P:=\{i\in[K]\mid \nexists j\in [K]\colon i\widehat{\preceq}_{C\setminus\{\bs{0}\}} j\}$. One possible implementation of na\"{i}ve elimination is given in Algorithm \ref{alg:algo}.

For a given $(\epsilon,\delta)$, we define the per-design sampling budget as
$g(\epsilon,\delta) :=  \Big\lceil \frac{4 \beta^2 c^2 \sigma^2}{\epsilon^2} \log \Big(\frac{4 D}{\delta}\Big) \Big\rceil$.
The following theorem characterizes the sample complexity of na\"{i}ve elimination for $(\epsilon,\delta)$-PAC Pareto set identification.
\begin{theorem}\label{thm:samplecomp}
	When na\"{i}ve elimination is run with $L= g(\epsilon,2\delta/(K (K-1))$, the returned Pareto set $P$ is an $(\epsilon,\delta)$-PAC Pareto set. 
\end{theorem}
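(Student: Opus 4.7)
The plan is to combine Lemma \ref{lem:deltaconcentrate} with a union bound to ensure that all pairwise concentration events hold simultaneously, then invoke Lemma \ref{lem:identification} to conclude that $P$ satisfies the success conditions in Definition \ref{defn:PAC}.

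First, I would apply Lemma \ref{lem:deltaconcentrate} with confidence parameter $\delta' := 2\delta/(K(K-1))$ to each unordered pair $\{i,j\}$ with $i\neq j$. This gives, for each such pair, $\|\hat{\bs{\Delta}}_{ij}-\bs{\Delta}_{ij}\|_2\leq \epsilon\theta_{ij}$ with probability at least $1-\delta'$. Since there are $\binom{K}{2}=K(K-1)/2$ such pairs, a union bound yields that, with probability at least $1-\delta$, the inequality $\|\hat{\bs{\Delta}}_{ij}-\bs{\Delta}_{ij}\|_2\leq \epsilon\theta_{ij}$ holds simultaneously for every pair $i\neq j$. Call this event $E$; throughout the rest of the argument I work on $E$.

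Next, I would verify Condition a. Fix $i\in P^\ast$ and $j\in [K]\setminus\{i\}$ with $d(\bs{\Delta}_{ij},C\cap(\bs{\Delta}_{ij}+C))>\epsilon$. Since $\bs{\Delta}_{ij}$ always lies in $\bs{\Delta}_{ij}+C$, the strict positivity of this distance forces $\bs{\Delta}_{ij}\notin C$, and hence by definition $\theta_{ij}=d(\bs{\Delta}_{ij},C)/d(\bs{\Delta}_{ij},C\cap(\bs{\Delta}_{ij}+C))$. Consequently, on $E$,
\[
\|\hat{\bs{\Delta}}_{ij}-\bs{\Delta}_{ij}\|_2\leq \epsilon\theta_{ij}=\frac{\epsilon\, d(\bs{\Delta}_{ij},C)}{d(\bs{\Delta}_{ij},C\cap(\bs{\Delta}_{ij}+C))}<d(\bs{\Delta}_{ij},C),
\]
which is precisely Condition a. Condition b is established analogously: for $i\notin P^\ast$ and $j\in P^\ast$ with $d(\bs{\Delta}_{ij},(\Int(C))^c\cap(\bs{\Delta}_{ij}-C))>\epsilon$, one first argues $\bs{\Delta}_{ij}\in\Int(C)$ (since $\bs{\Delta}_{ij}\in\bs{\Delta}_{ij}-C$), so that $\theta_{ij}=d(\bs{\Delta}_{ij},(\Int(C))^c)/d(\bs{\Delta}_{ij},(\Int(C))^c\cap(\bs{\Delta}_{ij}-C))$, and then the same chain of inequalities yields the desired bound.

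Finally, having established that Conditions a and b both hold on the event $E$, which occurs with probability at least $1-\delta$, Lemma \ref{lem:identification} guarantees that on $E$ the random set $P$ satisfies the success conditions in Definition \ref{defn:PAC}. Hence $P$ is an $(\epsilon,\delta)$-PAC Pareto set. The only subtlety in the argument is tracking the correct bookkeeping in the union bound: since the pairwise event from Lemma \ref{lem:deltaconcentrate} is symmetric in $i,j$, one needs only $\binom{K}{2}$ pairs rather than $K(K-1)$ ordered pairs, which explains the factor of $2$ in the choice $L=g(\epsilon,2\delta/(K(K-1)))$.
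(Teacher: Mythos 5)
Your proposal is correct and follows essentially the same route as the paper's proof: apply Lemma \ref{lem:deltaconcentrate} with confidence parameter $2\delta/(K(K-1))$, take a union bound over all pairs, verify Conditions a and b on the resulting event via the definition of $\theta_{ij}$, and conclude with Lemma \ref{lem:identification}. Your explicit remark that the concentration event is symmetric in $i,j$ (so that $\binom{K}{2}$ pairs suffice, explaining the factor of $2$) makes precise a bookkeeping step the paper leaves implicit.
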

Theorem \ref{thm:samplecomp} gives an upper bound on the sample complexity that scales as $\tilde{O}(K \beta^2/\epsilon^2)$ which nearly matches the lower bound in Theorem~\ref{lowerboundthm_new}.\footnote{Indeed, the sample complexity is $O( K \beta^2 \log(K D / \delta) / \epsilon^2)$.} It can also be shown that, when $D=1$, the sample complexity of na\"{i}ve elimination matches the one given in \citet{even2006action}.

{Let us provide a sketch of the proof of Theorem \ref{thm:samplecomp}. Given $i,j\in[K]$, let us introduce empirical estimates for $m(i,j)$ and $M(i,j)$, defined as $\hat{m}(i,j) := d(\hat{\bs{\mu}}_j - \hat{\bs{\mu}}_i, (\Int(C))^c\cap (\hat{\bs{\mu}}_j-\hat{\bs{\mu}}_i-C))$ and $\hat{M}(i,j) := d(\hat{\bs{\mu}}_j - \hat{\bs{\mu}}_i, C\cap (\hat{\bs{\mu}}_j-\hat{\bs{\mu}}_i+C))$, respectively. As a first step, we show that $M(i,j)> \epsilon$ implies $\hat{M}(i,j)>0$ whenever $i\in P^\ast$, $j\in [K] \setminus \{ i \}$, as well as that $m(i,j) > \epsilon$ implies $\hat{m}(i,j) >0$ whenever $i \notin P^\ast$, $j \in P^\ast$. In the second step, we show that these implications are sufficient to conclude that $P$ is an $(\epsilon,\delta)$-PAC Pareto set. Then, for $i,j\in [K]$ with $i\neq j$, we define $\bs{\Delta}_{ij} = \bs{\mu}_j - \bs{\mu}_i$, $\hat{\bs{\Delta}}_{ij} = \hat{\bs{\mu}}_j - \hat{\bs{\mu}}_i$, and show that the following conditions in terms of the deviation of $\hat{\bs{\Delta}}_{ij}$ from $\bs{\Delta}_{ij}$ are sufficient for $(\epsilon,\delta)$-PAC Pareto set identification: \\
	\textbf{Condition a.} \emph{For every $i\in P^\ast$ and $j\in [K]\setminus\{i\}$, $d(\bs{\Delta}_{ij},C\cap (\bs{\Delta}_{ij}+C))>\epsilon$ implies that $\|\hat{\bs{\Delta}}_{ij}-\bs{\Delta}_{ij}\|_2< d(\bs{\Delta}_{ij},C)$.}\\
	\textbf{Condition b.} \emph{For every $i\notin P^\ast$ and $j\in P^\ast$, $d(\bs{\Delta}_{ij},(\Int(C))^c\cap (\bs{\Delta}_{ij}-C))>\epsilon$ implies that $\|\hat{\bs{\Delta}}_{ij}-\bs{\Delta}_{ij}\|_2< d(\bs{\Delta}_{ij},(\Int(C))^c)$.}
}\\
Let us also define $\theta_{ij}:=\frac{d(\bs{\Delta}_{ij},C)}{d(\bs{\Delta}_{ij},C\cap(\bs{\Delta}_{ij}+C))}$ if $\bs{\Delta}_{ij}\notin C$, $\theta_{ij}:= \frac{d(\bs{\Delta}_{ij},(\Int(C))^c)}{d(\bs{\Delta}_{ij},(\Int(C))^c \cap(\bs{\Delta}_{ij}-C))}$ if $\bs{\Delta}_{ij}\in \Int(C)$, and $\theta_{ij}:=1$ if $\bs{\Delta}_{ij}\in\bd(C)$. In the third step, using properties of norm-subgaussian noise vectors, we show the existence of a constant $c>0$ (free of all problem parameters) such that, for all $i\neq j$, $||   \hat{\bs{\Delta}}_{ij} - \bs{\Delta}_{ij}   ||_2 \leq \epsilon \theta_{ij}$ with probability at least $1-\delta$ provided that na\"{i}ve elimination is run with $L= g(\epsilon,2\delta/(K (K-1))$. Finally, we complete the proof by checking that Conditions a and b hold with probability at least $1-\delta$.


\section{NUMERICAL RESULTS}\label{sec:experiment}

We use SNW dataset from \citet{zuluaga2016varepsilon}. It consists of $206$ different hardware implementations of a sorting network. The objectives are the area and throughput of the network when synthesized on an FPGA ($D=2$). Since we consider maximization problems, we use the negative of area as objective value. The mean rewards of designs are taken as the objective values in the dataset. The reward vector of a design is formed by adding independent zero mean Gaussian noise with variance $\sigma^2 = 1$ to the mean value of each objective of the design. 

We consider the polyhedral cone $C_\theta$ in Example \ref{ex:2Dsmall}, which is parametrized by an angle $\theta \in (0, \pi)$.
We use $C_{\pi/4}$, $C_{\pi/2}$ and $C_{3\pi/4}$ in our simulations (Figure \ref{fig:cones}). $P^*_{\theta}$ represents the true Pareto optimal set under the ordering induced by $C_{\theta}$. The Pareto sets for $C_{\pi/4}$, $C_{\pi/2}$ and $C_{3\pi/4}$ are shown in Figure \ref{fig:paretosets}. Some statistics of the gaps $\Delta^*_i$ of designs $i \in [K]  \setminus P^*_{\theta}$ are given in Table \ref{tbl:gapsofsuboptimals}.

\begin{table}
	\centering
	\caption{Gap Statistics of Designs $i \in [K]  \setminus P^*_{\theta}$.}
	\label{tbl:gapsofsuboptimals}
	\begin{tabular}{lrrr}
		\toprule
		{} & $\Delta^*_{i}$ ($C_{\pi/4}$) &  $\Delta^*_{i}$ ($C_{\pi/2}$)  &  $\Delta^*_{i}$ ($C_{3\pi/4}$) \\
		\midrule
		count &  153  &   180 &    196 \\
		mean  &   0.666 &     0.906 &      1.343 \\
		std   &     0.520 &     0.711 &      0.869 \\
		min   &    0.001 &     0.004 &      0.018 \\
		max   &     2.576 &     3.544 &      4.726 \\
		\bottomrule
	\end{tabular}
\end{table}

\begin{table}
	\centering
	\caption{Success Rate (\%) of Na\"{i}ve Elimination.}
	\label{tbl:ressults}
	\begin{tabular}{llrrr}
		\toprule
		$L$ & $\epsilon$  &  $C_{\pi/4}$ &  $C_{\pi/2}$ &  $C_{3\pi/4}$   \\
		\midrule
		\multirow{3}{*}{$10^3$} 
		& $10^{-2}$ &   0 &   0 &   29 \\
		& $10^{-1}$ &  78 &  99 &  100 \\
		\cline{1-5} \\ [-1em]
		\multirow{3}{*}{$10^4$} 
		& $10^{-2}$ &  22 &  24 &   85 \\
		& $10^{-1}$ & 100 & 100 &  100 \\
		\cline{1-5} \\ [-1em]
		\multirow{3}{*}{$10^5$} 
		& $10^{-2}$ & 100 &  99 &  100 \\
		& $10^{-1}$ & 100 & 100 &  100 \\
		\bottomrule
	\end{tabular}
\end{table}

The simulation code is available in the supplemental material. As there is no work that considers vector optimization with stochastic bandit feedback, we do not compare with any other method. Instead, we illustrate how the performance of na\"{i}ve elimination varies as a function of the number of samples from each design and shape of the ordering cone. We set $\delta = 0.01$ in all simulations. The reported results correspond to the average of $100$ independent runs. As typically observed in the best arm identification literature, the theoretical value of $L$ is very large. For instance, for $\theta = \pi/2$, if $\epsilon=0.1$, then $L \approx 38.8 \times 10^3$; if $\epsilon=0.01$, then $L \approx 38.8 \times 10^5$. Therefore, instead of using theoretical values, we evaluate the results for different $L$ and $\epsilon$ values. The results are provided in Table \ref{tbl:ressults}. For fixed $L$ and $\epsilon$, the success rate increases as $\theta$ increases. For fixed $\theta$ and $L$, the success rate increases as $\epsilon$ increases because both success conditions become easier to satisfy. For a fixed $\theta$ and $\epsilon$, the success rate increases as $L$ increases since more samples mean less noise in the estimates. For some $(\epsilon,\theta)$ pairs, all runs are successful even when $L$ is much smaller than its theoretical value.

\section{CONCLUSIONS}\label{sec:conc}

We propose vector optimization problems with stochastic bandit feedback. We identify fundamental cone-dependent gaps that characterize the learning difficulty of the Pareto set. We derive sample complexity bounds for $(\epsilon,\delta)$-PAC Pareto set identification. Our introduction of noisy bandit feedback to vector optimization brings forth many interesting future research directions. In particular, the design of sample-efficient adaptive algorithms remains as an open problem that we seek to address in the future. For instance, when learning the Pareto set in large design spaces, one can incorporate powerful surrogate models such as Gaussian processes. It should also be possible to extend entropy search \citep{hernandez2016predictive} and hypervolume-based methods \citep{shah2016pareto} proposed for multi-objective Pareto set identification problems to vector optimization.

\subsubsection*{Acknowledgements}

This work is supported by the Scientific and Technological Research Council of Turkey (T\"{U}B{\.I}TAK) under Grant 121E159.

\newpage

\bibliographystyle{abbrvnat}
\bibliography{references.bib}

\newpage

\appendix
\onecolumn

\section{TABLE OF NOTATIONS}

\begin{table*}[h!]
	\centering
	\fontsize{9}{9}\selectfont
		\begin{tabular}{ l l }
			\toprule
			\textbf{Notation} & \textbf{Description} \\ [.7ex]
			\midrule
			$D$ & Number of objectives  \\ [.7ex]
			$K$ & Number of designs  \\ [.7ex]
			$[K] = \{1,\ldots,K\}$ & Design space \\ [.7ex]
			$\bs{\mu}_i \in \mathbb{R}^D$ & Mean reward vector of design $i\in[K]$ \\ [.7ex]
			$C \subseteq \mathbb{R}^D$ & Ordering cone \\ [.7ex]
			$\Int(C)$ & Interior of $C$ \\ [.7ex]
			$C^{+} = \{ \bs{w} \in \mathbb{R}^D \mid \forall \bs{x} \in C : \bs{w}^{\mathsf{T}} \bs{x} \geq 0 \}$ & Dual cone of $C$ \\ [.7ex]
			$W = [\bs{w}_1, \ldots, \bs{w}_N]^{\mathsf{T}}$ & $N \times D$ parameter matrix of the polyhedral ordering cone $C$ \\ [.7ex]
			$\mathbb{R}^D_{+}$ & Positive orthant in $\mathbb{R}^D$  \\ [.7ex]
			$B(\bs{v},r)$ & Closed ball in $\mathbb{R}^D$ with center $\bs{v}\in\R^D$ and radius $r\geq 0$ \\ [.7ex]
			$\preceq_{C}$, $\prec_{C}$ & Partial order relations on $\R^D$ and on $[K]$ given in Definitions~\ref{defn:cone} and \ref{defn:partialorder} \\ [.7ex]
			$d(\bs{v},r) = \inf_{\bs{x} \in A} ||\bs{v} - \bs{x}||_2$ & Distance of vector $\bs{v} \in \mathbb{R}^D$ to set $A \subseteq \mathbb{R}^D$ \\ [.7ex]
			$\beta = \max\{ \beta_1, \beta_2 \}$ & Ordering complexity of cone $C$ ($\beta_1$, $\beta_2$ are defined in \eqref{eq:Csup1}, \eqref{eq:Csup2}, respectively.) \\ [.7ex]
			$\alpha_n = \sup_{\bs{u} \in B(\bs{0},1) \cap C} \bs{w}^\mathsf{T}_n \bs{u}$ & \\ [.7ex]
			$P^*$ & The (true) Pareto set according to cone $C$ \\ [.7ex]
			$m(i,j)$ & Minimum increment in $\bs{\mu}_i$ in an arbitrary direction of increase that makes design $i\in[K]$ \\
			& not strongly dominated by design $j\in[K]$ \\ [.7ex]
			$M(i,j)$ & Minimum increment in $\bs{\mu}_j$ in an arbitrary direction of increase that makes design $i\in[K]$  \\
			& weakly dominated by design $j\in[K]$ \\ [.7ex]
			$\Delta^*_i = \max_{j \in P^*} m(i,j)$ & \\ [.7ex]
			$\sigma$ & Subgaussian parameter of observation noise \\ [.7ex]
			$I_t$ & Design evaluated in round $t\in[T]$ in Algorithm~\ref{alg:algo} \\ [.7ex]
			$\hat{\bs{\mu}}_i $ & Empirical mean of the random reward of design $i\in[K]$ in Algorithm~\ref{alg:algo}\\ [.7ex]
			$P$ & Set of designs returned by Algorithm~\ref{alg:algo} \\ [.7ex]
			\bottomrule
		\end{tabular}
	\end{table*}


\section{SUPPLEMENTAL PROOFS} \label{sec:suppproof}

\subsection{Detailed version of Example \ref{ex:2Dsmall}} \label{sec:detailedexample}
\revc{We take $D=2$. Given $\bs{x}\in\R^2$, let $\alpha(\bs{x})\in [0,2\pi)$ denote the angle in the polar coordinates of $\bs{x}$. Let $\theta\in (0,\pi/2]$ and define the ordering cone $C_\theta:=\{\bs{x}\in\R^2 \mid \alpha(\bs{x})\in [\pi/4-\theta/2,\pi/4+\theta/2]\}$. Let $\bs{x}\notin C_\theta$. Using elementary planar geometry, it can be checked that $C_\theta\cap (\bs{x}+C_\theta)=C_\theta$ so that $d(\bs{x},C_\theta)=d(\bs{x},C_\theta\cap (\bs{x}+C_\theta))$ if $\alpha(\bs{x})\in [5\pi/4-\theta/2, 5\pi/4+\theta/2]$. If $\alpha(\bs{x})\in [0,\pi/4-\theta/2)\cup (\pi/4+\theta/2,3\pi/4+\theta/2]\cup [7\pi/4-\theta/2,2\pi)$, then $d(\bs{x},C_\theta\cap(\bs{x}+C_\theta))=d(\bs{x},C_\theta)\csc(\theta)$. If $\alpha(\bs{x})\in (3\pi/4+\theta/2,5\pi/4-\theta/2)$, then $d(\bs{x},C_\theta\cap(\bs{x}+C_\theta))=d(\bs{x},C_\theta)\csc(\theta)\sin(5\pi/4+\theta/2-\alpha(\bs{x}))\leq d(\bs{x},C_\theta)\csc(\theta)$. Similarly, if $\alpha(\bs{x})\in (5\pi/4+\theta/2,7\pi/4-\theta/2)$, then $d(\bs{x},C_\theta\cap(\bs{x}+C_\theta))=d(\bs{x},C_\theta)\csc(\theta)\sin(\alpha(\bs{x})-5\pi/4+\theta/2)\leq d(\bs{x},C_\theta)\csc(\theta)$. Hence, $\beta_1=\csc(\theta)$. By a similar calculation, it can be checked that $\beta_2=\csc(\theta)$ as well. If we take $\theta\in (\pi/2,\pi)$, then $\beta_1=\beta_2=1$ by Theorem \ref{thm:beta}(ii).} 

\subsection{Proof of Theorem \ref{thm:beta} and additional results for Section \ref{sec:prelim}}

\subsubsection{Computation of $\beta_1$ and $\beta_2$when the ordering cone is large}

Recall the definitions of $\beta_1,\beta_2$ in \eqref{eq:Csup2}, which depend on the geometry of the ordering cone $C$. We start by investigating the special case where the ordering cone is at least as large as the positive orthant \revc{(or a rotation of it)}.

\begin{proposition}\label{prop:conjCsup1}
	Suppose that $\bs{w}_n^{\mathsf{T}}\bs{w}_k\geq 0$ for each $n,k\in[N]$. Then, $d(\bs{x},C\cap (\bs{x}+C))=d(\bs{x},C)$ for every $\bs{x}\notin C$. In particular, $\beta_1=1$.
\end{proposition}

\begin{proof}
	Let us fix $\bs{x}\notin C$. Note that $C=\{\bs{y}\in\R^D\mid W\bs{y}\geq 0\}$ and $\bs{x}+C=\{\bs{y}\in\R^D\mid W\bs{y}\geq W\bs{x}\}$. Hence, $C\cap(\bs{x}+C)=\{\bs{y}\in\R^D\mid W\bs{y}\geq (W\bs{x})^+\}$. Since $\bs{x}\notin C$, there exists $n\in[N]$ such that $\bs{w}_n^{\mathsf{T}}\bs{x}<0$. Let $I(\bs{x}):=\{n\in[N]\mid \bs{w}_n^{\mathsf{T}}\bs{x}\leq 0\}\neq \emptyset$. We have $(\bs{w}_n^{\mathsf{T}}\bs{x})^+=0$ for each $n\in I(\bs{x})$ and $(\bs{w}_n^{\mathsf{T}}\bs{x})^+=\bs{w}_n^{\mathsf{T}}\bs{x}>0$ for each $n\in I(\bs{x})^c:=[N]\setminus I(\bs{x})$. Hence, $d(\bs{x},C\cap (\bs{x}+C))^2$ can be written as the optimal value of a quadratic optimization problem as follows:
	\begin{equation}\label{dopt1}
		d(\bs{x},C\cap(\bs{x}+C))^2=\inf\{\|\bs{y}-\bs{x}\|^2_2\mid \forall n\in I(\bs{x})\colon\bs{w}_n^{\mathsf{T}}\bs{y}\geq 0,\ \forall n\in I(\bs{x})^c\colon \bs{w}_n^{\mathsf{T}}\bs{y}\geq \bs{w}_n^{\mathsf{T}}\bs{x}\}~.
	\end{equation}
	As this is a convex optimization problem with affine constraints and finite optimal value, the standard Karush-Kuhn-Tucker conditions are necessary and sufficient for optimality. Hence, a vector $\bs{y}\in \R^D$ that satisfies the constraints of \eqref{dopt1} is optimal if and only if there exists a Lagrange multiplier vector $\bs{\lambda}\in \R^N_+$ such that the following conditions are satisfied:
	\begin{align}
		&2(\bs{y}-\bs{x})-\sum_{n\in [N]}\lambda_n\bs{w}_n =0~,\label{FOC}\\
		&\forall n\in I(\bs{x})\colon \lambda_n\bs{w}_n^{\mathsf{T}}\bs{y}=0~,\label{cs1}\\
		&\forall n\in I(\bs{x})^c\colon \lambda_n(\bs{w}_n^{\mathsf{T}}\bs{y}-\bs{w}_n^{\mathsf{T}}\bs{x}) =0~.\label{cs2}
	\end{align}
	Here, condition \eqref{FOC} is the first order condition for the Lagrangian of \eqref{dopt1} with respect to the primal variable $\bs{y}$; conditions \eqref{cs1} and \eqref{cs2} are the complementary slackness conditions.
	
	Let $\bs{y}\in\R^D$ be an optimal solution of \eqref{dopt1} with an associated Lagrange multiplier vector $\bs{\lambda}\in\R^N_+$. We claim that $\lambda_n=0$ for every $n\in I(\bs{x})^c$. To get a contradiction, suppose that $\lambda_{\bar{n}}>0$ for some $\bar{n}\in I(\bs{x})^c$. By \eqref{cs2}, we have $\bs{w}_{\bar{n}}^{\mathsf{T}}(\bs{y}-\bs{x})=\bs{w}_{\bar{n}}^{\mathsf{T}}\bs{y}-\bs{w}_{\bar{n}}^{\mathsf{T}}\bs{x}=0$. On the other hand, $\bs{y}-\bs{x}=\frac12 \sum_{n\in[N]}\lambda_n\bs{w}_n$ by \eqref{FOC}. Combining these, we get
	\begin{equation}\label{eq:claim}
		\sum_{n\in[N]}\lambda_{n}\bs{w}_{\bar{n}}^{\mathsf{T}}\bs{w}_n=0~.
	\end{equation}
	Note that $\bs{w}_n\in C^+$ for each $n\in[N]$. By the assumption on $\bs{w}_1,\ldots,\bs{w}_N$, \eqref{eq:claim} implies that $\lambda_n\bs{w}_{\bar{n}}^{\mathsf{T}}\bs{w}_n= 0$ for each $n\in [N]$. In particular, taking $n=\bar{n}$ gives $\lambda_{\bar{n}}\|\bs{w}_{\bar{n}}\|^2_2=\lambda_{\bar{n}}=0$, which is a contradiction. Therefore, the claim holds.
	
	Thanks to the above claim, the pair $(\bs{y},\bs{\lambda})$ satisfies the system
	\begin{align}
		&2(\bs{y}-\bs{x})-\sum_{n\in [N]}\lambda_n\bs{w}_n =0~,\label{FOCden}\\
		&\forall n\in [N]\colon \lambda_n\bs{w}_n^{\mathsf{T}}\bs{y}=0~.\label{csden}
	\end{align}
	Moreover, $\bs{w}_n^{\mathsf{T}}\bs{y}\geq 0$ for each $n\in [N]$ by the feasibility of $\bs{y}$ for \eqref{dopt1}. Hence, Karush-Kuhn-Tucker conditions are established for the quadratic optimization problem
	\begin{equation}\label{dopt2}
		d(\bs{x},C)^2=\inf\{\|\bs{y}-\bs{x}\|_2^2\mid \forall n\in [N]\colon \bs{w}_n^{\mathsf{T}}\bs{y}\geq 0\}
	\end{equation}
	and we conclude that $\bs{y}$ is optimal for \eqref{dopt2}. Therefore, $d(\bs{x},C)=d(\bs{x},C\cap(\bs{x}+C))=\|\bs{y}-\bs{x}\|_2$.
\end{proof}

\begin{proposition}\label{prop:conjCsup2}
	Suppose that $\bs{w}_n^{\mathsf{T}}\bs{w}_k\geq 0$ for each $n,k\in[N]$. Then, $d(\bs{x},(\Int(C))^c\cap (\bs{x}-C))=d(\bs{x},(\Int(C))^c)$ for every $\bs{x}\in \Int(C)$. In particular, $\beta_2=1$.
\end{proposition}

\begin{proof}
	Let us fix $\bs{x}\in\Int(C)$. We have $\bs{x}-C=\{\bs{y}\in\R^D\mid \forall n\in[N]\colon \bs{w}_n^{\mathsf{T}}\bs{x}\geq \bs{w}_n^{\mathsf{T}}\bs{y}\}$. Moreover, we have $(\Int(C))^c=\{\bs{y}\in\R^D\mid \exists k\in[N]\colon \bs{w}_k^{\mathsf{T}}\bs{y}\leq 0\}$. Hence,
	\[
	(\Int(C))^c\cap (\bs{x}-C)=\bigcup_{k\in[N]}\{\bs{y}\in\R^D\mid \forall n\in[N]\colon 
	\bs{w}_n^{\mathsf{T}}\bs{x}\geq \bs{w}_n^{\mathsf{T}}\bs{y},\ \bs{w}_k^{\mathsf{T}}\bs{y}\leq 0\}~,
	\]
	which is a finite union of convex polyhedra. In particular, we may write
	\begin{align}\label{eq:b2problemf}
		& d(\bs{x},(\Int(C))^c\cap (\bs{x}-C))^2 =\min_{k\in[N]}f_k(\bs{x})~, \\
		&\text{ where } f_k(\bs{x})\coloneqq \inf\{\|\bs{y}-\bs{x}\|_2^2\mid \forall n\in[N]\colon 
		\bs{w}_n^{\mathsf{T}}\bs{x}\geq \bs{w}_n^{\mathsf{T}}\bs{y},\ \bs{w}_k^{\mathsf{T}}\bs{y}\leq 0\}~. \notag
	\end{align}
	Similarly, we have $\Int(C)=\bigcup_{k\in[N]}\{\bs{y}\in\R^D\mid \bs{w}_k^{\mathsf{T}}\bs{y}\leq 0\}$ so that
	\begin{equation}\label{eq:b2problemg}
		d(\bs{x},(\Int(C))^c)^2=\min_{k\in[N]}g_k(\bs{x}), \text{ where }g_k(\bs{x})\coloneqq \inf\{\|\bs{y}-\bs{x}\|_2^2\mid \bs{w}_k^{\mathsf{T}}\bs{y}\leq 0\}~.
	\end{equation}
	Let us fix $k\in[N]$. Note that the optimization problems that define $f_k(\bs{x}), g_k(\bs{x})$ are quadratic and Karush-Kuhn-Tucker conditions characterize optimality. Let $\bs{y}\in \R^D$ be optimal for the calculation of $f_k(\bs{x})$. Hence, there exists $\bs{\lambda}\in\R^{N+1}_+$ such that the following conditions are satisfied:
	\begin{align}
		&2(\bs{y}-\bs{x})+\sum_{n\in [N]}\lambda_n\bs{w}_n +\lambda_{N+1}\bs{w}_k=0~,\label{FOCb2}\\
		&\forall n\in [N]\colon \lambda_n\bs{w}_n^{\mathsf{T}}(\bs{y}-\bs{x})=0~,\label{cs1b2}\\
		& \lambda_{N+1}\bs{w}_k^{\mathsf{T}}\bs{y}=0~.\label{cs2b2}
	\end{align}
	We claim that $\lambda_n=0$ for each $n\in[N]$. To see this, suppose that $\lambda_{\bar{n}}>0$ for some $\bar{n}\in[N]$. Then, \eqref{cs1b2} implies that $\bs{w}_{\bar{n}}^{\mathsf{T}}(\bs{y}-\bs{x})=0$. Combining this with \eqref{FOCb2}, we get
	\[
	\bs{w}_{\bar{n}}^{\mathsf{T}}(\bs{x}-\bs{y})=\frac12\Big(\sum_{n\in [N]\setminus\{k\}}\lambda_n \bs{w}_{\bar{n}}^{\mathsf{T}}\bs{w}_n+(\lambda_{N+1}+\lambda_k)\bs{w}_{\bar{n}}^{\mathsf{T}}\bs{w}_k \Big)=0~.
	\]
	By the assumption on $\bs{w}_1,\ldots,\bs{w}_N$, we have $\bs{w}_{\bar{n}}^{\mathsf{T}}\bs{w}_n\geq 0$ for every $n\in [N]$. Hence, all terms in the above sum are nonnegative. It follows that these terms are indeed zero, that is,
	\[
	\forall n\in [N]\setminus\{k\}\colon \lambda_n\bs{w}_{\bar{n}}^{\mathsf{T}}\bs{w}_n=0~,\quad (\lambda_{N+1}+\lambda_k)\bs{w}_{\bar{n}}^{\mathsf{T}}\bs{w}_k=0~.
	\]
	If $\bar{n}\neq k$, then we have $\lambda_{\bar{n}}\bs{w}_{\bar{n}}^{\mathsf{T}}\bs{w}_{\bar{n}}=\lambda_{\bar{n}}\|\bs{w}_{\bar{n}}\|_2^2=\lambda_{\bar{n}}=0$, which is a contradiction. If $\bar{n}=k$, then we have $(\lambda_{N+1}+\lambda_{\bar{n}})\bs{w}_{\bar{n}}^{\mathsf{T}}\bs{w}_{\bar{n}}=(\lambda_{N+1}+\lambda_{\bar{n}})\|\bs{w}_{\bar{n}}\|_2^2=\lambda_{N+1}+\lambda_{\bar{n}}=0$ so that $\lambda_{N+1}=\lambda_{\bar{n}}=0$, which is also a contradiction. Hence, the claim holds.
	
	As a consequence of the claim, \eqref{FOCb2} and \eqref{cs2b2} yield the equations
	\begin{align}
		& 2(\bs{y}-\bs{x})+\lambda_{N+1}\bs{w}_k=0~, \label{FOCdenb2}\\
		& \lambda_{N+1}\bs{w}_k^{\mathsf{T}}\bs{y}=0~,\label{csdenb2}
	\end{align}
	which establish Karush-Kuhn-Tucker conditions for the calculation of $g_k(\bs{x})$. In particular, $f_k(\bs{x})=g_k(\bs{x})=\|\bs{y}-\bs{x}\|_2^2$. It follows that $d(\bs{x},(\Int(C))^c\cap (\bs{x}-C))=d(\bs{x},(\Int(C))^c)$.
\end{proof}

\begin{corollary}\label{cor:conjCsup}
	(Theorem \ref{thm:beta}(ii)) Suppose that $C\supseteq \R^D_+$. Then, $\beta_1=\beta_2=1$.
\end{corollary}

\begin{proof}
	Since $C\supseteq \R^D_+$, it follows from the definition in \eqref{eq:dualcone} that $C^+\subseteq \R^D_+$. Since $\bs{w}_1,\ldots,\bs{w}_N\in C^+$, we have $\bs{w}_{n}^{\mathsf{T}}\bs{w}_k\geq 0$ for each $n,k\in [N]$. Propositions \ref{prop:conjCsup1} and \ref{prop:conjCsup2} yield the result.
\end{proof}

\subsubsection{Computation of $\beta_1$ for general ordering cones}

\revc{In the general case, without any additional assumptions on the ordering cone $C$ (besides polyhedrality), we will prove that $\beta_1$ can be calculated by solving a global optimization problem over a compact set. We begin with a duality lemma.
	
	\begin{lemma}\label{lem:beta1dual}
		For each $\bs{x}\in \R^D$, we have
		\begin{equation}
			d(\bs{x},(\bs{x}+C)\cap C)=\sup\Bigg\{\sum_{n\in[N]}\lambda_n(\bs{w}_n^{\mathsf{T}}\bs{x})^-\mid \Big\|\sum_{n\in[N]}\lambda_n\bs{w}_n\Big\|_2\leq 1~,\ \bs{\lambda}\in\R^N_+\Bigg\},\label{eq:beta1dual}
		\end{equation}
		In particular, $\bs{x}\mapsto d(\bs{x},(\bs{x}+C)\cap C)$ 
		is a continuous convex function on $\R^D$.
	\end{lemma}
	
	\begin{proof}
		Let $\bs{x}\in \R^D$. Recall that 
		\[
		d(\bs{x},(\bs{x}+C)\cap C)=\inf\{\|\bs{y}-\bs{x}\|_2\mid \forall n\in[N]\colon \bs{w}_n^{\mathsf{T}}\bs{y}\geq (\bs{w}_n^{\mathsf{T}}\bs{x})^+\}~.
		\]
		By strong duality for convex optimization, we have
		\[
		d(\bs{x},(\bs{x}+C)\cap C)=\sup_{\bs{\lambda}\in\R^N_+}h_1(\bs{\lambda})~,
		\]
		where
		\[
		h_1(\bs{\lambda}):=\inf_{\bs{y}\in\R^D}\Big(\|\bs{y}-\bs{x}\|_2-\sum_{n\in[N]}\lambda_n\bs{w}_n^{\mathsf{T}}\bs{y}+\sum_{n\in[N]}\lambda_n(\bs{w}_n^{\mathsf{T}}\bs{x})^+\Big)~.
		\]
		With a change-of-variables through $\bs{u}=\bs{y}-\bs{x}$, we get
		\begin{align*}
			h_1(\bs{\lambda})&=\inf_{\bs{u}\in\R^D}\Big(\|\bs{u}\|_2-\sum_{n\in[N]}\lambda_n\bs{w}_n^{\mathsf{T}}(\bs{u}+\bs{x})+\sum_{n\in[N]}\lambda_n(\bs{w}_n^{\mathsf{T}}\bs{x})^+\Big)\\
			&=\inf_{\bs{u}\in\R^D}\Big(\|\bs{u}\|_2-\sum_{n\in[N]}\lambda_n\bs{w}_n^{\mathsf{T}}\bs{u}\Big)+\sum_{n\in[N]}\lambda_n(\bs{w}_n^{\mathsf{T}}\bs{x})^-\\
			&=\begin{cases}\sum_{n\in[N]}\lambda_n(\bs{w}_n^{\mathsf{T}}\bs{x})^-&\text{if }\big\|\sum_{n\in[N]}\lambda_n\bs{w}_n\big\|_2\leq 1~,\\ -\infty &\text{else}~,\end{cases}
		\end{align*}
		where the last equality involves the routine calculation of the conjugate of the $\ell_2$-norm. Hence, \eqref{eq:beta1dual} follows. Clearly, $\bs{x}\mapsto \sum_{n\in[N]}\lambda_n(\bs{w}_n^{\mathsf{T}}\bs{x})^-$ is a convex function for every $\bs{\lambda}\in \R^N_+$. Then, by \eqref{eq:beta1dual}, $\bs{x}\mapsto d(\bs{x},(\bs{x}\cap C)\cap C)$ is convex as a supremum of convex functions. This function is also finite by definition. Hence, by \citet[Corollary 10.1.1]{rockafellar}, it is continuous as a convex finite function on $\R^D$.
	\end{proof}

	\begin{proposition}\label{prop:beta1comp}
		It holds
		\begin{equation}\label{eq:beta1comp}
			\beta_1=\sup\{d(-\bs{w},(-\bs{w}+C)\cap C)\mid \|\bs{w}\|_2=1,\ \bs{w}\in C^+\}~.
		\end{equation}
		In particular, $\beta_1<+\infty$.
	\end{proposition}
	
	\begin{proof}
		Let $\bs{p}_C\colon\R^D\to C$ be the projection mapping onto $C$, that is, for each $\bs{x}\in\R^D$, $\bs{p}_C(\bs{x})\in\R^D$ is the unique point in $C$ for which $d(\bs{x},C)=\|\bs{p}_C(\bs{x})-\bs{x}\|_2$. Clearly, for $\bs{x}\in C^c$, we have $\bs{p}_C(\bs{x})\in\bd(C)$.
		
		Let $\bs{x}\in C^c$. Then, \eqref{FOCden}, \eqref{csden} hold with $\bs{y}=\bs{p}_C(\bs{x})$ for some $\bs{\lambda}=\bs{\lambda}^L\in\R^N_+$. Let us define $\bar{\bs{x}}:=\frac{1}{d(\bs{x},C)}\bs{x}$. Note that $\bs{w}_n^{\mathsf{T}}\bar{\bs{x}}=\frac{1}{d(\bs{x},C)}\bs{w}_n^{\mathsf{T}}\bs{x}$ for every $n\in[N]$. Hence, $I(\bs{x})=I(\bar{\bs{x}})$. Let us define $\bar{\bs{y}}_L:=\frac{1}{d(\bs{x},C)}\bs{p}_C(\bs{x})$ and $\bar{\bs{\lambda}}^L:=\frac{1}{d(\bs{x},C)}\bs{\lambda}^L$. By \eqref{FOCden} for $(\bs{p}_C(\bs{x}),\bs{\lambda}^L)$, we have
		\begin{equation}\label{yLKKT1}
			\bar{\bs{y}}_L-\bar{\bs{x}}=\frac{1}{d(\bs{x},C)}(\bs{p}_C(\bs{x})-\bs{x})=\frac{1}{2}\sum_{n\in[N]}\frac{\lambda_n^L}{d(\bs{x},C)}\bs{w}_n=\frac12\sum_{n\in[N]}\bar{\lambda}_n^L\bs{w}_n~.
		\end{equation}
		Moreover, for each $n\in[N]$, by \eqref{csden} for $(\bs{p}_C(\bs{x}),\bs{\lambda}^L)$, we have
		\begin{equation}\label{yLKKT2}
			\bar{\lambda}_n^L\bs{w}_n^{\mathsf{T}}\bar{\bs{y}}=\frac{\lambda_n^L}{d(\bs{x},C)^2}\bs{w}_n^{\mathsf{T}}\bs{p}_C(\bs{x})=0~.
		\end{equation}
		Hence, \eqref{yLKKT1} and \eqref{yLKKT2} establish Karush-Kuhn-Tucker conditions for $(\bar{\bs{y}}_L,\bar{\bs{\lambda}}^L)$ so that $\bar{\bs{y}}_L=\bs{p}_C(\bar{\bs{x}})$ and 
		\[
		d(\bar{\bs{x}},C)=\|\bar{\bs{y}}_L-\bar{\bs{x}}\|_2=\frac{1}{d(\bs{x},C)}\|\bs{p}_C(\bs{x})-\bs{x}\|_2=1~.
		\]
		
		Next, note that \eqref{FOC}, \eqref{cs1}, \eqref{cs2} hold for some $\bs{y}=\bs{y}_U\in\R^D$, $\bs{\lambda}=\bs{\lambda}^U\in\R^N_+$. Let us define $\bar{\bs{y}}_U:=\frac{1}{d(\bs{x},C)}\bs{y}_U$ and $\bar{\bs{\lambda}}^U:=\frac{1}{d(\bs{x},C)}\bs{\lambda}^U$. By \eqref{FOC} for $(\bs{y}_U,\bs{\lambda}^U)$, we have
		\begin{equation}\label{yUKKT1}
			\bar{\bs{y}}_U-\bar{\bs{x}}=\frac{1}{d(\bs{x},C)}(\bs{y}_U-\bs{x})=\frac12\sum_{n\in[N]}\frac{\lambda_n^U}{d(\bs{x},C)}\bs{w}_n=\frac12\sum_{n\in[N]}\bar{\lambda}^U_n\bs{w}_n~.
		\end{equation}
		Let $n\in I(\bar{\bs{x}})=I(\bs{x})$. Then, by \eqref{cs1} for $(\bs{y}_U,\bs{\lambda}^U)$, we have
		\begin{equation}\label{yUKKT2}
			\bar{\lambda}^U_n\bs{w}_n^{\mathsf{T}}\bar{\bs{y}}^U=\frac{\lambda_n^U}{d(\bs{x},C)^2}\bs{w}_n^{\mathsf{T}}\bs{y}_U=0~.
		\end{equation}
		Let $n\in I(\bar{\bs{x}})^c=I(\bs{x})^c$. Then, by \eqref{cs2} for $(\bs{y}_U,\bs{\lambda}^U)$, we have
		\begin{equation}\label{yUKKT3}
			\bar{\lambda}^U_n\bs{w}_n^{\mathsf{T}}(\bar{\bs{y}}^U-\bar{\bs{x}})=\frac{\lambda_n^U}{d(\bs{x},C)^2}\bs{w}_n^{\mathsf{T}}(\bs{y}_U-\bs{x})=0~.
		\end{equation}
		Hence, \eqref{yUKKT1}, \eqref{yUKKT2}, \eqref{yUKKT3} establish Karush-Kuhn-Tucker conditions for $(\bar{\bs{y}}_U,\bar{\bs{\lambda}}^U)$ so that
		\[
		d(\bar{\bs{x}},(\bar{\bs{x}}+C)\cap C)=\|\bar{\bs{y}}_U-\bar{\bs{x}}\|_2=\frac{\|\bs{y}_U-\bs{x}\|_2}{d(\bs{x},C)}=\frac{d(\bs{x},(\bs{x}+C)\cap C)}{d(\bs{x},C)}~.
		\]
		Since we also have $d(\bar{\bs{x}},C)=1$, we obtain
		\[
		\frac{d(\bs{x},(\bs{x}+C)\cap C)}{d(\bs{x},C)}=\frac{d(\bar{\bs{x}},(\bar{\bs{x}}+C)\cap C)}{d(\bar{\bs{x}},C)}~.
		\]
		Therefore, we may restrict the calculation of $\beta_1$ to the set of points that are of unit distance to the ordering cone, i.e.,
		\[
		\beta_1=\sup\{d(\bs{x},(\bs{x}+C\cap C))\mid x\in C^c,\ d(\bs{x},C)=1\}~.
		\]

		Note that a vector $\bs{y}\in\R^D$ is in $\bd(C)$ if and only if $\bs{y}\in C$ and $\bs{w}_n^{\mathsf{T}}\bs{y}>0$ for at least one $n\in [N]$. In view of this observation, we may write $\bd(C)$ as the union of the faces of $C$, i.e.,
		\[
		\bd(C)=\bigcup_{I\in 2^{[N]}\setminus\{\emptyset\}}F(I)~, \text{ where }F(I):=\{\bs{y}\in C\mid \forall n\in I \colon \bs{w}_n^{\mathsf{T}}\bs{y}=0,\ \forall n\in I^c\colon \bs{w}_n^{\mathsf{T}}\bs{y}>0\}~.
		\]
		For each $I\in 2^{[N]}\setminus\{\emptyset\}$, let
		\[
		\mathcal{X}(I)\coloneqq \{\bs{x}\in C^c\mid \bs{p}_C(\bs{x})\in F(I)\},\quad C^+(I)\coloneqq \Big\{\sum_{n\in I}\lambda_n\bs{w}_n\mid \bs{\lambda}\in\R^N_+\Big\}~.
		\]
		We claim that
		\begin{equation}\label{eq:claimface}
			\mathcal{X}(I)=F(I)-C^+(I)~.
		\end{equation}
		To see this, first let us fix $\bs{x}\in\mathcal{X}(I)$, i.e., $\bs{p}_C(\bs{x})\in F(I)$. Note that $\bs{y}=\bs{p}_C(\bs{x})$ satisfies \eqref{FOCden}, \eqref{csden} for some $\bs{\lambda}\in\R^N_+$. Since $\bs{p}_C(\bs{x})\in F(I)$, it follows from \eqref{csden} that $\lambda_n=0$ for each $n\in I^c$. Then, $\bs{p}_C(\bs{x})-\bs{x}=\frac12 \sum_{n\in I}\lambda_n\bs{w}_n\in C^+(I)$. Hence, $\bs{x}-\bs{p}_C(\bs{x})\in -C^+(I)$. Conversely, let $\bs{y}\in F(I)$, $\bs{\lambda}\in\R^N_+$, and define $\bs{x}\coloneqq\bs{y}-\sum_{n\in I}\lambda_n\bs{w}_n$. Let us define $\bar{\bs{\lambda}}\in\R^N_+$ by $\bar{\lambda}_n:=\lambda_n$ for each $n\in I$, and by $\bar{\lambda}_n:=0$ for each $n\in I^c$. Then, it is clear that the pair $(\bs{y},\bar{\bs{\lambda}})$ satisfies \eqref{FOCden}, \eqref{csden} for the point $\bs{x}$. Hence, we conclude that $\bs{p}_C(\bs{x})=\bs{y}$. Therefore, $\bs{x}\in \mathcal{X}(I)$, which completes the proof of \eqref{eq:claimface}. The last part of the proof also shows that $d(\bs{x},C)=\|\bs{w}\|_2$ whenever $\bs{x}=\bs{y}-\bs{w}$ with $\bs{y}\in F(I)$ and $\bs{w}\in C^+(I)$.
		
		Note that we may write $C^c=\bigcup_{I\in 2^{[N]}\setminus\{\emptyset\}}\mathcal{X}(I)$ as a disjoint finite union. Accordingly, we have
		\[
		\beta_1= \max_{I\in 2^{[N]}\setminus\{\emptyset\}}\beta_1(I)~,\text{ where }\beta_1(I):=\sup\{d(\bs{x},(\bs{x}+C)\cap C)\mid d(\bs{x},C)=1,\,\bs{x}\in \mathcal{X}(I)\}~.
		\]
		Let us fix $I\in 2^{[N]}\setminus\{\emptyset\}$ and $\bs{x}\in\mathcal{X}(I)$ with $d(\bs{x},C)=1$. We may write $\bs{x}=\bs{p}_C(\bs{x})-\bs{w}$ for some $\bs{w}\in C^+(I)$ and $d(\bs{x},C)=\|\bs{w}\|_2=1$. Let us define $\bar{\bs{x}}:=-\bs{w}$. Let $n\in [N]$. Since $\bs{p}_C(\bs{x})\in F(I)$, we have
		\[
		\bs{w}_n^{\mathsf{T}}\bar{\bs{x}}=\bs{w}_n^{\mathsf{T}}(\bs{x}-\bs{p}_C(\bs{x}))=\bs{w}_n^{\mathsf{T}}\bs{x}
		\]
		if $n\in I$, and
		\[
		\bs{w}_n^{\mathsf{T}}\bar{\bs{x}}=\bs{w}_n^{\mathsf{T}}(\bs{x}-\bs{p}_C(\bs{x}))<\bs{w}_n^{\mathsf{T}}\bs{x}
		\]
		if $n\in I^c$. Hence, for every $n\in [N]$, we have $\bs{w}_n^{\mathsf{T}}\bar{\bs{x}}\leq \bs{w}_n^{\mathsf{T}}\bs{x}$ so that $(\bs{w}_n^{\mathsf{T}}\bar{\bs{x}})^-\geq (\bs{w}_n^{\mathsf{T}}\bs{x})^-$. Hence, by Lemma \ref{lem:beta1dual},
		\begin{align*}
			d(\bs{x},(\bs{x}+C)\cap C)&=\sup\Bigg\{\sum_{n\in[N]}\lambda_n(\bs{w}_n^{\mathsf{T}}\bs{x})^-\mid \Big\|\sum_{n\in[N]}\lambda_n\bs{w}_n\Big\|_2\leq 1,\ \bs{\lambda}\in\R^N_+\Bigg\}\\
			&\leq \sup\Bigg\{\sum_{n\in[N]}\lambda_n(\bs{w}_n^{\mathsf{T}}\bar{\bs{x}})^-\mid \Big\|\sum_{n\in[N]}\lambda_n\bs{w}_n\Big\|_2\leq 1,\ \bs{\lambda}\in\R^N_+\Bigg\}\\
			&=d(\bar{\bs{x}},(\bar{\bs{x}}+C)\cap C)~.
		\end{align*}
		Moreover, note that $p_C(\bar{\bs{x}})=0\in F([N])$, $\bar{\bs{x}}\in \mathcal{X}([N])$, and $d(\bar{\bs{x}},C)=\|\bs{w}\|_2=1$. As $\bar{\bs{x}}\in\mathcal{X}([N])$ provides a higher objective value than $\bs{x}\in\mathcal{X}([I])$, we can ignore all choices of $I$ but the case $I=[N]$ in the computation of $\beta_1$:
		\begin{equation}\label{eq:beta1new}
			\beta_1=\sup\{d(\bs{x},(\bs{x}+C)\cap C)\mid d(\bs{x},C)=1,\ \bs{x}\in\mathcal{X}([N])\}~.
		\end{equation}
		Note that $F([N])=\{0\}$ since $C$ is assumed to be pointed, and $C^+([N])=C^+$. Hence, $\mathcal{X}([N])=F([N])-C^+(I)=-C^+$ by \eqref{eq:claimface}. Therefore, \eqref{eq:beta1new} becomes
		\begin{align*}
			\beta_1&=\sup\{d(-\bs{w},(-\bs{w}+C)\cap C)\mid d(-\bs{w},C)=1,\ \bs{w}\in C^+\}\\
			&=\sup\{d(-\bs{w},(-\bs{w}+C)\cap C)\mid \|\bs{w}\|_2=1,\ \bs{w}\in C^+\}~.
		\end{align*}
		Hence, \eqref{eq:beta1comp} follows.
		
		Finally, note that $\{\bs{w}\in C^+\mid \|\bs{w}\|_2=1\}$ is a compact set. Moreover, $\bs{w}\mapsto d(-\bs{w},(-\bs{w}+C)\cap C)$ is a continuous function by Lemma \ref{lem:beta1dual}. Hence, there exists $\bs{w}^\ast\in C^+$ such that $\|\bs{w}^\ast\|_2=1$ and $\beta_1=d(-\bs{w}^\ast,(-\bs{w}^\ast+C)\cap C)<+\infty$.
	\end{proof}
	
	The optimization problem in \eqref{eq:beta1comp} consists of maximizing a finite convex objective function over a compact set; hence, it is a global (non-convex) optimization problem. Nevertheless, the objective function is indeed Lipschitz over $\{\bs{w}\in C^+\mid \|\bs{w}\|_2=1\}$ by \citet[Theorem 10.4]{rockafellar}. Hence, some classical techniques of global optimization (e.g., \cite{goldstein}) can be used to calculate the actual value of $\beta_1$ (approximately). To be able to use these methods, one still needs to find a Lipschitz constant for the function $\bs{w}\mapsto d(-\bs{w},(-\bs{w}+C)\cap C)$ over $\{\bs{w}\in C^+\mid \|\bs{w}\|_2=1\}$. A more convenient alternative would be to use the same methods of global optimization for the following dual representation of $\beta_1$.
	
	\begin{theorem}\label{thm:beta1}
		It holds
		\begin{align*}
			\beta_1&= \sup_{\substack{\bs{w}\in C^+\colon\\ \|\bs{w}\|_2=1}}\sup_{\substack{\bs{\lambda}\in\R^N_+\colon \\ \|\sum_{n\in[N]}\lambda_n\bs{w}_n\|_2\leq 1}}\sum_{n\in[N]}\lambda_n(\bs{w}_n^{\mathsf{T}}\bs{w})^+\\
			&= \max_{I\in 2^{[N]}}\sup_{\substack{\bs{w}\in C^+\cap C(I)\colon\\ \|\bs{w}\|_2=1}}\sup_{\substack{\bs{\lambda}\in\R^N_+\colon \\ \|\sum_{n\in[N]}\lambda_n\bs{w}_n\|_2\leq 1}}\sum_{n\in I}\lambda_n\bs{w}_n^{\mathsf{T}}\bs{w}~,
		\end{align*}
		where
		\[
		C(I):=\{\bs{y}\in\R^D\mid \forall n\in I\colon \bs{w}_n^{\mathsf{T}}\bs{y}\geq 0,\ \forall n\in I^c\colon \bs{w}_n^{\mathsf{T}}\bs{y}\leq 0\}~.
		\]
	\end{theorem}
	
	\begin{proof}
		By Lemma \ref{lem:beta1dual}, for each $\bs{w}\in C^+$, we have
		\[
		d(-\bs{w},(-\bs{w}+C)\cap C)=\sup\Bigg\{\sum_{n\in[N]}\lambda_n(\bs{w}_n^{\mathsf{T}}\bs{w})^+\mid \Big\|\sum_{n\in[N]}\lambda_n\bs{w}_n\Big\|_2\leq 1,\ \bs{\lambda}\in\R^N_+\Bigg\}~.
		\]
		Combining this and Proposition \ref{prop:beta1comp}, the first equality in the theorem follows. The second equality follows immediately by decomposing $C^+$ as $C^+=\bigcup_{I\in 2^{[N]}}(C^+\cap C(I))$ and noting that $\sum_{n\in[N]}\lambda_n(\bs{w}_n^{\mathsf{T}}\bs{w})^+=\sum_{n\in I}\lambda_n\bs{w}_n^{\mathsf{T}}\bs{w}$ whenever $\bs{w}\in C(I)$.
	\end{proof}
	
	We complete this subsection by showing that, for each $I\in 2^{[N]}$, the objective function of the second problem in Theorem \ref{thm:beta1} is jointly Lipschitz with a computable Lipschitz constant. This will be achieved by the next proposition. In what follows, $\|\cdot\|_1$ denotes the $\ell_1$-norm.
	
	\begin{proposition}\label{lem:beta1lambda}
		(i) Let $v^\ast:= \inf\{\|\sum_{n\in[N]}\lambda \bs{w}_n\|_2\mid \|\bs{\lambda}\|_1=1,\ \bs{\lambda}\in\R^N_+\}$. Then, $v^\ast>0$.\\
		(ii) The set $\{\bs{\lambda}\in\R^N_+\mid \|\sum_{n\in [N]}\lambda_n\bs{w}_n\|_2\leq 1\}$ is a nonempty compact subset of $\R^N$.\\
		(iii) The function $(\bs{w},\bs{\lambda})\mapsto \sum_{n\in I}\lambda_n\bs{w}_n^{\mathsf{T}}\bs{w}$ is $(\frac{1}{v^\ast}\vee 1)$-Lipschitz with respect to the $\ell_2\times\ell_1$-norm on $\R^D\times\R^N$.
	\end{proposition}
	
	\begin{proof}
		(i) Let us define $\Lambda:=\{\bs{\lambda}\in\R^N_+\mid \|\bs{\lambda}\|_1=1\}$, which is a compact set. The function $\bs{\lambda}\mapsto \|\sum_{n\in [N]}\lambda_n\bs{w}_n\|_2$ is continuous, hence it attains its minimum over $\Lambda$ at some $\bs{\lambda}^\ast\in\Lambda$. We claim that the corresponding minimum value $v^\ast=\|\sum_{n\in [N]}\lambda_n^\ast\bs{w}_n\|_2$ is nonzero. Indeed, having $v^\ast=0$ would imply $\sum_{n\in [N]}\lambda_n^\ast\bs{w}_n=0$, which is not possible since $\bs{w}_1,\ldots,\bs{w}_N$ are the generating vectors of the dual cone $C^+$ and $C^+$ is pointed (as $C$ is solid). Therefore, the claim holds.
		
		(ii) By (i), we have
		\begin{equation}\label{eq:lambda}
			\Big\|\sum_{n\in [N]}\lambda_n\bs{w}_n\Big\|_2\geq v^\ast>0
		\end{equation}
		for every $\bs{\lambda}\in \Lambda$. Finally, for an arbitrary vector $\bs{\lambda}\in\R^N_+\setminus\{\bs{0}\}$, we have
		\begin{equation}\label{eq:lambda2}
			\Big\|\sum_{n\in [N]}\lambda_n\bs{w}_n\Big\|_2\geq v^\ast \|\bs{\lambda}\|_1~,
		\end{equation}
		which follows by applying \eqref{eq:lambda} to $\bs{\lambda}/\|\bs{\lambda}\|_1\in\Lambda$. Moreover, \eqref{eq:lambda2} holds for $\bs{\lambda}=\bs{0}$ trivially. Therefore, \eqref{eq:lambda2} holds for every $\bs{\lambda}\in \R^N_+$. In particular,
		\[
		\sup\Bigg\{\|\bs{\lambda}\|_1\mid \Big\|\sum_{n\in [N]}\lambda_n\bs{w}_n\Big\|_2\leq 1,\ \bs{\lambda}\in\R^N_+\Bigg\}\leq \frac{1}{v^\ast}<+\infty~,
		\]
		which shows the boundedness of $\{\bs{\lambda}\in\R^N_+\mid \|\sum_{n\in [N]}\lambda_n\bs{w}_n\|_2\leq 1\}$. Trivially, this set is also closed. Hence, compactness follows.
		
		(iii) For every $\bs{\lambda},\bar{\bs{\lambda}}\in\R^N_+$ such that $\|\sum_{n\in[N]}\lambda_n\bs{w}_n\|_2\leq 1$, $\|\sum_{n\in[N]}\bar{\lambda}_n\bs{w}_n\|_2\leq 1$ and $\bs{w},\bar{\bs{w}}\in C^+$ such that $\|\bs{w}\|_2=\|\bar{\bs{w}}\|_2=1$, we observe that
		\begin{align*}
			\Big|\sum_{n\in I}\lambda_n\bs{w}_n^{\mathsf{T}}\bs{w}-\sum_{n\in I}\bar{\lambda}_n\bs{w}_n^{\mathsf{T}}\bar{\bs{w}}\Big|&\leq \Big\|\sum_{n\in I}\lambda_n\bs{w}_n\Big\|_2\|\bs{w}-\bar{\bs{w}}\|_2+\Big\|\sum_{n\in I}(\lambda_n-\bar{\lambda}_n)\bs{w}_n\Big\|_2\|\bar{\bs{w}}\|_2\\
			&\leq \sum_{n\in I}\lambda_n \|\bs{w}-\bar{\bs{w}}\|_2+ \sum_{n\in I}|\lambda_n-\bar{\lambda}_n|\\
			&\leq \frac{1}{v^\ast}\|\bs{w}-\bar{\bs{w}}\|_2+\|\bs{\lambda}-\bar{\bs{\lambda}}_n\|_1\\
			&\leq \Big(\frac{1}{v^\ast}\vee 1\Big)(\|\bs{w}-\bar{\bs{w}}\|_2+\|\bs{\lambda}-\bar{\bs{\lambda}}_n\|_1)~.
		\end{align*}
		Hence, the Lipschitz property follows. 
	\end{proof}
	
	The constant
	\[
		v^\ast = \inf\Bigg\{\Big\|\sum_{n\in[N]}\lambda \bs{w}_n\Big\|_2\mid \|\bs{\lambda}\|_1=1,\ \bs{\lambda}\in\R^N_+\Bigg\}
		=\inf\Bigg\{\Big\|\sum_{n\in[N]}\lambda \bs{w}_n\Big\|_2\mid \|\bs{\lambda}\|_1\leq 1,\ \bs{\lambda}\in\R^N_+\Bigg\}~,
	\]
	which appears in Proposition \ref{lem:beta1lambda}, can be calculated by solving a simple minimization problem whose objective function is quadratic and constraints can be linearized. Therefore, in view of Theorem \ref{thm:beta1}, to evaluate $\beta_1$ approximately, one can solve finitely many optimization problems with a compact feasible region and Lipschitz objective function with a known Lipschitz constant.
}

\subsubsection{Computation of $\beta_2$ for general ordering cones}

\revc{We will find an upper bound for $\beta_2$. For this, we first provide a simple formula for the calculation of the distance functions that appear in the definition of $\beta_2$.
	
	\begin{lemma}\label{lem:beta2}
		Let $\bs{x}\in (\Int(C))^c$. Then,
		\[
		d(\bs{x},(\Int(C))^c\cap(\bs{x}-C))=\min_{n\in[N]}\frac{\bs{w}_n^{\mathsf{T}}\bs{x}}{\alpha_n}~,\qquad
		d(\bs{x},(\Int(C))^c)=\min_{n\in[N]}\bs{w}_n^{\mathsf{T}}\bs{x}~.
		\]
	\end{lemma}
	
	\begin{proof}
		To see the first identity, note that replacing $\bs{\mu}_j-\bs{\mu}_i$ with $\bs{x}$ in Proposition \ref{prop:m}(ii,iv) gives $d(\bs{x},(\Int(C))^c\cap(\bs{x}-C))= \min_{n\in[N]}\frac{(\bs{w}_n^{\mathsf{T}}\bs{x})^+}{\alpha_n}$. Since $\bs{x}\in \Int(C)$, we have $\bs{w}_n^{\mathsf{T}}\bs{x}>0$ for each $n\in[N]$. Hence, the first identity follows. Alternatively, one can use \eqref{eq:b2problemf} and formulate the Lagrange dual problem of $f_n(\bs{x})$ for each $n\in[N]$, which yields the same identity.
		
		To prove the second identity, let us use \eqref{eq:b2problemg}. By strong duality for convex optimization, for each $n\in [N]$, we have
		\begin{align*}
			g_n(\bs{x})&=\sup_{\lambda\geq 0}\inf_{\bs{y}\in\R^D}\big(\|\bs{y}-\bs{x}\|_2+\lambda \bs{w}_n^{\mathsf{T}}\bs{y}\big)\\
			&=\sup_{\lambda\geq 0}\inf_{\bs{u}\in\R^D}\big(\|\bs{u}\|_2+\lambda \bs{w}_n^{\mathsf{T}}(\bs{u}+\bs{x})\big)\\
			&=\sup_{\lambda\geq 0}\Big(\lambda\bs{w}_n^{\mathsf{T}}\bs{x}-\sup_{\bs{u}\in\R^D}\big(-\lambda\bs{w}_n^{\mathsf{T}}\bs{u}-\|\bs{u}\|_2\big)\Big)\\
			&=\sup_{\lambda\geq 0}\begin{cases}\lambda\bs{w}_n^{\mathsf{T}}\bs{x}&\text{if }\lambda=\|\lambda \bs{w}_n\|\leq 1\\ -\infty&\text{else}\end{cases}\\
			&=\sup_{\lambda\in [0,1]}\lambda \bs{w}_n^{\mathsf{T}}\bs{x}=\bs{w}_n^{\mathsf{T}}\bs{x}~,
		\end{align*}
		where the last equality holds since $\bs{w}_n^{\mathsf{T}}\bs{x}>0$. Hence, the second identity follows.
	\end{proof}
	
	\begin{theorem}\label{thm:beta2comp}
		It holds
		\[
		\beta_2\leq \frac{1}{\min_{n\in[N]}\alpha_n}<+\infty~.
		\]
	\end{theorem}
	
	\begin{proof}
		Let $n^\ast\in[N]$ be such that $\alpha_{n^\ast}=\min_{n\in[N]}\alpha_n$. By the definition of $\beta_2$ and Lemma \ref{lem:beta2}, we have
		\begin{align*}
			\beta_2 = \sup_{\bs{x}\in\Int(C)}\frac{d(\bs{x},(\Int(C))^c\cap(\bs{x}-C))}{d(\bs{x},(\Int(C))^c)}  & =\sup_{\bs{x}\in\Int(C)}\frac{\min_{n\in[N]}\frac{\bs{w}_n^{\mathsf{T}}\bs{x}}{\alpha_n}}{\min_{n\in[N]}\bs{w}_n^{\mathsf{T}}\bs{x}} \\ 
			& \leq \frac{1}{\alpha_{n^\ast}}\sup_{\bs{x}\in\Int(C)}\frac{\min_{n\in[N]}\bs{w}_n^{\mathsf{T}}\bs{x}}{\min_{n\in[N]}\bs{w}_n^{\mathsf{T}}\bs{x}} = \frac{1}{\alpha_{n^\ast}}~.
		\end{align*}
	\end{proof}
	Note that Proposition \ref{prop:beta1comp} and Theorem \ref{thm:beta2comp} yield Theorem \ref{thm:beta}(i).
}

\subsection{Proof of Proposition \ref{prop:m}}
(i) To get a contradiction, suppose that $m(i,j)=+\infty$. Hence, the set whose infimum is calculated in \eqref{eq:m} is empty, that is, for every $s\geq 0$ and $\bs{u}\in B(\bs{0},1)\cap C$, we have $\bs{\mu}_i+s\bs{u}\in \bs{\mu}_j-\Int(C)$. Since $\{s\bs{u}\mid s\geq 0, \bs{u}\in B(\bs{0},1)\cap C\}=C$, we have $\bs{\mu}_i + C\subseteq \bs{\mu}_j - \Int(C)\subseteq \bs{\mu}_j-C$. Hence, $\bs{\mu}_i-\bs{\mu}_j+C\subseteq -C$ so that $C\cap(-C)\supseteq C\cap (\bs{\mu}_i-\bs{\mu}_j +C)$. The latter intersection consists of all points $\bs{x}\in\R^D$ such that $\bs{0}\preceq_C \bs{x}$ and $\bs{\mu}_i-\bs{\mu}_j \preceq_C \bs{x}$. Since $C$ is a solid cone, there are infinitely many such points, which contradicts with $C\cap (-C)=\{\bs{0}\}$. Hence, $m(i,j)<+\infty$.\\
(ii) Note that
\begin{align*}
	m(i,j) = & \inf\{ s \geq 0 \mid  \exists \bs{u} \in B(\bs{0},1)\cap C \colon \bs{\mu}_i + s \bs{u} \in (\bs{\mu}_j - \Int(C))^c \} \\
	& \inf\{ s \geq 0 \mid  \exists \bs{u} \in B(\bs{0},1)\cap C \colon \bs{\mu}_i + s \bs{u} \in \bs{\mu}_j - (\Int(C))^c \} \\
	= & \inf_{\bs{c} \in (\Int (C))^c} \inf \{ s \geq 0 \mid \exists \bs{u} \in B(\bs{0},1)\cap C \colon  \bs{\mu}_j -\bs{\mu}_i -c = s \bs{u} \} \\
	= & \inf_{\bs{c} \in (\Int (C))^c\cap (\bs{\mu}_j-\bs{\mu}_i-C)} \inf \{ s \geq 0 \mid  s= \|\bs{\mu}_j -\bs{\mu}_i -c\|_2  \} \\
	= & \inf_{\bs{c} \in (\Int (C))^c\cap (\bs{\mu}_j-\bs{\mu}_i-C)} \|\bs{\mu}_j -\bs{\mu}_i -c\|_2 \\
	= & d(\bs{\mu}_j - \bs{\mu}_i, (\Int C)^c\cap (\bs{\mu}_j-\bs{\mu}_i-C))~.
\end{align*}
(iii) Since $(\Int(C))^c\cap (\bs{\mu}_j-\bs{\mu}_i-C)$ is a closed set, by the well-known properties of distance function, we have $\bs{\mu}_j-\bs{\mu}_i\in (\Int(C))^c\cap (\bs{\mu}_j-\bs{\mu}_i-C)$ if and only if $m(i,j)=d(\bs{\mu}_j- \bs{\mu}_i, (\Int(C))^c\cap (\bs{\mu}_j-\bs{\mu}_i-C))=0$. Since $\bs{\mu}_j-\bs{\mu}_i \in \bs{\mu}_j-\bs{\mu}_i -C$ always holds, we have $\bs{\mu}_j-\bs{\mu}_i\in (\Int(C))^c$ if and only if $m(i,j)=0$. Since the former condition precisely means that $i\nprec_C j$, the desired equivalence follows.\\
(iv) First, suppose that $m(i,j)=0$, that is, $\bs{\mu}_j-\bs{\mu}_i\notin \Int(C)$. Since $\Int(C)=\{\bs{x}\in\R^D\mid \bs{W}\bs{x}>0\}$, there exists $\bar{n}\in [N]$ such that $\bs{w}_{\bar{n}}^{\mathsf{T}}(\bs{\mu}_j-\bs{\mu}_i)\leq 0$, that is, $(\bs{w}_{\bar{n}}^{\mathsf{T}}(\bs{\mu}_j-\bs{\mu}_i))^+=0$. Hence, $\min_{n\in [N]}(w_n^{\mathsf{T}}(\bs{\mu}_j-\bs{\mu}_i))^+/\alpha_n=0=m(i,j)$. Next, suppose that $m(i,j)>0$, that is, $\bs{\mu}_j-\bs{\mu}_i\in \Int(C)$. Note that we may write $m(i,j)=\inf R$, where
\begin{equation*}
	R:=\{s\geq 0\mid \exists \bs{u}\in B(\bs{0},1)\cap C\colon \bs{\mu}_i+s\bs{u}\notin \bs{\mu}_j-C\}~.
\end{equation*}
We show that $R$ is an interval that is unbounded from above. To that end, let $s\geq 0$ be such that $\bs{\mu}_i+s\bs{u}\notin \bs{\mu}_j-C$ for some $\bs{u}\in B(\bs{0},1)\cap C$. Let $s'>s$. We claim that $\bs{\mu}_i+s'\bs{u}\notin \bs{\mu}_j-C$. Suppose otherwise that $\bs{\mu}_i+s'\bs{u}\in \bs{\mu}_j-C$. Since $\bs{\mu}_i\in\bs{\mu}_j-\Int(C)\subseteq \bs{\mu}_j-C$ by supposition, and $\bs{\mu}_j-C$ is a convex set, we obtain
\begin{equation*}
	\bs{\mu}_i+s\bs{u}=\left(1-\frac{s}{s'}\right)\bs{\mu}_i + \frac{s}{s'}(\bs{\mu}_i+s'\bs{u})\in \bs{\mu}_j-C~,
\end{equation*}
which is a contradiction to the definition of $s$. Hence, the claim holds. Since $R$ is an interval in $[0,+\infty)$ that is unbounded from above, we have $\inf R = \sup ([0,+\infty)\setminus R)$. Therefore,
\begin{align*}
	m(i,j)=&  \sup\{s\geq 0\mid \forall \bs{u}\in B(\bs{0},1)\cap C\colon\bs{\mu}_i+s\bs{u}\in \bs{\mu}_j-C\}\\
	= & \sup\{ s \geq 0 \mid \forall \bs{u} \in B(\bs{0},1)\cap C, \forall n \in [N]\colon \bs{w}^{\mathsf{T}}_n (\bs{\mu}_j - \bs{\mu}_i - s \bs{u}) \geq 0 \} \\
	= & \sup\{ s \geq 0 \mid \forall \bs{u} \in B(\bs{0},1)\cap C, \forall n \in [N]\colon  \bs{w}^{\mathsf{T}}_n (\bs{\mu}_j - \bs{\mu}_i ) \geq s  \bs{w}^{\mathsf{T}}_n\bs{u} \} \\ 
	= & \sup\{ s \geq 0 \mid \forall n \in [N]\colon  \bs{w}^{\mathsf{T}}_n (\bs{\mu}_j - \bs{\mu}_i ) \geq s \sup_{\bs{u} \in B(\bs{0},1)\cap C} \bs{w}^{\mathsf{T}}_n \bs{u} \} \\
	= & \sup\{ s \geq 0 \mid \forall n\in [N]\colon  \bs{w}^{\mathsf{T}}_n (\bs{\mu}_j - \bs{\mu}_i ) \geq s \alpha_n\}\\
	= & \sup\left\{ s \geq 0 \mid \min_{n\in [N]} \frac{1}{\alpha_n}\bs{w}^{\mathsf{T}}_n (\bs{\mu}_j - \bs{\mu}_i ) \geq s \right\} \\
	= & \max \left\{ 0, \min_{n \in [N]} \frac{1}{\alpha_n}\bs{w}^{\mathsf{T}}_n (\bs{\mu}_j - \bs{\mu}_i ) \right\}\\
	=&\min_{n\in[N]}\frac{(\bs{w}^{\mathsf{T}}_n(\bs{\mu}_j-\bs{\mu}_i))^+}{\alpha_n}~,
\end{align*}
which completes the proof.

\subsection{Proof of Proposition \ref{prop:M}}

(i) We prove that the set whose infimum is calculated in \eqref{eq:M} is nonempty. To get a contradiction, suppose that for every $s\geq 0$ and $\bs{u}\in B(\bs{0},1)\cap C$, we have $\bs{\mu}_j-\bs{\mu}_i+s\bs{u}\notin C$, that is,
\begin{equation*}
	\Big(\bs{\mu}_j-\bs{\mu}_i+\bigcup_{s\geq 0}(B(\bs{0},s)\cap C)\Big)\cap C=\emptyset~.
\end{equation*}
However, we have $\bigcup_{s\geq 0}(B(\bs{0},s)\cap C)=C$; hence, we get a contradiction to the solidity of $C$ as in the proof of Proposition \ref{prop:m}(i). It follows that $M(i,j)<+\infty$.\\
(ii) By elementary calculations, we have
\begin{align*}
	M(i,j)
	=& \inf\{s \geq 0 \mid \exists \bs{u} \in B(\bs{0},1)\cap C, \exists \bs{c} \in C \colon \bs{\mu}_j + s \bs{u} = \bs{\mu}_i + \bs{c} \} \\
	=& \inf_{\bs{c} \in C} \inf\{s \geq 0 \mid \exists \bs{u} \in B(\bs{0},1)\cap C \colon  \bs{\mu}_j - \bs{\mu}_i - \bs{c}= -s \bs{u} \} \\
	=& \inf_{\bs{c} \in C\cap (\bs{\mu}_j-\bs{\mu}_i+C)} \inf\{s \geq 0 \mid  s = \| \bs{\mu}_j - \bs{\mu}_i - \bs{c} \|_2 \} \\
	=& \inf_{\bs{c} \in C\cap (\bs{\mu}_j-\bs{\mu}_i+C)} || \bs{\mu}_j - \bs{\mu}_i - \bs{c} ||_2\\
	=& d( \bs{\mu}_j - \bs{\mu}_i, C\cap (\bs{\mu}_j-\bs{\mu}_i+C))  ~.
\end{align*}
(iii) Since $C\cap (\bs{\mu}_j-\bs{\mu}_i+C)$ is a closed set, we have $M(i,j)=d(\bs{\mu}_j-\bs{\mu}_i,C\cap (\bs{\mu}_j-\bs{\mu}_i+C))=0$ if and only if $\bs{\mu}_j-\bs{\mu}_i\in C\cap (\bs{\mu}_j-\bs{\mu}_i+C)$. Since $\bs{\mu}_j-\bs{\mu}_i\in \bs{\mu}_j-\bs{\mu}_i+C$ is always the case, these conditions are also equivalent to $\bs{\mu}_j-\bs{\mu}_i\in C$, that is, $i\preceq_C j$.

\subsection{Proof of Theorem \ref{lowerboundthm}}

For each $s>0$, let us define
\[
A(s):=\bigcap_{\bs{u}\in B(\bs{0},1)\cap C}(s\bs{u}+C),\quad f(s):=d(\bs{0},A(s))=\inf\{\|\bs{z}\|\mid \bs{z}\in A(s)\}~.
\]
Note that $\bs{z}\in A(s)$ if and only if $\bs{w}_n^{\mathsf{T}}(\bs{z}-s\bs{u})\geq 0$ for every $n\in[N]$ and $\bs{u}\in B(\bs{0},1)$, which is equivalent to $\bs{w}_n^{\mathsf{T}}\bs{z}\geq s\alpha_n=s\sup_{\bs{u}\in B(\bs{0},1)\cap C}\bs{w}_n^{\mathsf{T}}\bs{u}$ for every $n\in [N]$. It follows that
\[
A(s)=\{\bs{z}\in\R^D\mid \forall n\in [N]\colon \bs{w}_n^{\mathsf{T}}\bs{z}\geq s\alpha_n\},\quad A(s)=sA(1),\quad f(s)=sf(1)~.
\]
Since $A(1)$ is a nonempty closed set, it can be checked that there exists $\bs{z}^\ast \in A(1)$ such that $\|\bs{z}^\ast\|=f(1)$, this vector can be calculated by solving a simple quadratic minimization problem with affine constraints. Since $\bs{w}_n^{\mathsf{T}}\bs{z}^\ast\geq \alpha_n>0$ for each $n\in[N]$, we have $\bs{z}^\ast\in \Int(C)$. Let $\bs{u}^\ast\coloneqq \frac{\bs{z}^\ast}{f(1)}\in B(\bs{0},1)\cap \Int(C)$. Then, for every $s>0$, we have $sf(1)\bs{u}^\ast \in A(s)$, whose norm yields $f(s)$.

For each $t\in [T]$, we assume that the noise vector $\bs{Y}_t$ is either $\frac14\bs{u}^\ast$ with probability $p\in (0,1)$ or it is $-\frac14\bs{u}^\ast$ with probability $1-p$, which implies $\E[\bs{Y}_t]=(\frac{p}{2}-\frac14)\bs{u}^\ast$. Moreover, due to the structure of this distribution, the difference $\hat{\bs{\mu}}_i-\bs{\mu}_i$ between the empirical and true means of a design $i\in[K]$ has a discrete distribution over a finite subset of the line segment that connects $-\frac14\bs{u}^\ast$ and $\frac14\bs{u}^\ast$, hence we may write $\hat{\bs{\mu}}_i-\bs{\mu}_i=(\frac{\hat{p}_i}{2}-\frac14) \bs{u}^\ast $, where $\hat{p}_i$ is the empirical frequency of observing $\bs{Y}_t=\frac14\bs{u}^\ast$ over all rounds $t$ at which design $i$ is sampled. 

As in \citet{auer2016pareto}, we consider the case $p_i=\frac12$ for every $i\in [K]$ as the original case, where the noise vectors are centered as in the problem setup. We will introduce modifications on $(p_i)_{i\in[N]}$ in such a way that the algorithm returns an output $P^\prime$ that is different from the original one $P$. This could be due to a change in the Pareto set or a change in the gap values of the designs.

We consider the following four possibilities for $i\in [K]$.

\emph{Case 1:} Suppose that $i\in P^\ast\setminus P$. Then, Remark 3.7 implies that there exist $j\in P$ and $\bs{u}\in B(\bs{0},1)\cap C$ such that $\bs{\mu}_j+\epsilon \bs{u}\in \bs{\mu}_i+C$. Let $J(i)\subseteq P$ be the set of all such $j$. Then, by the definition of $M(i,j)$, we have $0<M(i,j)\leq \epsilon $ for every $j \in J(i)$. In particular, $M(i,j)\leq \epsilon$ for every $j\in J(i)$ and $\tilde{\Delta}^\epsilon_i=\epsilon$. Consider a modification of $p_i$ such that the corresponding mean vector becomes $\bs{\mu}_i^\prime:=\bs{\mu}_i+2\tilde{\Delta}^\epsilon_i f(1)\bs{u}^\ast=\bs{\mu}_i+2\epsilon f(1)\bs{u}^\ast$ while all the other mean vectors remain the same. Note that $\bs{\mu}_i^\prime -\bs{\mu}_i =2\epsilon f(1)\bs{u}^\ast\in A(2\epsilon)$, i.e., $\bs{\mu}_i^\prime \in \bs{\mu}_i+2\epsilon\bs{u}+C$ for every $\bs{u}\in B(\bs{0},1)\cap C$. We claim that $i$ is not covered by any $j\in J(i)$ in the modified case. Indeed, if we had $j\in J(i)$ and $\bs{u}\in B(\bs{0},1)\cap C$ such that $\bs{\mu}_j +\epsilon \bs{u}\in \bs{\mu}_i^\prime+C$, then the properties of $\bs{\mu}_i^\prime$ would imply that $\bs{\mu}_j +\epsilon \bs{u}\in \bs{\mu}_i+2\epsilon \bs{u}+C$, i.e., $\bs{\mu}_j\in\bs{\mu}_i +\epsilon\bs{u}+C\subseteq \bs{\mu}_i+C$ so that $\bs{\mu}_j$ dominates $\bs{\mu}_i$ and $i \notin P^\ast$, a contradiction. Hence, the claim holds. On the other hand, we have $i\in (P^{\ast})^\prime$, where $(P^{\ast})^\prime$ denotes the Pareto set in the modified case. Hence, due to Definition 3.6, we must have $i\in P^\prime$, where $P^\prime$ is the set returned by the algorithm in the modified case.

\emph{Case 2:} Suppose that $i\in P^\ast\cap P$. Then, $\Delta_i^+=M(i,j)$ for some $j\in P^\ast\setminus \{i\}$. Let us modify $\bs{\mu}_i$ as $\bs{\mu}_i^\prime:=\bs{\mu}_i-k\tilde{\Delta}^\epsilon_i \bs{u}^\ast$, where $k>0$ is to be determined. We want to choose $k$ in such a way that $m^\prime(i,j)\geq 2\epsilon$ for the gap $m^\prime(i,j)$ in the modified case so that $i\notin P^\prime$ due to Definition 3.6. Note that $m^\prime(i,j)\geq 2\epsilon$ if and only if $\bs{\mu}^\prime +2\epsilon\bs{u}\in \bs{\mu}_j-C$ for every $\bs{u}\in B(\bs{0},1)\cap C$. Hence, the minimum value of $k$ is given by
\begin{align*}
	k_i&\coloneqq \inf\{k\geq 0\mid \forall \bs{u}\in B(\bs{0},1)\cap C\colon \bs{\mu}_i -k\tilde{\Delta}^\epsilon_i \bs{u}^\ast +2\epsilon \bs{u}\in \bs{\mu}_j-C\}\\
	&=\inf\{k\geq 0\mid  \forall \bs{u}\in B(\bs{0},1)\cap C\colon \bs{\mu}_j - \bs{\mu}_i + k\tilde{\Delta}^\epsilon_i \bs{u}^\ast \in 2\epsilon \bs{u}+C \}\\
	&=\inf\{k\geq 0\mid  \bs{\mu}_j - \bs{\mu}_i + k\tilde{\Delta}^\epsilon_i \bs{u}^\ast \in A(2\epsilon)\}\\
	&= \inf\{k\geq 0 \mid \forall n\in[N]\colon \bs{w}_n^{\mathsf{T}}(\bs{\mu}_j-\bs{\mu}_i+k\tilde{\Delta}^\epsilon_i \bs{u}^\ast ) \geq 2\epsilon\alpha_n\}\\
	&=\inf\Big\{k\geq 0 \mid \forall n\in [N]\colon k\geq \frac{2\epsilon\alpha_n-\bs{w}_n^{\mathsf{T}}(\bs{\mu}_j-\bs{\mu}_i)}{\tilde{\Delta}_i^\epsilon \bs{w}_n^{\mathsf{T}}\bs{u}^\ast}\Big\}\\
	&=\max_{n\in[N]}\frac{(2\epsilon\alpha_n-\bs{w}_n^{\mathsf{T}}(\bs{\mu}_j-\bs{\mu}_i))^+}{\tilde{\Delta}_i^\epsilon \bs{w}_n^{\mathsf{T}}\bs{u}^\ast}\leq \max_{n\in[N]}\frac{2\epsilon\alpha_n+(\bs{w}_n^{\mathsf{T}}(\bs{\mu}_i-\bs{\mu}_j))^+}{\tilde{\Delta}_i^\epsilon \bs{w}_n^{\mathsf{T}}\bs{u}^\ast}~.
\end{align*}
We claim that $\max_{n\in[N]}\frac{(\bs{w}_n^{\mathsf{T}}(\bs{\mu}_i-\bs{\mu}_j))^+}{\alpha_n}\leq M(i,j)$. To see this, we observe that, given a number $s\geq 0$, a sufficient condition for $s\leq M(i,j)$ to hold is given by the following:
\begin{align*}
	&\forall \bs{u}\in B(\bs{0},1)\cap C\colon \bs{\mu}_j +su \notin \bs{\mu}_i + \Int(C)\\
	&\Leftrightarrow \forall \bs{u}\in B(\bs{0},1)\cap C\ \exists n\in[N]\colon \bs{w}_n^{\mathsf{T}}(\bs{\mu}_j+s\bs{u})\leq \bs{w}_n^{\mathsf{T}}\bs{\mu}_i\\
	&\Leftrightarrow \forall \bs{u}\in B(\bs{0},1)\cap C\ \exists n\in[N]\colon s\leq \frac{\bs{w}_n^{\mathsf{T}}(\bs{\mu}_i-\bs{\mu}_j)}{\bs{w}_n^{\mathsf{T}}\bs{u}}\\
	&\Leftrightarrow \forall \bs{u}\in B(\bs{0},1)\cap C\colon s\leq \max_{n\in[N]}\frac{(\bs{w}_n^{\mathsf{T}}(\bs{\mu}_i-\bs{\mu}_j))^+}{\bs{w}_n^{\mathsf{T}}\bs{u}}~.
\end{align*}
Now, taking $s= \max_{n\in[N]}\frac{(\bs{w}_n^{\mathsf{T}}(\bs{\mu}_i-\bs{\mu}_j))^+}{\alpha_n}$, the last condition above is verified immediately thanks to the definition of $\alpha_n$. Hence, the claim holds. Then, we may write
\begin{align*}
	k_i &\leq \max_{n\in[N]}\frac{2\epsilon\alpha_n+(\bs{w}_n^{\mathsf{T}}(\bs{\mu}_i-\bs{\mu}_j))^+}{\tilde{\Delta}_i^\epsilon \bs{w}_n^{\mathsf{T}}\bs{u}^\ast}\\
	&\leq \max_{n\in[N]}\frac{2\epsilon\alpha_n + M(i,j)\alpha_n}{\tilde{\Delta}_i^\epsilon \bs{w}_n^{\mathsf{T}}\bs{u}^\ast}\leq  \max_{n\in[N]}\frac{2\tilde{\Delta}^\epsilon_i\alpha_n + \tilde{\Delta}^\epsilon_i\alpha_n}{\tilde{\Delta}_i^\epsilon \bs{w}_n^{\mathsf{T}}\bs{u}^\ast}=\max_{n\in[N]}\frac{3\alpha_n}{\bs{w}_n^{\mathsf{T}}\bs{u}^\ast}~.
\end{align*}
Therefore, choosing $k:= \max_{n\in[N]}\frac{3\alpha_n}{\bs{w}_n^{\mathsf{T}}\bs{u}^\ast}$ ensures that $m^\prime(i,j)\geq 2\epsilon$.

\emph{Case 3:} Suppose that $i\in P\setminus P^\ast$. Then, there exists $j\in P^\ast$ such that $\Delta_i^\ast = m(i,j)$ and we have $m(i,j)\leq \epsilon$ by Definition 3.6. In particular, $\tilde{\Delta}^\epsilon_i=\epsilon$. Let us modify $\bs{\mu}^\prime$ as $\bs{\mu}^\prime_i := \bs{\mu}_i -k\tilde{\Delta}^\epsilon_i\bs{u}^\ast$, where $k>0$ is to be chosen such that $m^\prime(i,j)\geq 2\epsilon$. Once this is achieved, we have $i\notin P^\prime$ according to Definition 3.6. By the proof of Proposition 3.2(iv), we have $m^\prime(i,j)\geq 2\epsilon$ if and only if $\bs{\mu}_i^\prime + 2\epsilon\bs{u}\in \bs{\mu}_j-C$ for every $\bs{u}\in B(\bs{0},1)\cap C$. Hence, the minimum value of $k$ is given by
\begin{align*}
	k_i&\coloneqq \inf\{k\geq 0 \mid \forall \bs{u}\in B(\bs{0},1)\cap C\colon \bs{\mu}_i - k \tilde{\Delta}_i^\epsilon \bs{u}^\ast +2 \epsilon \bs{u}\in \bs{\mu}_j-C\}\\
	&= \inf\{k\geq 0\mid \forall \bs{u}\in B(\bs{0},1)\cap C\colon \bs{\mu}_j - \bs{\mu}_i +k\tilde{\Delta}_i^\epsilon \bs{u}^\ast \in 2\epsilon \bs{u}+C\}\\
	&=\inf\{k\geq 0 \mid \bs{\mu}_j -\bs{\mu}_i +k \tilde{\Delta}^\epsilon_i \bs{u}^\ast \in A(2\epsilon)\}\\
	&= \inf\{k\geq 0 \mid \forall n\in[N]\colon \bs{w}_n^{\mathsf{T}}(\bs{\mu}_j-\bs{\mu}_i + k \tilde{\Delta}^\epsilon_i \bs{u}^\ast)\geq 2\epsilon\alpha_n\}\\
	&= \inf\Big\{k\geq 0 \mid \forall n\in[N]\colon k\geq \frac{2\epsilon \alpha_n - \bs{w}_n^{\mathsf{T}}(\bs{\mu}_j - \bs{\mu}_i)}{\tilde{\Delta}^\epsilon_i \bs{w}_n^{\mathsf{T}}\bs{u}^\ast}\Big\}\\
	&=\max_{n\in [N]}\frac{(2\epsilon\alpha_n - \bs{w}_n^{\mathsf{T}}(\bs{\mu}_j-\bs{\mu}_i))^+}{\tilde{\Delta}^\epsilon_i \bs{w}_n^{\mathsf{T}}\bs{u}^\ast}\\
	&\leq \max_{n\in [N]}\frac{(2\epsilon\alpha_n - m(i,j)\alpha_n)^+}{\tilde{\Delta}^\epsilon_i \bs{w}_n^{\mathsf{T}}\bs{u}^\ast}=\max_{n\in [N]}\frac{2\epsilon\alpha_n - m(i,j)\alpha_n}{\tilde{\Delta}^\epsilon_i \bs{w}_n^{\mathsf{T}}\bs{u}^\ast}\leq \max_{n\in[N]}\frac{2\alpha_n}{\bs{w}_n^{\mathsf{T}}\bs{u}^\ast}~.
\end{align*}
Hence, choosing $k:=\max_{n\in[N]}\frac{2\alpha_n}{\bs{w}_n^{\mathsf{T}}\bs{u}^\ast}$ ensures that $m^\prime(i,j)\geq 2\epsilon$.

\emph{Case 4:} Suppose that $i\notin P^\ast\cup P$. Let us modify $\bs{\mu}^\prime$ as $\bs{\mu}^\prime_i := \bs{\mu}_i +k\tilde{\Delta}^\epsilon_i\bs{u}^\ast$, where $k>0$ is to be chosen such that $i$ is not covered by any $j\in P^\ast$ in the modified case. For such $k$, $i$ will automatically be in the Pareto set of the modified case, i.e., $i\in (P^\ast)^\prime$, hence we must have $i\in P^\prime$ due to Definition 3.6. To determine the desired value of $k$, note that the largest value of $k$ for which $i$ is covered by $j\in P^\ast$ in the modified case is given by
\begin{align*}
	k_{ij}&\coloneqq \sup\{k\geq 0\mid \exists \bs{u}\in B(\bs{0},1)\cap C\colon \bs{\mu}_i^\prime \in \bs{\mu}_j+\epsilon\bs{u}-C\}\\
	&=\sup_{\bs{u}\in B(\bs{0},1)\cap C}\sup\{k\geq 0 \mid \forall n\in [N]\colon \bs{w}_n^{\mathsf{T}}\bs{\mu}_i + k \tilde{\Delta}^\epsilon_i \bs{w}^{\mathsf{T}}_n \bs{u}^\ast \leq \bs{w}_n^{\mathsf{T}}\bs{\mu}_j + \epsilon \bs{w}_n^{\mathsf{T}}\bs{u}\}\\
	&=\sup_{\bs{u}\in B(\bs{0},1)\cap C}\sup\Big\{k\geq 0\mid \forall n\in [N]\colon k\leq \frac{\bs{w}_n^{\mathsf{T}}(\bs{\mu}_j-\bs{\mu}_i+\epsilon \bs{u})}{\tilde{\Delta}^\epsilon_i \bs{w}_n^{\mathsf{T}}\bs{u}^\ast}
	\Big \}\\
	&=\sup_{\bs{u}\in B(\bs{0},1)\cap C}\min_{n\in[N]} \frac{(\bs{w}_n^{\mathsf{T}}(\bs{\mu}_j-\bs{\mu}_i+\epsilon \bs{u}))^+}{\tilde{\Delta}^\epsilon_i \bs{w}_n^{\mathsf{T}}\bs{u}^\ast}\leq 
	\sup_{\bs{u}\in B(\bs{0},1)\cap C}\min_{n\in[N]} \frac{(\bs{w}_n^{\mathsf{T}}(\bs{\mu}_j-\bs{\mu}_i))^+ + \epsilon \bs{w}_n^{\mathsf{T}}\bs{u}}{\tilde{\Delta}^\epsilon_i \bs{w}_n^{\mathsf{T}}\bs{u}^\ast}\\
	&\leq \sup_{\bs{u}\in B(\bs{0},1)\cap C}\min_{n\in[N]}\frac{m(i,j)\alpha_n + \epsilon \bs{w}_n^{\mathsf{T}}\bs{u}}{\tilde{\Delta}_i^\epsilon \bs{w}_n^{\mathsf{T}}\bs{u}^\ast}\leq 
	\sup_{\bs{u}\in B(\bs{0},1)\cap C}\min_{n\in [N]}\frac{\alpha_n+\bs{w}_n^{\mathsf{T}}\bs{u}}{\bs{w}_n^{\mathsf{T}}\bs{u}^\ast}\\
	& \leq \sup_{\bs{u}\in B(\bs{0},1)\cap C}\max_{n\in [N]}\frac{\alpha_n+\bs{w}_n^{\mathsf{T}}\bs{u}}{\bs{w}_n^{\mathsf{T}}\bs{u}^\ast}\leq  \max_{n\in[N]}\frac{2\alpha_n}{\bs{w}_n^{\mathsf{T}}\bs{u}^\ast}~,
\end{align*}
where the passage to the penultimate line follows from Proposition 3.2(iv). Hence, choosing $k:=1+\max_{n\in[N]}\frac{2\alpha_n}{\bs{w}_n^{\mathsf{T}}\bs{u}^\ast}$ ensures that $i$ is not covered by any $j\in P^\ast$.

Since we are able to change the returned output in each possibility, as argued in \cite{auer2016pareto}, the algorithm has to sample design $i$ at least $\Omega(\frac{1}{(\tilde{\Delta}^\epsilon_i)^2}\log(\frac{1}{\delta}))$ times in order to detect these changes, which completes the proof.

\subsection{Proof of Theorem \ref{lowerboundthm_new}}

Let $\bs{v}\in \R^D$ be such that $\bs{v}\notin C\cup (-C)$. Note that $d(\bs{v},(\bs{v}+C)\cap C)>0$. The arguments in the proof of Proposition \ref{lem:beta1lambda} show that the function $\bs{x}\mapsto d(\bs{x},(\bs{x}+C)\cap C)$ is positively homogeneous. Hence, by scaling, we may assume without loss of generality that 
\[
d(\bs{v},(\bs{v}+C)\cap C)=\epsilon~.
\]
Let $\bs{\mu}\in C\setminus\{\bs{0}\}$.
We set the mean reward vectors as
\[
\bs{\mu}_1=\bs{0},\quad \bs{\mu}_2=\bs{\mu},\quad \bs{\mu}_i = \bs{\mu}-\bs{v}\quad \forall i\in [K]\setminus[2]~.
\]
Since $\bs{v}\notin C\cup (-C)$, it follows that $P^\ast=[K]\setminus[1]$.

Let $\bs{u}^\ast\in \Int(C)$ be such that
\[
\beta_2=\frac{d(\bs{u}^\ast,(\Int(C))^c\cap(\bs{u}^\ast-C))}{d(\bs{u}^\ast,(\Int(C))^c)}~.
\]
It can be checked that, for every $a>0$, replacing $\bs{u}^\ast$ with $a\bs{u}^\ast$ does not change the value of the above ratio; hence, we assume that $d(\bs{u}^\ast,(\Int(C))^c)=1$ without loss of generality (otherwise, we may take $a=1/d(\bs{u}^\ast,(\Int(C))^c)$). By Lemma \ref{lem:beta2}, we have
\begin{equation}\label{beta2num}
	d(\bs{u}^\ast,(\Int(C))^c\cap(\bs{u}^\ast-C))=\min_{n\in [N]}\frac{\bs{w}_n^{\mathsf{T}}\bs{u}^\ast}{\alpha_n}~.
\end{equation}

For each $t\in [T]$, we assume that the noise vector $\bs{Y}_{i,t}$ is either $\frac14\bs{u}^\ast$ with probability $p_i = 1/2$ or it is $-\frac14\bs{u}^\ast$ with probability $1-p_i$, which implies that $\E[\bs{Y}_{i,t}]=\bs{0}$. Moreover, due to the structure of this distribution, the difference $\hat{\bs{\mu}}_i-\bs{\mu}_i$ between the empirical and true means of a design $i$ has a discrete distribution over a finite subset of the line segment that connects $-\frac14\bs{u}^\ast$ and $\frac14\bs{u}^\ast$, hence we may write $\hat{\bs{\mu}}_i-\bs{\mu}_i=(\frac{\hat{p}_i}{2}-\frac14) \bs{u}^\ast$, where $\hat{p}_i$ is the empirical frequency of observing $\bs{Y}_{i,t}=\frac14\bs{u}^\ast$ over all rounds $t$ at which design $i$ is sampled. 

We proceed by choosing an arbitrary design $i \in [K] \setminus [2]$. Under the original problem $(\bs{\mu}_j)_{j\in[K]}$, we have $i \in P^*$. Then, we define two modified problems $(\bs{\mu}^\prime_j)_{j\in[K]}$ and $(\bs{\mu}^{\prime\prime}_j)_{j\in[K]}$, where $\bs{\mu}_i$ is replaced by $\bs{\mu}^\prime_i$ and $\bs{\mu}^{\prime\prime}_i$ respectively, while other designs' mean reward vectors remain unchanged, i.e., $\bs{\mu}_j=\bs{\mu}_j^\prime=\bs{\mu}_j^{\prime\prime}$ for each $j\neq i$. The new mean reward vectors will be formed by only changing the parameter of noise from $p_i$ to $p'_i$ and $p''_i$ whose values will be specified later. We will carefully set the noise direction to make the Pareto set identification problem difficult. Then, we will argue that any $i \notin P$ should be in $P'$, the returned set of the modified problem $(\bs{\mu}^\prime_j)_{j\in[K]}$. We will also argue that any $i \in P$ should not be in $P''$, the returned set of the modified problem $(\bs{\mu}''_j)_{j\in[K]}$. Making an analogy with the coin bias problem as in \cite{auer2016pareto}, we will conclude that design $i$ needs to be sampled $\Omega (\frac{\beta_2^2}{\epsilon^2} \log \frac{1}{\delta})$ times in order to distinguish these cases with at least $1-\delta$ probability. Since the number of such designs is $K-2$, this yields $\Omega (\frac{K \beta_2^2}{\epsilon^2} \log \frac{1}{\delta})$ lower bound on the regret.

\emph{Case 1:} Fix some $i>2$ and suppose that $i\notin P$. Then, Remark \ref{rem:PAC} implies that there exist $j \in P$ and $\bs{u}\in B(\bs{0},1)\cap C$ such that $\bs{\mu}_j+\epsilon \bs{u}\in \bs{\mu}_i+C$. Let $J(i)\subseteq P$ be the set of all such $j$. By the definition of $M(i,j)$, we have $0<M(i,j)\leq \epsilon $ for every $j \in J(i)$. In particular, $\tilde{\Delta}^\epsilon_i=\epsilon$. 

Let us consider the modification $\bs{\mu}^\prime_i=\bs{\mu}_i+k\epsilon\bs{u}^\ast$, where $k>0$ is to be chosen such that $i$ is not covered by any $j\in P^\ast$ in the modified case. For such $k$, $i$ will automatically be in the Pareto set of the modified case, i.e., $i\in (P^\ast)^\prime$, hence we must have $i\in P^\prime$ due to Definition \ref{defn:PAC}. To determine the desired value of $k$, note that the largest value of $k$ for which $i$ is covered by $j\in P^\ast$ in the modified case is given by
\[
k_{ij}\coloneqq \sup\{k\geq 0\mid \exists \bs{u}\in B(\bs{0},1)\cap C\colon \bs{\mu}^\prime_i \in \bs{\mu}_j+\epsilon\bs{u}-C\}\leq  \max_{n\in[N]}\frac{2\alpha_n}{\bs{w}_n^{\mathsf{T}}\bs{u}^\ast}~.
\]
as computed in Case 4 of the proof of Theorem \ref{lowerboundthm}. Moreover, by \eqref{beta2num} and the definition of $\bs{u}^\ast$, we have
\begin{equation}\label{eq:reciprocal}
	\max_{n\in[N]}\frac{2\alpha_n}{\bs{w}_n^{\mathsf{T}}\bs{u}^\ast}=\frac{2}{d(\bs{u}^\ast,(\Int(C))^c\cap (\bs{u}^\ast-C))}=\frac{2}{\beta_2}\frac{1}{d(\bs{u}^\ast,(\Int(C))^c)}= \frac{2}{\beta_2}~.
\end{equation}
Hence, choosing any $k$ that satisfies
\[
k>\frac{2}{\beta_2}
\]
does the job. For such $k$, we automatically have $i\in (P^\ast)^\prime$, which implies that $i\in P^\prime$ due to Definition \ref{defn:PAC}.

\emph{Case 2:} Fix $i>2$ and suppose that $i\in P$. Let us consider the modification $\bs{\mu}^{\prime\prime}_i=\bs{\mu}_i-k\epsilon\bs{u}^\ast$, where $k>0$ is to be chosen such that $m^{\prime\prime}(i,2)\geq 2\epsilon$. Hence, for such $k$, we have $i\notin P^{\prime\prime}$ in the modified case thanks to Definition \ref{defn:PAC}.

Following a similar derivation as in Case 2 of the proof of Theorem \ref{lowerboundthm}, the minimum $k$ for which $m^{\prime\prime}(i,2)\geq 2\epsilon$ is given by
\begin{align*}
	k_i\coloneqq \inf\{k\geq 0\mid \forall \bs{u}\in B(\bs{0},1)\cap C\colon \bs{\mu}_i -k\epsilon \bs{u}^\ast +2\epsilon \bs{u}\in \bs{\mu}_2-C\}
	\leq \max_{n\in[N]}\frac{2\epsilon\alpha_n + M(i,2)\alpha_n}{\epsilon \bs{w}_n^{\mathsf{T}}\bs{u}^\ast}~.
\end{align*}
Note that $M(i,2)=d(\bs{\mu}_2-\bs{\mu}_i, (\bs{\mu}_2-\bs{\mu}_i+C)\cap C)=d(\bs{v}, (\bs{v}+C)\cap C)=\epsilon$ by construction. Hence,
\[
k_i\leq \max_{n\in[N]}\frac{2\epsilon\alpha_n + \epsilon\alpha_n}{\epsilon \bs{w}_n^{\mathsf{T}}\bs{u}^\ast}= \max_{n\in[N]}\frac{3\alpha_n}{\bs{w}_n^{\mathsf{T}}\bs{u}^\ast}=\frac{3}{\beta_2}~,
\]
where the last equality is by \eqref{eq:reciprocal}. Hence, we may simply choose $k=\frac{3}{\beta_2}$ and ensure that $i\notin P^{\prime\prime}$.

\com{Some useful identities:
	\begin{align*}
		& \sin(x)-\cos(x) = -\sqrt{2} \sin\left(\frac{\pi}{4}-x\right) \\
		& \sin(x+y)\cos(x-y) - \cos(x+y)\sin(x-y) = \sin(2y) \\
		& \sin(x-y)\sin(x+y) + \cos(x-y)\cos(x+y) = \cos(2y) \\
		& 1+\tan^2x = \sec^2 x = 1/\cos^2 x \\
		& a \sin x + b \cos x = \sqrt{a^2 + b^2} \sin(x+y), \text{ where} \tan y = b/a \\
		& \sqrt{2} \sin(x/2) \leq \sin(x) \leq 2 \sin(x/2) \text{ for } x \in [0, \pi/2] \\
		& \bs{w}_1^T \bs{x} = \frac{1}{\sqrt{2}} \left[ -\sin \left(\frac{\pi}{4} - \frac{\theta}{2} \right) + \cos \left(\frac{\pi}{4} - \frac{\theta}{2} \right) \right] 
		= \frac{1}{\sqrt{2}} \left[ \sqrt{2} \sin \left(\frac{\pi}{4} - \frac{\pi}{4} + \frac{\theta}{2} \right)    \right]
		= \sin \left( \frac{\theta}{2} \right) \\
		& \bs{w}_2^T \bs{x} = \frac{1}{\sqrt{2}} \left[ \sin \left(\frac{\pi}{4} + \frac{\theta}{2} \right) - \cos \left(\frac{\pi}{4} + \frac{\theta}{2} \right) \right] 
		= \frac{1}{\sqrt{2}} \left[ -\sqrt{2} \sin \left(\frac{\pi}{4} - \frac{\pi}{4} - \frac{\theta}{2} \right)    \right]
		=- \sin \left( -\frac{\theta}{2} \right) = \sin \left( \frac{\theta}{2} \right) \\
		& \bs{w}_1^T \bs{u}_l = 0 \\
		& \bs{w}_2^T \bs{u}_l = \sin \left(\frac{\pi}{4} + \frac{\theta}{2} \right) \cos \left(\frac{\pi}{4} - \frac{\theta}{2} \right)     -  \cos \left(\frac{\pi}{4} + \frac{\theta}{2} \right) \sin \left(\frac{\pi}{4} - \frac{\theta}{2} \right)
		= \sin(\theta) \\
		& \bs{w}_1^T \bs{u}_u = \bs{w}_2^T \bs{u}_l = \sin(\theta) \\
		& \bs{w}_2^T \bs{u}_u = 0 \\
		& \bs{w}_1^T \bs{w}_2 = 
		- \left[  \sin \left(\frac{\pi}{4} - \frac{\theta}{2} \right) \sin \left(\frac{\pi}{4} + \frac{\theta}{2} \right) +  \cos \left(\frac{\pi}{4} - \frac{\theta}{2} \right) \cos \left(\frac{\pi}{4} + \frac{\theta}{2} \right) \right] = -\cos(\theta)
	\end{align*}
}

						\subsection{Proof of Theorem \ref{thm:samplecomp}}
						
						Given $i,j\in[K]$, let $\hat{m}(i,j) := d(\hat{\bs{\mu}}_j - \hat{\bs{\mu}}_i, (\Int(C))^c\cap (\hat{\bs{\mu}}_j-\hat{\bs{\mu}}_i-C))$ and $\hat{M}(i,j) := d(\hat{\bs{\mu}}_j - \hat{\bs{\mu}}_i, C\cap (\hat{\bs{\mu}}_j-\hat{\bs{\mu}}_i+C))$ represent the empirical estimates of $m(i,j)$ and $M(i,j)$, respectively. In the next lemma, we prove that the following conditions in terms of the gaps are sufficient for $(\epsilon,\delta)$-PAC Pareto set identification.
						
						\textbf{Condition 1.} \emph{For every $i \in P^\ast$ and $j\in [K] \setminus \{ i \}$, $M(i,j)> \epsilon$ implies that $\hat{M}(i,j)>0$.}\\
						\textbf{Condition 2.} \emph{For every $i \notin P^\ast$ and $j \in P^\ast$, $m(i,j) > \epsilon$ implies that $\hat{m}(i,j) >0$.}
						
						\begin{lemma}\label{lem:sufficient}
							If Conditions 1 and 2 hold, then $P$ is an $(\epsilon,\delta)$-PAC Pareto set.
						\end{lemma}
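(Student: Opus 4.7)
The plan is to show that on the event where Conditions 1 and 2 hold, the random set $P$ satisfies criteria (i) and (ii) of Definition~\ref{defn:PAC}; the probability statement in the PAC definition then transfers to this event. I would treat the two criteria separately, using the equivalent form of (i) supplied by Remark~\ref{rem:PAC}: for every $i\in P^\ast$ one must produce $j\in P$ and $\bs{u}\in B(\bs{0},1)\cap C$ with $\bs{\mu}_i\preceq_C \bs{\mu}_j+\epsilon\bs{u}$.

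Criterion (ii) is the simpler of the two and I would tackle it first by contradiction. Fix $i\in P\setminus P^\ast$ and suppose $\Delta^\ast_i>\epsilon$, so some $j\in P^\ast$ satisfies $m(i,j)>\epsilon$. Condition~2 yields $\hat m(i,j)>0$, and the empirical analogue of Proposition~\ref{prop:m}(iii) gives $\hat{\bs{\mu}}_j-\hat{\bs{\mu}}_i\in\Int(C)$. Pointedness of $C$ forces $\bs{0}\notin\Int(C)$, hence $\hat{\bs{\mu}}_j-\hat{\bs{\mu}}_i\in C\setminus\{\bs{0}\}$, i.e.\ $i\mathbin{\widehat{\preceq}}_{C\setminus\{\bs{0}\}} j$, contradicting $i\in P$.

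For criterion (i), fix $i\in P^\ast$. If $i\in P$, choose $j=i$ and $\bs{u}=\bs{0}$. Otherwise, the definition of $P$ supplies some $j$ with $i\mathbin{\widehat{\preceq}}_{C\setminus\{\bs{0}\}} j$; I would then iteratively replace $j$ by a design that empirically dominates it. The key observation is that, because $C$ is a pointed closed convex cone, $(C\setminus\{\bs{0}\})+(C\setminus\{\bs{0}\})\subseteq C\setminus\{\bs{0}\}$, so $\mathbin{\widehat{\preceq}}_{C\setminus\{\bs{0}\}}$ is transitive and acyclic on $[K]$; finiteness of $[K]$ then guarantees the procedure terminates at some $j^\ast\in P$ with $i\mathbin{\widehat{\preceq}}_{C\setminus\{\bs{0}\}} j^\ast$. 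The empirical analogue of Proposition~\ref{prop:M}(iii) gives $\hat M(i,j^\ast)=0$, so Condition~1 (contrapositive) yields $M(i,j^\ast)\leq \epsilon$. To extract the covering witness I would argue the infimum in \eqref{eq:M} is attained: $B(\bs{0},1)\cap C$ is compact and $\bs{\mu}_i+C$ is closed, so a limit-point argument produces $\bs{u}_0\in B(\bs{0},1)\cap C$ with $\bs{\mu}_{j^\ast}+M(i,j^\ast)\bs{u}_0\in \bs{\mu}_i+C$. Setting $\bs{u}:=(M(i,j^\ast)/\epsilon)\bs{u}_0\in B(\bs{0},1)\cap C$ then gives $\bs{\mu}_{j^\ast}+\epsilon\bs{u}\in\bs{\mu}_i+C$, i.e.\ $\bs{\mu}_i\preceq_C \bs{\mu}_{j^\ast}+\epsilon\bs{u}$, as required.

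The main obstacle is the finite reduction to an empirically undominated $j^\ast\in P$ in part (i); this is where the geometry of $C$ enters in a non-trivial way, as one must leverage pointedness to rule out $\bs{0}$ appearing as a nonzero sum of cone elements and thereby ensure that strictness is preserved through transitive chains. Once this is in hand, the attainment of the infimum in $M(i,j^\ast)$ is routine but should still be spelled out to justify the exact rescaling by $\epsilon$.
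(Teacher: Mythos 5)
Your proof is correct and establishes the same two implications as the paper (Condition 1 $\Rightarrow$ success condition (i), Condition 2 $\Rightarrow$ success condition (ii)), but it runs them in the direct direction where the paper argues by contradiction, and it makes explicit two steps the paper compresses. For (i), the paper assumes the covering fails for some $i\in P^\ast$, shows $\bs{\Delta}_{ij}\notin(B(\bs{0},\epsilon)\cap(-C))+C$ for all $j\in P$, and uses closedness of this Minkowski sum together with $C\cap(\bs{\Delta}_{ij}+C)\subseteq C\subseteq (B(\bs{0},\epsilon)\cap(-C))+C$ to trigger Condition 1, obtaining $i\mathbin{\widehat{\npreceq}}_C j$ for all $j\in P$ and hence $i\in P$, with the contradiction coming from $j=i$. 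You instead produce, for each $i\in P^\ast\setminus P$, an empirically undominated $j^\ast\in P$ with $i\mathbin{\widehat{\preceq}}_C j^\ast$ via a finite domination chain --- this is precisely the step hidden in the paper's ``hence we must have $i\in P$'', and your justification (pointedness gives $(C\setminus\{\bs{0}\})+(C\setminus\{\bs{0}\})\subseteq C\setminus\{\bs{0}\}$, hence transitivity and acyclicity) is what makes it rigorous --- then read off $M(i,j^\ast)\le\epsilon$ from the contrapositive of Condition 1 and convert this into the covering witness by attainment of the infimum in \eqref{eq:M} (compactness of $B(\bs{0},1)\cap C$ plus closedness of $\bs{\mu}_i+C$) and a rescaling by $M(i,j^\ast)/\epsilon$. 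Your attainment argument and the paper's Minkowski-sum computation are two proofs of the same equivalence between $M(i,j)\le\epsilon$ and $\bs{\mu}_i\in\bs{\mu}_j+(B(\bs{0},\epsilon)\cap C)-C$; yours is more constructive and has the advantage of making transparent why the hypothesis $M(i,j)>\epsilon$ of Condition 1 (rather than merely $M(i,j)>0$, which is all the paper's displayed distance chain literally delivers) is actually available. Your treatment of (ii) coincides with the paper's up to contraposition, using Proposition \ref{prop:m}(iii) for the empirical means and $\bs{0}\notin\Int(C)$ exactly as needed. I see no gaps.
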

						\begin{proof}
							Assume that Condition 1 holds. To get a contradiction, suppose that property (i) in Definition \ref{defn:PAC} does not hold. By Remark \ref{rem:PAC}, this implies that there exists $i\in P^\ast$ such that $\bs{\mu}_i\notin \bs{\mu}_j + (B(\bs{0},\epsilon)\cap C) - C$ for each $j\in P$. Hence, $\bs{\mu}_j-\bs{\mu}_i\notin (B(\bs{0},\epsilon)\cap(-C))+C$ for each $j\in P$. 
							Next, we will show that this implies $M(i,j)=d(\bs{\mu}_j-\bs{\mu}_i,C\cap(\bs{\mu}_j-\bs{\mu}_i+C))>\epsilon$ for each $j\in P$. 
							
							The result follows from a more general argument. Consider $\bs{x} \in \mathbb{R}^D$. Let $\bs{x}^* \in C \cap (\bs{x} + C)$ be such that $d(\bs{x} , C \cap (\bs{x} + C)) = d(\bs{x}, \bs{x}^*)$. Note that $\bs{x}^* \in C \cap (\bs{x} + C)$ implies the existence of unit vectors $\bs{u}_1, \bs{u}_2 \in C$ and scalars $c_1, c_2 \in \mathbb{R}_{+}$ such that $\bs{x}^* = \bs{x} + c_1 \bs{u}_1 = c_2 \bs{u}_2$. Hence, we have $\bs{x} = c_2 \bs{u_2} - c_1 \bs{u}_1$. We claim that $d(\bs{x}, \bs{x}^*) \leq \epsilon$ implies that $\bs{x} \in ( B(\bs{0}, \epsilon) \cap (-C) ) + C$. Now assume that $d(\bs{x}, \bs{x}^*) \leq \epsilon$. Note that $d(\bs{x}, \bs{x}^*) = c_1 \leq \epsilon$. It immediately follows that $\bs{x} = c_2 \bs{u_2} - c_1 \bs{u}_1 \in ( B(\bs{0}, \epsilon) \cap (-C) ) + C$, since $- c_1 \bs{u}_1 \in  ( B(\bs{0}, \epsilon) \cap (-C) )$ and $c_2 \bs{u}_2 \in C$. This proves our claim. Therefore, $\bs{x} \notin ( B(\bs{0}, \epsilon) \cap (-C) ) + C$ implies that $d(\bs{x} , C \cap (\bs{x} + C)) = d(\bs{x}, \bs{x}^*) > \epsilon$. Plugging $\bs{x} = \bs{\mu}_j-\bs{\mu}_i$ to the argument above gives the desired result.
							
							
							Then, Condition 1 implies $\hat{M}(i,j)>0$ for each $j\in P$. By Corollary \ref{cor:mM}(iii) applied to the random partial orders $\widehat{\prec}_C, \widehat{\preceq}_C$, we get $i\widehat{\npreceq}_C j$ for each $j\in P$. Hence, we must have $i\in P$. In particular, taking $j=i$ gives $\bs{\mu}_i\notin \bs{\mu}_i+(B(\bs{0},\epsilon)\cap C)-C$, a contradiction. Hence, property (i) holds.
							
							Assume that Condition 2 holds. Let $i\in P\setminus P^\ast$. Then, there is no $j\in [K]\setminus \{i\}$ such that $i\widehat{\preceq}_C j$. By Corollary \ref{cor:mM}(iii) applied to the random orders $\widehat{\prec}_C, \widehat{\preceq}_C$, we have $\hat{m}(i,j)=0$ for every $j\in [K]\setminus\{i\}$, hence, for every $j\in P^\ast$. Then, Condition 2 implies that $m(i,j)\leq \epsilon$ for every $j\in P^\ast$. Hence, $\Delta^\ast_i\leq \epsilon$, i.e., property (ii) in Definition \ref{defn:PAC} holds.
							%
							%
						\end{proof}
						
						For each $i,j\in [K]$ with $i\neq j$, let us define $\bs{\Delta}_{ij} = \bs{\mu}_j - \bs{\mu}_i$, $\hat{\bs{\Delta}}_{ij} = \hat{\bs{\mu}}_j - \hat{\bs{\mu}}_i$. The next lemma shows that the following conditions in terms of the deviation of $\hat{\bs{\Delta}}_{ij}$ from $\bs{\Delta}_{ij}$ are sufficient for $(\epsilon,\delta)$-PAC Pareto set identification.
						
						\textbf{Condition a.} \emph{For every $i\in P^\ast$ and $j\in [K]\setminus\{i\}$, $d(\bs{\Delta}_{ij},C\cap (\bs{\Delta}_{ij}+C))>\epsilon$ implies that $\|\hat{\bs{\Delta}}_{ij}-\bs{\Delta}_{ij}\|_2< d(\bs{\Delta}_{ij},C)$.}\\
						\textbf{Condition b.} \emph{For every $i\notin P^\ast$ and $j\in P^\ast$, $d(\bs{\Delta}_{ij},(\Int(C))^c\cap (\bs{\Delta}_{ij}-C))>\epsilon$ implies that $\|\hat{\bs{\Delta}}_{ij}-\bs{\Delta}_{ij}\|_2< d(\bs{\Delta}_{ij},(\Int(C))^c)$.}
						
						\begin{lemma}\label{lem:identification}
							If Conditions a and b hold, then $P$ satisfies the success condition in Definition \ref{defn:PAC}. 
						\end{lemma}
						
						\begin{proof}
							Assume that Condition a holds. We verify Condition 1. Let $i\in P^\ast$ and $j\in [K]\setminus\{i\}$ such that $M(i,j)>\epsilon$, that is, $d(\bs{\Delta}_{ij},C\cap(\bs{\Delta}_{ij}+C))>\epsilon$ by Proposition \ref{prop:M}(ii). Then, Condition a implies that $\|\hat{\bs{\Delta}}_{ij}-\bs{\Delta}_{ij}\|_2< d(\bs{\Delta}_{ij},C)$. Since $d(\bs{\Delta}_{ij},C\cap (\bs{\Delta}_{ij}+C))>0$ and $C\cap (\bs{\Delta}_{ij}+C)$ is a closed set, we have $\bs{\Delta}_{ij}\notin C\cap (\bs{\Delta}_{ij}+C)$. Hence, $\bs{\Delta}_{ij}\notin C$. We claim that $\hat{\bs{\Delta}}_{ij}\notin C$. To get a contradiction, suppose that $\hat{\bs{\Delta}}_{ij}\in C$. Then, we have $\|\hat{\bs{\Delta}}_{ij}-\bs{\Delta}_{ij}\|_2\geq \inf_{\bs{c}\in C}\|\bs{\Delta}_{ij}-\bs{c}\|_2= d(\bs{\Delta}_{ij},C)$, which is a contradiction. Hence, the claim holds and we have $\hat{\bs{\Delta}}_{ij}\notin C\cap (\hat{\bs{\Delta}}_{ij}+C)$. By Proposition \ref{prop:M}(ii), we obtain $\hat{M}(i,j)=d(\hat{\bs{\Delta}}_{ij},C\cap(\hat{\bs{\Delta}}_{ij}+C))>0$. Hence, Condition 1 holds.
							
							
							Assume that Condition b holds. We verify Condition 2. Let $i\notin P^\ast$ and  $j\in P^\ast$ such that $m(i,j)>\epsilon$, that is, $d(\bs{\Delta}_{ij}, (\Int(C))^c\cap(\bs{\Delta}_{ij}-C))>\epsilon$ by Proposition \ref{prop:m}(ii). Then, Condition b implies that $\|\hat{\bs{\Delta}}_{ij}-\bs{\Delta}_{ij}\|_2< d(\bs{\Delta}_{ij},(\Int(C))^c)$. Moreover, since $(\Int(C))^c\cap(\bs{\Delta}_{ij}-C)$ is a closed set, we have $\bs{\Delta}_{ij}\notin (\Int(C))^c\cap(\bs{\Delta}_{ij}-C)$ and hence $\bs{\Delta}_{ij}\in \Int(C)$. We claim that $\hat{\bs{\Delta}}_{ij}\in \Int(C)$. Supposing otherwise that $\hat{\bs{\Delta}}_{ij}\in (\Int(C))^c$, we get $\|\hat{\bs{\Delta}}_{ij}-\bs{\Delta}_{ij}\|_2\geq \inf_{\bs{c}\in (\Int(C))^c}\| \bs{\Delta}_{ij}-\bs{c}\|_2=d(\bs{\Delta}_{ij},(\Int(C))^c)$, which is a contradiction. Hence, the claim holds and we have $\hat{\bs{\Delta}}_{ij}\notin (\Int(C))^c\cap(\hat{\bs{\Delta}}_{ij}-C)$. By Proposition \ref{prop:m}(ii), we get $\hat{m}(i,j)=d(\hat{\bs{\Delta}}_{ij},C\cap (\hat{\bs{\Delta}}_{ij}+C))>0$. Hence, Condition 2 holds.
						\end{proof}
						
						
						Let $i,j\in [K]$ with $i\neq j$. The next lemma explains how $\hat{\bs{\Delta}}_{ij}$ concentrates around $\bs{\Delta}_{ij}$ in $\ell_2$ norm as a function of the exploration parameter $L$. Let us introduce the constant
						\[
						\theta_{ij}:=\begin{cases} \frac{d(\bs{\Delta}_{ij},C)}{d(\bs{\Delta}_{ij},C\cap(\bs{\Delta}_{ij}+C))} & \text{if }\bs{\Delta}_{ij}\notin C~,\\  \frac{d(\bs{\Delta}_{ij},(\Int(C))^c)}{d(\bs{\Delta}_{ij},(\Int(C))^c \cap(\bs{\Delta}_{ij}-C))} & \text{if }\bs{\Delta}_{ij}\in \Int(C)~,\\
							1 & \text{if } \bs{\Delta}_{ij}\in\bd(C)~.\end{cases}
						\]
						Moreover, for a given $(\epsilon,\delta)$, we define the per-design sampling budget as
						\begin{align*}
							g(\epsilon,\delta) :=  \Big\lceil \frac{4 \beta^2 c^2 \sigma^2}{\epsilon^2} \log \Big(\frac{4 D}{\delta}\Big) \Big\rceil~.
						\end{align*}

						\begin{lemma}\label{lem:deltaconcentrate}
							Suppose that $L= g(\epsilon,\delta)$ noisy observations are evaluated for each design, where $c>0$ is a constant. Then, there exists a choice of $c>0$ (free of all problem parameters) such that, for each $i,j\in[K]$ with $i\neq j$,  we have $||   \hat{\bs{\Delta}}_{ij} - \bs{\Delta}_{ij}   ||_2 \leq \epsilon \theta_{ij}$ with probability at least $1-\delta$. 
						\end{lemma}
						
						\begin{proof}
							By \citealt[Corollary 7]{jin2019short}, there exists an absolute constant $c>0$ such that, with probability at least $1-\delta/2$, we have
							\begin{align*}
								&L  \| \hat{\bs{\mu}}_i -  \bs{\mu}_i \|_2 
								= L  \Big|\Big| \frac{ \sum_{t=1}^{LK} \bs{X}_t \mathbb{I}(I_t = i)}{L}-  \bs{\mu}_i  \Big|\Big|_2  \\
								& = \Big|\Big|  \sum_{t=1}^{K L}  ( \bs{X}_t -  \bs{\mu}_i )\mathbb{I}(I_t = i) \Big|\Big|_2  
								\leq c  \sqrt{ L \sigma^2 \log \Big(\frac{4 D}{\delta}\Big) } ~,
							\end{align*}
							that is, $|| \hat{\bs{\mu}}_i -  \bs{\mu}_i ||_2 \leq c \sqrt{\frac{\sigma^2}{L} \log (\frac{4 D}{\delta})}$. Setting $L= \lceil (4 \beta^2 c^2 \sigma^2 / \epsilon^2) \log (4 D / \delta) \rceil$ ensures that $|| \hat{\bs{\mu}}_i -  \bs{\mu}_i ||_2 \leq \epsilon/(2\beta)$ with probability at least $1-\delta/2$.
							Applying a union bound, we obtain
							$\Pr\{ || \hat{\bs{\mu}}_i -  \bs{\mu}_i ||_2 \leq \frac{\epsilon}{2\beta} , || \hat{\bs{\mu}}_j -  \bs{\mu}_j ||_2 \leq \frac{\epsilon}{2\beta} \}\geq 1-\delta$.
							Noting that $|| \hat{\bs{\Delta}}_{ij} - \bs{\Delta}_{ij}   ||_2 = ||  \hat{\bs{\mu}}_j-  \bs{\mu}_j + \bs{\mu}_i - \hat{\bs{\mu}}_i ||_2 \leq || \hat{\bs{\mu}}_i -  \bs{\mu}_i ||_2 + || \hat{\bs{\mu}}_j -  \bs{\mu}_j ||_2$, we have $\Pr\{\|\hat{\bs{\Delta}}_{ij} - \bs{\Delta}_{ij} \|_2\leq \epsilon/\beta \}\geq 1-\delta$. If $\bs{\Delta}_{ij}\notin C$, then
							\begin{align*}
								\frac{\epsilon}{\beta}\leq \frac{\epsilon}{\beta_1}\leq \frac{\epsilon d(\bs{\Delta}_{ij},C)}{d(\bs{\Delta}_{ij},C\cap (\bs{\Delta}_{ij}+C))}=\epsilon\theta_{ij}~;
							\end{align*}
							if $\bs{\Delta}_{ij}\in \Int(C)$, then
							\begin{align*}
								\frac{\epsilon}{\beta}\leq \frac{\epsilon}{\beta_2}\leq \frac{\epsilon d(\bs{\Delta}_{ij},(\Int(C))^c)}{d(\bs{\Delta}_{ij},(\Int(C))^c\cap (\bs{\Delta}_{ij}-C))}~;
							\end{align*}
							and if $\bs{\Delta}_{ij}\in \bd(C)$, then $\epsilon/\beta\leq \epsilon =\epsilon\theta_{ij}$; see \eqref{eq:Csup2} for the definitions of $\beta_1, \beta_2$. Hence, the result follows.
						\end{proof}
						
						After defining the lemmas that will be used in the proof, we complete the proof of Theorem \ref{thm:samplecomp} by the argument below. 
						
						With the given choice of $L$, by Lemma \ref{lem:deltaconcentrate}, for every $i,j \in [K]$ with $i \neq j$, we have $||   \hat{\bs{\Delta}}_{ij} - \bs{\Delta}_{ij}   ||_2 \leq \epsilon\theta_{ij}$ with probability at least $1 - 2\delta/(K (K-1))$. The application of union bound shows that with probability at least $1-\delta$, $|| \hat{\bs{\Delta}}_{ij} - \bs{\Delta}_{ij} ||_2 \leq \epsilon\theta_{ij}$ simultaneously for all $i,j \in [K]$ such that $i \neq j$. Under this event, we verify Conditions a and b. Let $i\in P^\ast$ and $j\in [K]\setminus\{i\}$ such that $d(\bs{\Delta}_{ij},C\cap(\bs{\Delta}_{ij}+C))>\epsilon$. In particular, $\bs{\Delta}_{ij}\notin C$. Hence, we have
						\[
							\| \hat{\bs{\Delta}}_{ij} - \bs{\Delta}_{ij} \|_2 \leq \epsilon\theta_{ij}=\frac{\epsilon d(\bs{\Delta}_{ij},C)}{d(\bs{\Delta}_{ij},C\cap(\bs{\Delta}_{ij}+C))}\\
							<d(\bs{\Delta}_{ij},C)~.
						\]
						This shows that Condition a holds. Let $i\notin P^\ast$ and $j\in P^\ast$ such that $d(\bs{\Delta}_{ij},(\Int(C))^c\cap (\bs{\Delta}_{ij}-C))>\epsilon$. In particular, $\bs{\Delta}_{ij}\in \Int(C)$. Hence, we have
						\[
							\| \hat{\bs{\Delta}}_{ij} - \bs{\Delta}_{ij} \|_2 \leq \epsilon\theta_{ij}= \frac{\epsilon d(\bs{\Delta}_{ij},(\Int(C))^c)}{d(\bs{\Delta}_{ij}, (\Int(C))^c\cap(\bs{\Delta}_{ij}-C))}\\
							<d(\bs{\Delta}_{ij},(\Int(C))^c)~.
						\]
						This shows that Condition b holds. Therefore, by Lemma \ref{lem:identification}, $P$ is an $(\epsilon,\delta)$-PAC Pareto set.
						
						When $D=1$, the sample complexity of na\"{i}ve elimination will match the one given in \citet{even2006action}. In order to match, the analysis in Lemma \ref{lem:sufficient} and Theorem \ref{thm:samplecomp} needs to be updated. In particular, we no longer need concentration of $\hat{\Delta}_{ij}$s. Showing concentration of $\hat{\mu}_j$s will suffice.
						
						
						\section{SUPPLEMENTAL NUMERICAL RESULTS}
						
						The following table contains additional numerical results about the experiments performed in Section \ref{sec:experiment}. Code for the paper is included as non-textual supplementary material (ZIP file). 
						
						\begin{table}[h!]
							\centering
							\caption{Additional results for experiments conducted in Section \ref{sec:experiment}. $\theta$: cone angle in degrees. SR$_{\theta}$: success rate (in \%) for $C_{\theta}$. NF1$_{\theta}$: average number of designs in $P^*_{\theta}$ that fail success condition (i) in Definition \ref{defn:PAC}. NF2$_{\theta}$: average number of designs in $P \setminus P^*_{\theta}$ that fail success condition (ii) in Definition \ref{defn:PAC}. PM$_{\theta}$: $(|P^*_{\theta} \setminus P|/|P^*_{\theta}|) \times 100$.}
							\label{tbl:ressults_nomultirow}
							\setlength{\tabcolsep}{2pt}
							\begin{tabular}{llrrrrrrrrrrrr}
								\toprule
								$L$ & $\epsilon$  &  SR$_{45}$ &  SR$_{90}$ &  SR$_{135}$ &  NF1$_{45}$  &  NF1$_{90}$  &  NF1$_{135}$  &  NF2$_{45}$  &  NF2$_{90}$  &  NF2$_{135}$  &  PM$_{45}$  &  PM$_{90}$  &  PM$_{135}$  \\
								\midrule
								$10^2$    & $10^{-3}$  &   0 &   0 &    0 &   12.44 &    7.61 &     1.83 &    9.26 &    7.07 &     2.52 &   24.8 &   29.3 &  18.3 \\
								& $10^{-2}$  &   0 &   0 &    0 &   11.52 &    6.87 &     1.83 &    7.44 &    6.08 &     2.52 &   24.8 &   29.3 &    18.3 \\
								& $10^{-1}$ &   1 &   7 &   27 &    4.22 &    1.21 &     0.63 &    1.27 &    0.98 &     0.83 &   24.8 &   29.3 &    18.3 \\
								$10^3$   & $10^{-3}$  &   0 &   0 &   29 &    4.45 &    3.33 &     0.32 &    5.78 &    3.72 &     0.89 &   9.5 &   12.8 &  3.2 \\
								& $10^{-2}$ &   0 &   0 &   29 &    3.68 &    2.73 &     0.32 &    3.57 &    2.40 &     0.89 &   9.5 &   12.8 &    3.2  \\
								& $10^{-1}$  &  78 &  99 &  100 &    0.18 &    0.00 &     0.00 &    0.06 &    0.01 &     0.00 &   9.5 &   12.8 &    3.2 \\
								$10^4$ & $10^{-3}$  &   1 &   3 &   85 &    1.33 &    1.06 &     0.00 &    2.54 &    1.88 &     0.15 &   3.4 &   4.1 &    0 \\
								& $10^{-2}$ &  22 &  24 &   85 &    0.63 &    0.77 &     0.00 &    0.87 &    0.57 &     0.15 &   3.4 &   4.1 &    0 \\
								& $10^{-1}$  & 100 & 100 &  100 &    0.00 &    0.00 &     0.00 &    0.00 &    0.00 &     0.00 &   3.4 &   4.1 &    0 \\
								$10^5$ & $10^{-3}$  &  17 &  55 &  100 &    0.24 &    0.06 &     0.00 &    1.10 &    0.44 &     0.00 &   1.2 &   0.2 &    0 \\
								& $10^{-2}$ & 100 &  99 &  100 &    0.00 &    0.00 &     0.00 &    0.00 &    0.01 &     0.00 &   1.2 &   0.2 &    0 \\
								& $10^{-1}$  & 100 & 100 &  100 &    0.00 &    0.00 &     0.00 &    0.00 &    0.00 &     0.00 &   1.2 &   0.2 &    0 \\
								\bottomrule
							\end{tabular}
						\end{table}

						\bibliographystyle{abbrvnat}

					\end{document}